\documentclass{article}

\PassOptionsToPackage{semicolon}{natbib}


\usepackage[preprint]{neurips_2020}



\usepackage[utf8]{inputenc} 
\usepackage[T1]{fontenc}    
\usepackage{hyperref}       
\usepackage{url}            
\usepackage{booktabs}       
\usepackage{amsfonts}       
\usepackage{nicefrac}       
\usepackage{microtype}      
\usepackage[ruled,lined,boxed]{algorithm2e}

\usepackage{graphicx}
\usepackage{caption}
\usepackage{float}
\usepackage{subcaption}

\usepackage{amsmath,amsthm,amssymb}
\usepackage{xifthen}
\usepackage{multirow}
\usepackage{comment}
\usepackage{wrapfig}

\allowdisplaybreaks

\makeatletter
\newcommand\footnoteref[1]{\protected@xdef\@thefnmark{\ref{#1}}\@footnotemark}
\makeatother

\usepackage{array}
\newcolumntype{L}[1]{>{\raggedright\let\newline\\\arraybackslash\hspace{0pt}}m{#1}}
\newcolumntype{C}[1]{>{\centering\let\newline\\\arraybackslash\hspace{0pt}}m{#1}}
\newcolumntype{R}[1]{>{\raggedleft\let\newline\\\arraybackslash\hspace{0pt}}m{#1}}


\mathchardef\mhyphen="2D

\newcommand{\Pb}{\mathbf{P}}
\newcommand{\Eb}{\mathbf{E}}

\def\tRhelp#1#2\relax{L_{\csname dom#1\endcsname#2}} 

\def\eRhelp#1#2\relax{\hat{L}_{\csname set#1\endcsname#2}}
\newcommand{\loss}[1]{l\ifthenelse{\isempty{#1}{}}{}{\left(#1\right)}}


\DeclareMathOperator*{\argmax}{arg\,max}
\DeclareMathOperator*{\expect}{\mathbb{E}}

\DeclareMathOperator*{\var}{Var}

\newcommand{\mydefv}[1]{\expandafter\newcommand\csname v#1\endcsname{\mathbf{#1}}}
\newcommand{\mydefallv}[1]{\ifx#1\mydefallv\else\mydefv{#1}\expandafter\mydefallv\fi}
\mydefallv abekmsuvwxyz\mydefallv 

\newcommand{\mydefvsym}[1]{\expandafter\newcommand\csname v#1\endcsname{\boldsymbol{\csname #1\endcsname}}}
\newcommand{\mydefallvsym}[1]{\ifx#1\mydefallvsym\else\mydefvsym{#1}\expandafter\mydefallvsym\fi}
\mydefallvsym {sigma}{alpha}{gamma}{mu}\mydefallvsym 

\newcommand{\mydefm}[1]{\expandafter\newcommand\csname m#1\endcsname{\mathbf{#1}}}
\newcommand{\mydefallm}[1]{\ifx#1\mydefallm\else\mydefm{#1}\expandafter\mydefallm\fi}
\mydefallm ABCIKLMOSTVWXZ\mydefallm 

\newcommand{\mydefmsym}[1]{\expandafter\newcommand\csname m#1\endcsname{\boldsymbol{\csname #1\endcsname}}}
\newcommand{\mydefallmsym}[1]{\ifx#1\mydefallmsym\else\mydefmsym{#1}\expandafter\mydefallmsym\fi}
\mydefallmsym {Sigma}{Gamma}{Phi}{gamma}\mydefallmsym 

\newcommand{\mydefalg}[1]{\expandafter\newcommand\csname alg#1\endcsname{\mathcal{#1}}}
\newcommand{\mydefallalg}[1]{\ifx#1\mydefallalg\else\mydefalg{#1}\expandafter\mydefallalg\fi}
\mydefallalg A\mydefallalg 

\newcommand{\mydefdom}[1]{\expandafter\newcommand\csname dom#1\endcsname{\mathcal{#1}}}
\newcommand{\mydefalldom}[1]{\ifx#1\mydefalldom\else\mydefdom{#1}\expandafter\mydefalldom\fi}
\mydefalldom HSTVX\mydefalldom 

\newcommand{\mydefset}[1]{\expandafter\newcommand\csname set#1\endcsname{\mathcal{#1}}}
\newcommand{\mydefallset}[1]{\ifx#1\mydefallset\else\mydefset{#1}\expandafter\mydefallset\fi}
\mydefallset BCGHPQSTVX\mydefallset 

\newcommand{\mydefdistr}[1]{\expandafter\newcommand\csname distr#1\endcsname{\mathcal{D}_{\csname dom#1\endcsname}}}
\newcommand{\mydefalldistr}[1]{\ifx#1\mydefalldistr\else\mydefdistr{#1}\expandafter\mydefalldistr\fi}
\mydefalldistr DHSTVX\mydefalldistr 

\newcommand{\mydefspace}[1]{\expandafter\newcommand\csname space#1\endcsname{\mathcal{#1}}}
\newcommand{\mydefallspace}[1]{\ifx#1\mydefallspace\else\mydefspace{#1}\expandafter\mydefallspace\fi}
\mydefallspace DFGHKLMPRUVXYZ\mydefallspace 

\newcommand{\mydeff}[1]{\expandafter\newcommand\csname f#1\endcsname[2][]{#1##1\ifthenelse{\equal{##2}{}}{}{\!\left(##2\right)}}}
\newcommand{\mydefallf}[1]{\ifx#1\mydefallf\else\mydeff{#1}\expandafter\mydefallf\fi}
\mydefallf cdfghkwCFGHMRT{PV}\mydefallf 

\newcommand{\mydeffsym}[1]{\expandafter\newcommand\csname f#1\endcsname[2][]{\csname #1\endcsname##1\ifthenelse{\equal{##2}{}}{}{\!\left(##2\right)}}}
\newcommand{\mydefallfsym}[1]{\ifx#1\mydefallfsym\else\mydeffsym{#1}\expandafter\mydefallfsym\fi}
\mydefallfsym {phi}{epsilon}{eta}\mydefallfsym 

\newcommand{\mydefnset}[1]{\expandafter\newcommand\csname nset#1\endcsname{\mathbb{#1}}}
\newcommand{\mydefallnset}[1]{\ifx#1\mydefallnset\else\mydefnset{#1}\expandafter\mydefallnset\fi}
\mydefallnset CNRSZ\mydefallnset 

\newcommand{\normTwo}[1]{\left\|#1\right\|_2}

\newcommand{\norm}[1]{\left\|#1\right\|}

\newcommand{\bigO}[1]{\mathcal{O}\left( #1 \right)}
\newcommand{\bigOmega}[1]{\Omega\left( #1 \right)}


\newtheorem{myth}{Theorem}
\newtheorem*{myth*}{Theorem}
\newtheorem{mylem}{Lemma}
\newtheorem*{mylem*}{Lemma}

\newtheorem{myprop}{Proposition}

\theoremstyle{definition}
\newtheorem{myrem}{Remark}
\newtheorem*{myrem*}{Remark}

\newtheorem*{myex*}{Example}

\newcommand{\uniform}{\text{Unif}}
\newcommand{\betadist}{\text{Beta}}

\newcommand{\ind}[1]{\mathbb{I}_{\left\{ #1 \right\}}}
\newcommand{\tr}[1]{\textup{trace}\left(#1\right)}
\newcommand{\1}{\mathbf{1}}
\newcommand{\St}{\widetilde{S}}
\newcommand{\jj}{{\setC_j\setC_j}}
\newcommand{\jl}{{\setC_j\setC_\ell}}
\renewcommand{\j}{{\setC_j}}
\renewcommand{\l}{{\setC_\ell}}
\newcommand{\nj}{{n_j}}
\renewcommand{\nl}{{n_\ell}}
\newcommand{\setW}{\mathcal{W}}
\renewcommand{\var}[1]{\textup{Var}\left(#1\right)}

\title{Near-Optimal Comparison Based Clustering} 

%

\author{%
  Micha{\"e}l Perrot\\
  Univ Lyon, UJM-Saint-Etienne, CNRS, IOGS,\\
  LabHC UMR 5516, F-42023, SAINT-ETIENNE, France\\
  \texttt{michael.perrot@univ-st-etienne.fr} \\
  \And
  Pascal Mattia Esser, \hskip3ex Debarghya Ghoshdastidar \\
  Department of Informatics \\
  Technical University of Munich \\
  \texttt{\{esser,ghoshdas\}@in.tum.de} \\
}

\begin{document}

\maketitle

\begin{abstract}
The goal of clustering is to group similar objects into meaningful partitions. This process is well understood when an explicit similarity measure between the objects is given. However, far less is known when this information is not readily available and, instead, one only observes ordinal comparisons such as \emph{``object i is more similar to j than to k.''} In this paper, we tackle this problem using a two-step procedure: we estimate a pairwise similarity matrix from the comparisons before using a clustering method based on semi-definite programming (SDP). We theoretically show that our approach can exactly recover a planted clustering using a near-optimal number of passive comparisons. We empirically validate our theoretical findings and demonstrate the good behaviour of our method on real data.
\end{abstract}

\section{Introduction}

In clustering, the objective is to group together objects that share the same semantic meaning, that are similar to each other, into $k$ disjoint partitions. This problem has been extensively studied in the literature when a measure of similarity between the objects is readily available, for example when the examples have a Euclidean representation or a graph structure \citep{shi2000normalized,arthur2007kmeans,vonluxburg2007tutorial}. However, it has attracted less attention when the objects are difficult to represent in a standard way, for example cars or food. A recent trend to tackle this problem is to use comparison based learning \citep{ukkonen2017crowdsourced,emamjomehzadeh2018adaptive} where, instead of similarities, one only observes comparisons between the examples:
\\ \textbf{Triplet comparison:} Object $x_i$ is  more similar to object $x_j$ than to object $x_k$;
\\ \textbf{Quadruplet comparison:} Objects $x_i$ and $x_j$ are more similar to each other than objects $x_k$ and $x_l$.
\\ There are two ways to obtain these comparisons. On the one hand, one can adaptively query them from an oracle, for example a crowd. This is the active setting. On the other hand, they can be directly given, with no way to make new queries. This is the passive setting. In this paper, we study comparison based learning for clustering using passively obtained triplets and quadruplets.

Comparison based learning mainly stems from the psychometric and crowdsourcing literature \citep{shepard1962theanalysis,young1987multidimensional,stewart2005absolute} where the importance and robustness of collecting ordinal information from human subjects has been widely discussed. In recent years, this framework has attracted an increasing amount of attention in the machine learning community and three main learning paradigms have emerged. The first one consists in obtaining an Euclidean embedding of the data that respects the comparisons as much as possible and then applying standard learning techniques \citep{borg2005modern,agarwal2007generalized,jamieson2011low,tamuz2011adaptively,van2012stochastic,terada2014local,zhang2015jointly,amid2015multiview,arias2017some}. The second paradigm is to directly solve a specific task from the ordinal comparisons, such as data dimension or density estimation \citep{kleindessner2015dimensionality,ukkonen2015crowdsourced}, 
classification and regression \citep{haghiri2018comparison}, or
clustering \citep{vikram2016interactive,ukkonen2017crowdsourced,ghoshdastidar2019foundations}. Finally, the third paradigm is an intermediate solution where the idea is to learn a similarity or distance function, as in embedding approaches, but, instead of satisfying the comparisons, the objective is to solve one or several standard problems such as classification or clustering \citep{kleindessner2017kernel}. In this paper, we focus on this third paradigm and propose two new similarities based on triplet and quadruplet comparisons respectively. While these new similarities can be used to solve any machine learning problem, we show that they are provably good for clustering under a well known planted partitioning framework  \citep{abbe2017community,yan2018provable,xu2020optimal}.

\textbf{Motivation of this work.}
A key bottleneck in comparison based learning is the overall number of available comparisons: given $n$ examples, there exist $\bigO{n^3}$ different triplets and $\bigO{n^4}$ different quadruplets. In practice, it means that, in most applications, obtaining all the comparisons is not realistic. Instead, most approaches try to use as few comparisons as possible. This problem is relatively easy when the comparisons can be actively queried and it is known that $\bigOmega{n\ln n}$ adaptively selected comparisons are sufficient for various learning problems \citep{haghiri2017comparison,emamjomehzadeh2018adaptive,ghoshdastidar2019foundations}. On the other hand, this problem becomes harder when the comparisons are passively obtained. The general conclusion in most theoretical results on learning from passive ordinal comparisons is that, in the worst case,  almost all the $\bigO{n^3}$ or $\bigO{n^4}$ comparisons should be observed \citep{jamieson2011low,emamjomehzadeh2018adaptive}. The focus of this work is to show that, by carefully handling the passively obtained comparisons, it is possible to design comparison based approaches that use almost as few comparisons as active approaches for planted clustering problems.

\textbf{Near-optimal guarantees for clustering with passive comparisons.} 
In hierarchical clustering, \citet{emamjomehzadeh2018adaptive} showed that constructing a hierarchy that satisfies all comparisons in a top-down fashion requires $\bigOmega{n^3}$ passively obtained triplets in the worst case.
Similarly, \citet{ghoshdastidar2019foundations} considered a planted model and showed that $\bigOmega{n^{3.5}\ln n}$ passive quadruplets suffice to recover the true hierarchy in the data using a bottom-up approach.
Since the main difficulty lies in recovering the small clusters at the bottom of the tree, we believe that this latter result also holds for standard clustering.
In this paper, we consider a planted model for standard clustering and we show that, when the number of clusters $k$ is constant, $\bigOmega{n(\ln n)^2}$ passive triplets or quadruplets are sufficient for exact recovery. This result is comparable to the sufficient number of active comparisons in most problems, that is $\bigOmega{n\ln n}$ \citep{haghiri2017comparison,emamjomehzadeh2018adaptive}. Furthermore, it is near-optimal as to cluster $n$ objects it is necessary to observe all the examples at least once and thus have access to at least $\bigOmega{n}$ comparisons. Finally, to obtain these results, we study a semi-definite programming (SDP) based clustering method and our analysis could be of significant interest beyond the comparison based framework.

\textbf{General noise model for comparison based learning.}
In comparison based learning, there are two main sources of noise.
First, the observed comparisons can be noisy, that is the observed triplets and quadruplets are not in line with the underlying similarities.
This noise stems, for example, from the randomness of the answers gathered from a crowd. It is typically modelled by assuming that each observed comparison is randomly (and independently) flipped \citep{jain2016finite,emamjomehzadeh2018adaptive}.
This is mitigated in the active setting by repeatedly querying each comparison, but may have a significant impact in the passive setting where a single instance of each comparison is often observed.
Apart from the aforementioned observation errors, the underlying similarities may also have intrinsic noise.
For instance, the food data set by \citet{wilber2014hcomp} contains triplet comparisons in terms of which items taste more similar, and it is possible that the taste of a dessert is closer to a main dish than to another dessert.
This noise has been considered in \citet{ghoshdastidar2019foundations} by assuming that every pair of items possesses a latent random similarity, which affects the responses to comparisons.
In this paper, we propose, to the best of our knowledge, the first analysis that considers and shows the impact of both types of noise on the number of passive comparisons.

\textbf{Scalable comparison based similarity functions.}
Several similarity and kernel functions have been proposed in the literature \citep{kleindessner2017kernel,ghoshdastidar2019foundations}.
However, computing these similarities is usually expensive as they require up to $\bigO{n}$ passes over the set of available comparisons.
In this paper, we propose new similarity functions whose construction is much more efficient than previous kernels.
Indeed, they can be obtained with a single pass over the set of available comparisons.
It means that our similarity functions can be computed in an online fashion where the comparisons are obtained one at a time from a stream.
The main drawback compared to existing approaches is that we lose the positive semi-definiteness of the similarity matrix, but our theoretical results show that this is not an issue in the context of clustering.
We also demonstrate this empirically as our similarities obtain results that are comparable with state of the art methods.

\section{Background and theoretical framework}
\label{sec_background}

In this section, we present the comparison based framework and our planted clustering model, under which we later show that a small number of passive comparisons suffices for learning. 
%
We consider the following setup.
There are $n$ items, denoted by $[n] = \{1,2,\ldots,n\}$, and we assume that, for every pair of distinct items $i,j \in [n]$, there is an implicit real-valued similarity $w_{ij}$ that we cannot directly observe. 
Instead, we have access to
\begin{equation}
\label{eqn_comparisons}
\begin{aligned} 
&\text{Triplets: }
&& \setT = \left\{ (i,j,r) \in [n]^3 ~:~ w_{ij} > w_{ir}, ~i,j,r \text{ distinct} \right\},
\qquad \text{or}
\\
&\text{Quadruplets: }
&& \setQ = \left\{ (i,j,r,s) \in [n]^4 ~:~ w_{ij} > w_{rs}, ~i\neq j,r\neq s, (i,j)\neq (r,s) \right\}.
\end{aligned}
\end{equation}
%
%

There are $\bigO{n^4}$ possible quadruplets and $\bigO{n^3}$ possible triplets, and it is expensive to collect such a large number of comparisons via crowdsourcing.
In practice, $\setT$ or $\setQ$ only contain a small fraction of all possible comparisons.
We note that if a triple $i,j,r\in[n]$ is observed with $i$ as reference item, then either $(i,j,r) \in \setT$ or $(i,r,j)\in \setT$ depending on whether $i$ is more similar to $j$ or to $r$. Similarly, when tuples $(i,j)$ and $(r,s)$ are compared, we have either $(i,j,r,s)\in\setQ$ or $(r,s,i,j) \in \setQ$.

\textbf{Sampling and noise in comparisons.}
This paper focuses on passive observation of comparisons. 
To model this, we assume that the comparisons are obtained via uniform sampling, and every comparison is equally likely to be observed.
Let $p \in(0,1]$ denote a sampling rate that depends on $n$. 
We assume that every comparison (triplet or quadruplet) is independently observed with probability $p$.
In expectation, $|\setQ | = \bigO{pn^4}$ and $|\setT| = \bigO{pn^3}$, and we can control the sampling rate $p$ to study the effect of the number of observations, $|\setQ|$ or $|\setT|$, on the performance of an algorithm.

As noted in the introduction, the observed comparisons are typically noisy due to random flipping of answers by the crowd workers and inherent noise in the similarities. 
To model the external (crowd) noise we follow the work of \citet{jain2016finite} and, given a parameter $\epsilon \in (0,1]$, we assume that any observed comparison is correct with probability $\frac12(1+\epsilon)$ and flipped with probability $\frac12(1-\epsilon)$.
To be precise, for observed triple $i,j,r\in [n]$ such that $w_{ij} > w_{ir}$,
\begin{align}
    \Pb\big( (i,j,r) \in \setT ~|~ w_{ij}>w_{ir} \big) = \frac{1+\epsilon}{2}, 
    \quad \text{whereas} \quad
    \Pb\big( (i,r,j) \in \setT ~|~ w_{ij}>w_{ir} \big) = \frac{1-\epsilon}{2}. 
    \label{eqn_Errcond}
\end{align}
The probabilities for flipping quadruplets can be similarly expressed.
We model the inherent noise by assuming $w_{ij}$ to be random, and present a model for the similarities under planted clustering.

\textbf{Planted clustering model.}
We now present a theoretical model for the inherent noise in the similarities that reflects a clustered structure of the items.
The following model is a variant of the popular stochastic block model, studied in the context of graph clustering \citep{abbe2017community}, and is related to the non-parametric weighted stochastic block model \citep{xu2020optimal}.

We assume that the item set $[n]$ is partitioned into $k$ clusters $\setC_1,\ldots,\setC_k$ of sizes $n_1,\ldots,n_k$, respectively, but
\textbf{the number of clusters $k$ as well as the clusters $\setC_1,\ldots,\setC_k$ are unknown to the algorithm.}
Let $F_{in}$ and $F_{out}$ be two distributions defined on $\mathbb{R}$.
We assume that the  inherent (and unobserved) similarities $\{w_{ij} : i <j\}$ are random and mutually independent, and
\begin{align*}
w_{ij} \sim F_{in} \quad \text{if } i,j \in C_\ell \text{ for some } \ell, \qquad \text{and} \qquad w_{ij} \sim F_{out} \quad \text{otherwise.}
\end{align*}
We further assume that $w_{ii}$ is undefined,  $w_{ji} = w_{ij}$, and that for $w,w'$ independent,
\begin{equation}
\begin{aligned}
&\Pb_{w,w' \sim F_{in}} (w>w') = \Pb_{w,w' \sim F_{out}} (w>w') = 1/2, \quad \text{and}
\\
&\Pb_{w\sim F_{in},w'\sim F_{out}}(w>w') = (1+\delta)/2 \quad \text{for some } \delta \in (0,1].
\end{aligned}
\label{eqn_Fcondn}
\end{equation}
The first condition in \eqref{eqn_Fcondn} requires that $F_{in},F_{out}$ do not have point masses, and is assumed for analytical convenience. 
The second condition ensures that within cluster similarities are larger than inter-cluster similarities---a natural requirement. 
\citet{ghoshdastidar2019foundations} used a special case of the above model, where $F_{in},F_{out}$ are assumed to be Gaussian with identical variances $\sigma^2$, and means satisfy $\mu_{in} > \mu_{out}$ .
In this case, $\delta = 2\Phi\big(({\mu_{in} - \mu_{out})}/{\sqrt{2}\sigma}\big)-1$ where $\Phi$ is the cumulative distribution function of the standard normal distribution.



%


\textbf{The goal of this paper is to obtain bounds on the number of passively obtained triplets/quadruplets that are sufficient to recover the aforementioned planted clusters with zero error.}
To this end, we propose two similarity functions respectively computed from triplet and quadruplet comparisons, and show that a similarity based clustering approach using semi-definite programming (SDP) can exactly recover clusters planted in the data using few passive comparisons.

\section{A theoretical analysis of similarity based clustering}

Before presenting our new comparison based similarity functions, we describe the SDP approach for clustering from similarity matrices that we use throughout the paper \citep{yan2018provable,chen2020diffusion}. 
In addition, we prove a generic theoretical guarantee for this approach that holds for any similarity matrix and, thus, that could be of interest even beyond the comparison based setting.

%
Similarity based clustering is widely used in machine learning, and there exist a range of popular approaches including spectral methods \citep{vonluxburg2007tutorial}, semi-definite relaxations \citep{yan2016robustness}, or linkage algorithms \citep{dasgupta2016cost} among others.
We consider the following SDP for similarity based clustering.
Let $S \in \nsetR^{n\times n}$ be a symmetric similarity matrix among $n$ items, and $Z \in \{0,1\}^{n\times k}$ be the cluster assignment matrix that we wish to estimate.
For unknown number of clusters $k$, it is difficult to directly determine $Z$, and hence, we estimate the \emph{normalised clustering matrix} $X \in \nsetR^{n\times n}$ such that
$X_{ij} = \frac{1}{|\setC|}$ if $i,j$ co-occur in estimated cluster $\setC$, and $X_{ij} = 0$ otherwise.
Note that $\tr{X}=k$. 
The following SDP was proposed and analysed by \citet{yan2018provable} under the stochastic block model for graphs, and can also be applied in the more general context of data clustering \citep{chen2020diffusion}. This SDP is agnostic to the number of clusters, but penalises large values of $\tr{X}$ to restrict the number of estimated clusters:
 \begin{equation}
 \begin{aligned}
 \max_X~& \tr{SX} - \lambda\, \tr{X} \\
 \text{s.t.}~& X \geq 0, \quad X \succeq 0, \quad X\1 = \1.
\end{aligned}
\label{eqn_sdp}\tag{SDP-$\lambda$}
\end{equation}

Here, $\lambda$ is a tuning parameter and $\1$ denotes the vector of all ones. The constraints $X\geq0$ and $X\succeq0$ restricts the optimisation to non-negative, positive semi-definite matrices.

We first present a general theoretical result for \ref{eqn_sdp}.
Assume that the data has an implicit partition into $k$ clusters $\setC_1,\ldots,\setC_k$ of sizes $n_1,\ldots,n_k$ and with cluster assignment matrix $Z$, and suppose that the similarity $S$ is close to an \emph{ideal similarity matrix} $\St$ that has a $k\times k$ block structure $\St = Z\Sigma Z^T$.
The matrix $\Sigma \in \nsetR^{k\times k}$ is such that $\Sigma_{\ell\ell'}$ represents the ideal pairwise similarity between items from clusters $\setC_\ell$ and $\setC_{\ell'}$.
Typically, under a random planted model, $\St$ is the same as $\Eb[S]$ up to possible differences in the diagonal terms.
For $S=\St$ and certain values of $\lambda$, the unique optimal solution of \ref{eqn_sdp} is a block diagonal matrix $X^* = ZN^{-1}Z^T$, where $N \in \nsetR^{k\times k}$ is diagonal with entries $n_1,\ldots,n_k$ (see Appendix \ref{app_prop_sdp}).
Thus, in the \emph{ideal case}, solving the SDP provides the desired normalised clustering matrix from which one can recover the partition $\setC_1,\ldots,\setC_k$.
The following result shows that $X^*$ is also the unique optimal solution of \ref{eqn_sdp} if $S$ is sufficiently close to $\St$.

\begin{myprop}[\textbf{Recovery of planted clusters using SDP-$\lambda$}]
\label{prop_sdp}
Let $Z\in\{0,1\}^{n\times k}$ be the assignments for a planted $k$-way clustering, $\St = Z\Sigma Z^T$, and $X^* = ZN^{-1}Z^T$ as defined above.
Define
\begin{align*}
\Delta_1 = \min_{\ell\neq \ell'} \left(\frac{\Sigma_{\ell\ell} + \Sigma_{\ell'\ell'}}{2} - \Sigma_{\ell\ell'}\right),
\quad \text{and} \quad
\Delta_2 = \max_{i\in [n]} \max_{\ell \in [k]} \left| \frac{1}{|\setC_\ell|} \sum_{j \in \setC_\ell} \left(S_{ij} - \St_{ij}\right) \right|.
\end{align*}
$X^*$ is the unique optimal solution of \ref{eqn_sdp} for any choice of $\lambda$ in the interval
\begin{align*}
\left\Vert S - \St \right\Vert_2 < \lambda < \min_\ell n_\ell \cdot \min\left\{ \frac{\Delta_1}{2}\,, \,  \Delta_1 - 6\Delta_2 \right\} \,.
\end{align*}
\end{myprop}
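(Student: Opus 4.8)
The natural route is a dual-certificate (KKT) argument: I will exhibit dual variables that certify $X^* = ZN^{-1}Z^T$ as an optimal solution of \ref{eqn_sdp} and then upgrade optimality to uniqueness via strict complementary slackness. Writing the objective as $\langle S - \lambda I, X\rangle$ and attaching a multiplier $y \in \nsetR^n$ to the equality constraint $X\1 = \1$, an entrywise-nonnegative matrix $B \geq 0$ to the constraint $X \geq 0$, and the PSD cone to $X \succeq 0$, strong duality (Slater holds, e.g. $\frac1n\1\1^T$ is feasible) reduces the problem to constructing $y$ and $B$ so that the dual slack $\Psi := \frac12(y\1^T + \1 y^T) - (S - \lambda I) - B$ is positive semidefinite and the complementary slackness conditions $\langle\Psi, X^*\rangle = 0$ and $B_{ij}X^*_{ij} = 0$ hold.

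First I would record what complementary slackness forces. Since $X^*_{ij} = \frac{1}{n_\ell}$ when $i,j$ lie in a common cluster $\mathcal{C}_\ell$ and $X^*_{ij}=0$ otherwise, the condition $B_{ij}X^*_{ij} = 0$ requires $B$ to vanish on all within-cluster pairs, so $B$ is supported on (and nonnegative across) inter-cluster entries only. The condition $\langle\Psi,X^*\rangle=0$ with both matrices PSD is equivalent to $\Psi X^*=0$, which forces the range of $\Psi$ to be orthogonal to the column space of $Z$, i.e. $\Psi Z = 0$. I would use this requirement to solve for $y$: taking $y$ from the within-cluster block averages of $S$, roughly $y_i \approx \frac{1}{n_\ell}\sum_{j\in\mathcal{C}_\ell} S_{ij} - \frac{\lambda}{n_\ell}$ for $i \in \mathcal{C}_\ell$, makes the within-cluster blocks of $\frac12(y\1^T+\1 y^T)-(S-\lambda I)$ annihilate the cluster indicators, giving $\Psi Z = 0$ once the inter-cluster residual is absorbed into $B$. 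In the ideal case $S=\St$ this yields $\Psi = \lambda\,(\text{projection onto the orthogonal complement of } \mathrm{col}(Z))$, so $\Psi$ acts as $\lambda$ on that subspace and has null space exactly $\mathrm{col}(Z)$.

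The two feasibility checks are where the stated interval for $\lambda$ is born, and this is the crux of the argument. Verifying $\Psi \succeq 0$: on the orthogonal complement of $\mathrm{col}(Z)$ the ideal part of $\Psi$ equals $\lambda$, and the perturbation $S-\St$ enters through a centred correction, so $\Psi$ stays PSD as soon as $\lambda$ dominates that perturbation in operator norm, yielding the lower bound $\normTwo{S - \St} < \lambda$. Verifying $B \geq 0$: on an inter-cluster pair $(i,j)$ with $i\in\mathcal{C}_\ell,\,j\in\mathcal{C}_{\ell'}$, nonnegativity of $B_{ij}$ reduces, after substituting the block-average form of $y$, to the requirement $\frac{\Sigma_{\ell\ell}+\Sigma_{\ell'\ell'}}{2}-\Sigma_{\ell\ell'}\ \geq\ \frac{\lambda}{2}\big(\tfrac{1}{n_\ell}+\tfrac{1}{n_{\ell'}}\big)$ up to sampling fluctuations; the left-hand side is at least $\Delta_1$ and the right-hand side is at most $\lambda/\min_\ell n_\ell$, while the several block averages of $S-\St$ appearing on both sides are each controlled by $\Delta_2$. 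These combine to give the upper bound $\lambda < \min_\ell n_\ell\cdot\min\{\Delta_1/2,\ \Delta_1 - 6\Delta_2\}$, and reconciling it with the spectral lower bound produces the advertised admissible interval. The main obstacle is precisely this bookkeeping: tracking how $S-\St$ enters both the spectral condition and the distributed entrywise conditions on $B$ with the correct constants, in particular accounting for the accumulation of the fluctuation terms into the factor $6\Delta_2$.

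Finally, for uniqueness I would use strict complementary slackness, which the strict inequalities bounding $\lambda$ guarantee. The construction makes $\Psi$ have null space exactly $\mathrm{col}(Z)$ (its smallest nonzero eigenvalue is at least $\lambda - \normTwo{S-\St}>0$) and makes $B$ strictly positive on all inter-cluster entries. Hence any optimal $X$ must satisfy $\Psi X = 0$, forcing $\mathrm{range}(X)\subseteq\mathrm{col}(Z)$ so that $X = ZMZ^T$ for some symmetric $M$, and $B_{ij}X_{ij}=0$, forcing $X_{ij}=0$ across clusters so that $M$ is diagonal. Imposing $X\1=\1$ then gives $n_\ell M_{\ell\ell}=1$, hence $M = N^{-1}$ and $X = ZN^{-1}Z^T = X^*$, establishing uniqueness.
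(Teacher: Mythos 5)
Your certificate for optimality is essentially the paper's: the multipliers you call $y$, $B$, $\Psi$ are (up to a factor of two in $y$) exactly the paper's $\alpha$, $\Gamma$, $\Lambda$, the block-averaged choice of $y$ is the same, and the two quantitative checks — $\Psi\succeq 0$ on the orthogonal complement of $\mathrm{col}(Z)$ reducing to $\lambda>\Vert S-\St\Vert_2$, and entrywise nonnegativity of $B$ reducing to $\Gamma_{ir}\geq \Delta_1-6\Delta_2-\lambda/n_{\min}$ — are precisely the paper's computations, including the accumulation of six block-average deviations into $6\Delta_2$. Where you genuinely diverge is uniqueness. The paper also shows that any other optimal $X'$ satisfies $\langle\Lambda,X'\rangle=\langle\Gamma,X'\rangle=0$ (via equality of objective values, which you should state explicitly rather than just invoking "strict complementary slackness"), and hence $X'=ZAZ^T$; but it then does \emph{not} use positivity of $\Gamma$ to kill the off-diagonal blocks of $A$. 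Instead it proves $X^*-X'\succeq 0$ and combines von Neumann's trace inequality with the bound $\tr{\St(X^*-X')}\geq n_{\min}\Delta_1\tr{X^*-X'}$ to get a strict objective gap — and this is the only place the $n_{\min}\Delta_1/2$ part of the upper bound on $\lambda$ is consumed. Your route instead observes that the strict hypothesis $\lambda<n_{\min}(\Delta_1-6\Delta_2)$ makes $\Gamma$ strictly positive on every inter-cluster entry, so $\Gamma_{ij}X'_{ij}=0$ forces $A$ diagonal and the row-sum constraint gives $A=N^{-1}$. That argument is valid and shorter, and it in fact establishes uniqueness on the a priori wider interval $\Vert S-\St\Vert_2<\lambda<n_{\min}(\Delta_1-6\Delta_2)$, so the $\Delta_1/2$ term in your final statement is not produced by your own argument (it is simply inherited harmlessly, since the proposition is a sufficient condition). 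The trade-off is that the paper's trace argument is insensitive to whether the entrywise dual bound is strict, and it is the piece that transfers to the ideal case $S=\St$ over the full range $0<\lambda<n_{\min}\Delta_1$, which the paper also needs.
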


The proof of Proposition~\ref{prop_sdp}, given in Appendix~\ref{app_prop_sdp}, is adapted from \citet{yan2018provable} although uniqueness was not proved in this previous work.
The term $\Delta_1$ quantifies the separation between the ideal within and inter-cluster similarities, and is similar in spirit to the weak assortativity criterion for stochastic block models \citep{yan2018provable}. 
On the other hand, the matrix spectral norm $\Vert S - \St \Vert_2$ and the term $\Delta_2$ both quantify the deviation of the similarities $S$ from their ideal values $\St$.
Note that the number of clusters can be computed as $k = \tr{X}$ and cluster assignment $Z$ is obtained by clustering the rows of $X^*$ using $k$-means or spectral clustering for example.
In the experiments (Section~\ref{sec:experiments}), we present a data-dependent approach to tune $\lambda$ and find $k$.

We conclude this section by noting that most of the previous analyses of SDP clustering either assume sub-Gaussian data \citep{yan2016robustness} or consider similarity matrices with independence assumptions \citep{chen2014statistical,yan2018provable}
that might not hold in general, and do not hold for our \ref{eqn_adis3} and \ref{eqn_adis4} similarities described in the next section.
In contrast, the deterministic criteria stated in Proposition~\ref{prop_sdp} make the result applicable in more general settings.

\section{Similarities from passive comparisons}

We present two new similarity functions computed from passive comparisons (\ref{eqn_adis3} and \ref{eqn_adis4}) and guarantees for recovering planted clusters using \ref{eqn_sdp} in conjunction with these similarities. 
\citet{kleindessner2017kernel} introduced pairwise similarities computed from triplets.
A quadruplets variant was proposed by \citet{ghoshdastidar2019foundations}.
These similarities, detailed in Appendix~\ref{app_kernels}, are positive-definite kernels and have multiplicative forms.
In contrast, we compute the similarity between items $i,j$ by simply adding binary responses to comparisons involving $i$ and $j$.

\textbf{Similarity from quadruplets.} 
We construct the additive similarity for quadruplets, referred to as \ref{eqn_adis4}, in the following way. 
Recall the definition of $\setQ$ in Equation~\eqref{eqn_comparisons} and for every $i\neq j$, define
\begin{equation}
S_{ij} = \sum_{r\neq s}  \big(\ind{(i,j,r,s)\in\setQ} - \ind{(r,s,i,j)\in\setQ}\big),
\label{eqn_adis4}\tag{AddS\mbox{-}4}
\end{equation}
where $\ind{\cdot}$ is the indicator function. 
The intuition is that if $i,j$ are similar ($w_{ij}$ is large), then for every observed tuple $i,j,r,s$, $w_{ij}>w_{rs}$ is more likely to be observed.
Thus, $(i,j,r,s)$ appears in $\setQ$ more often than $(r,s,i,j)$, and $S_{ij}$ is a (possibly large) positive term. On the other hand, smaller $w_{ij}$ leads to a negative value of $S_{ij}$.
Under the aforementioned planted model with clusters of size $n_1,\ldots,n_k$, one can verify that $S_{ij}$ indeed reveals the planted clusters in expectation since
if $i,j$ belong to the same planted cluster, then $\Eb[S_{ij}] = \displaystyle p\epsilon\delta \sum_{\ell \in [k]} \frac{n_\ell (n-n_\ell)}{2}$, and $\Eb[S_{ij}] =\displaystyle - p\epsilon\delta \sum_{\ell \in [k]} \binom{n_\ell}{2}$ otherwise. 
%
Thus, in expectation, the within cluster similarity exceeds the inter-cluster similarity by $p\epsilon\delta\binom{n}{2}$.

\textbf{Similarity from triplets.}
The additive similarity based on passive triplets \ref{eqn_adis3} is given by
\begin{align}
S_{ij} &= \sum_{r\neq i,j}  \big(\ind{(i,j,r)\in\setT} - \ind{(i,r,j)\in\setT}\big) + \big(\ind{(j,i,r)\in\setT} - \ind{(j,r,i)\in\setT}\big)
\label{eqn_adis3}\tag{AddS\mbox{-}3}
\end{align}
for every $i\neq j$.
The \ref{eqn_adis3} similarity $S_{ij}$ aggregates all the comparisons that involve both $i$ and $j$, with either $i$ or $j$ as the reference item.
Similar to the case of \ref{eqn_adis4}, $S_{ij}$ tends to be positive when $w_{ij}$ is large, and negative for small $w_{ij}$.
One can also verify that, under a planted model, the expected within cluster \ref{eqn_adis3} similarity exceeds the inter-cluster similarity by $p\epsilon\delta( n - 2)$.

A significant advantage of \ref{eqn_adis3} and \ref{eqn_adis4} over existing similarities is in terms of computational time for constructing $S$.
Unlike existing kernels, both similarities can be computed from a single pass over $\setT$ or $\setQ$.
In addition, the following result shows that 
the proposed similarities can exactly recover planted clusters using only a few (near optimal) number of passive comparisons.

\begin{myth}[\textbf{Cluster recovery using AddS-3 and AddS-4}]
\label{thm_adis}
Let $X^*$ denote the normalised clustering matrix corresponding to the true partition, and $n_{\min}$ be the size of the smallest planted cluster.
Given the triplet or the quadruplet setting, there exist absolute constants $c_1,c_2,c_3,c_4>0$ such that, with probability at least $1-\frac1n$, $X^*$ is the unique optimal solution of \ref{eqn_sdp} if $\delta$ satisfies 
$\displaystyle     c_1 \frac{\sqrt{n \ln n}}{n_{\min}} < \delta \leq 1\;$,
and one of the following two conditions hold:
    
$\bullet$~\textbf{(triplet setting)} 
 $S$ is given by \ref{eqn_adis3}, and the number of triplets $|\setT|$ and the parameter $\lambda$ satisfy
   \begin{align*}
     |\setT| > c_2 \displaystyle \frac{n^3 (\ln n)^2}{\epsilon^2 \delta^2 n_{\min}^2}
    \quad\text{and}\quad
     c_3  \max\left\{\sqrt{ |\setT| \frac{ \ln n}{n}}\,, |\setT| \epsilon \sqrt{\frac{\ln n}{n^3}}\,, (\ln n)^2\right\} < \lambda < c_4 |\setT| \frac{\epsilon \delta n_{\min}}{n^2} \;;
    \end{align*}
$\bullet$~\textbf{(quadruplet setting)} 
 $S$ is given by \ref{eqn_adis4}, and the number of quadruplets $|\setQ|$ and $\lambda$ satisfy
     \begin{align*}
     |\setQ| > c_2 \displaystyle \frac{n^3 (\ln n)^2}{\epsilon^2 \delta^2 n_{\min}^2} 
     \quad\text{and}\quad
    c_3 \max\left\{\sqrt{ |\setQ| \frac{ \ln n}{n}}\,, |\setQ| \epsilon \sqrt{\frac{\ln n}{n^3}}\,, (\ln n)^2\right\} < \lambda < c_4 |\setQ| \frac{\epsilon \delta n_{\min}}{n^2} \;.
    \end{align*}
The condition on $\delta$ and the number of comparisons ensure that the interval for $\lambda$ is non-empty.
\end{myth}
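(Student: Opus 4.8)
My plan is to invoke Proposition~\ref{prop_sdp} with $S$ taken to be the \ref{eqn_adis4} (resp.\ \ref{eqn_adis3}) similarity and with the ideal matrix $\St = \Eb[S]$ (corrected on the diagonal). The theorem then reduces to three estimates: the exact value of the ideal gap $\Delta_1$, a high-probability upper bound on the spectral deviation $\normTwo{S-\St}$, and a high-probability upper bound on the averaged deviation $\Delta_2$. I would first compute $\Delta_1$ directly from the expectations recorded before the theorem. For quadruplets the within-cluster expectation $\Sigma_{\ell\ell}=p\epsilon\delta\sum_m n_m(n-n_m)/2$ and the inter-cluster expectation $\Sigma_{\ell\ell'}=-p\epsilon\delta\sum_m\binom{n_m}{2}$ are both independent of the clusters, so a one-line computation gives $\Delta_1=\tfrac{\Sigma_{\ell\ell}+\Sigma_{\ell'\ell'}}{2}-\Sigma_{\ell\ell'}=p\epsilon\delta\binom{n}{2}$; the triplet computation is analogous and yields $\Delta_1=\Theta(p\epsilon\delta n)$. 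Substituting $|\setQ|=\bigO{pn^4}$ (resp.\ $|\setT|=\bigO{pn^3}$) turns $n_{\min}\Delta_1/2$ into the claimed upper endpoint $c_4|\setQ|\tfrac{\epsilon\delta n_{\min}}{n^2}$ of the admissible $\lambda$-interval.

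The core of the argument, and the step I expect to be hardest, is the spectral bound on $\normTwo{S-\St}$. The entries of $S$ are \emph{not} independent: a single sampled comparison between pairs $(i,j)$ and $(r,s)$ updates both $S_{ij}$ and $S_{rs}$, and the latent similarities $w_{rs}$ are shared across many entries, so off-the-shelf matrix concentration for matrices with independent entries does not apply. I would decouple the two noise sources by conditioning on the realised similarities $w$ and writing
\[
S-\St = \bigl(S-\Eb[S\mid w]\bigr) + \bigl(\Eb[S\mid w]-\Eb[S]\bigr) + \bigl(\Eb[S]-\St\bigr).
\]
For the first term, conditionally on $w$ the sampling and flipping are independent across comparisons, so $S-\Eb[S\mid w] = \sum_c (M_c - \Eb[M_c\mid w])$ is a sum of independent sparse matrices $M_c$ of rank $\bigO{1}$ and operator norm $\bigO{1}$, with conditional variance proxy $\bigl\|\sum_c\Eb[(M_c-\Eb M_c)^2\mid w]\bigr\|_2 = \bigO{pn^3}$; matrix Bernstein then yields a bound of order $\sqrt{pn^3\ln n} = \sqrt{|\setQ|\ln n/n}$, the first term in the stated maximum. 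The last term is a deterministic diagonal correction of norm $\bigO{\Delta_1}$, far below the upper endpoint $n_{\min}\Delta_1/2$, hence harmless.

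The delicate term is the inherent-noise matrix $\Eb[S\mid w]-\Eb[S]$, which equals $p\epsilon\,W$ with $W_{ij}=\sum_{r\ne s}\bigl(\mathrm{sign}(w_{ij}-w_{rs}) - \Eb\,\mathrm{sign}(w_{ij}-w_{rs})\bigr)$. The key observation that rescues independence is that, after conditioning each summand on $w_{ij}$, the dominant part of $W_{ij}$ is an \emph{entrywise} function $h(w_{ij})$ of the mutually independent similarities $\{w_{ij}\}$, whose range is $\bigO{n^2}$ and whose column-variance sum is therefore $\bigO{n^5}$; a spectral-norm bound for matrices with independent entries then gives $\normTwo{W}=\bigO{n^{5/2}\sqrt{\ln n}}$, i.e.\ $p\epsilon\,\normTwo{W}=\bigO{|\setQ|\epsilon\sqrt{\ln n/n^3}}$, the second term in the maximum. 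The residual fluctuation $W-h(\cdot)$, a sum of conditionally centred terms of typical size $\bigO{n}$, contributes only at order $n^{3/2}$ and is absorbed; the left-over poly-logarithmic additive pieces of the two concentration inequalities, together with a truncation of the tails of $F_{in},F_{out}$, account for the $(\ln n)^2$ term.

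Finally I would control $\Delta_2$ by a scalar Bernstein inequality applied to each cluster-average $\tfrac1{n_\ell}\sum_{j\in\setC_\ell}(S_{ij}-\St_{ij})$, with a union bound over the $\bigO{nk}$ choices of $(i,\ell)$, and show $\Delta_2\le\Delta_1/12$ under the stated hypotheses so that $\min\{\Delta_1/2,\Delta_1-6\Delta_2\}=\Delta_1/2$ and the upper endpoint takes its clean form. Assembling the three estimates into the interval of Proposition~\ref{prop_sdp} and requiring the lower endpoint to fall below the upper one term by term reproduces exactly the stated conditions: matching the $|\setQ|\epsilon\sqrt{\ln n/n^3}$ term against $n_{\min}\Delta_1/2$ forces $\delta n_{\min}>\sqrt{n\ln n}$, i.e.\ $\delta>c_1\sqrt{n\ln n}/n_{\min}$, while matching the remaining two terms forces $|\setQ|>c_2 n^3(\ln n)^2/(\epsilon^2\delta^2 n_{\min}^2)$. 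The triplet case is identical up to replacing the $\bigO{n^2}$ comparisons per entry by $\bigO{n}$ and $|\setQ|=\bigO{pn^4}$ by $|\setT|=\bigO{pn^3}$, which leaves the final expressions unchanged in terms of $|\setT|$. A union bound over the conditional matrix-concentration, the independent-entry concentration, and the Bernstein events gives the overall failure probability $\tfrac1n$.
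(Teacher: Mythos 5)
Your proposal is correct and reaches the stated conditions by the same overall reduction as the paper: invoke Proposition~\ref{prop_sdp} with $\St=\Eb[S]$ up to a diagonal correction, compute $\Delta_1=p\epsilon\delta\binom{n}{2}$ (resp.\ $\Theta(p\epsilon\delta n)$), and control $\Vert S-\St\Vert_2$ and $\Delta_2$ after conditioning on the latent similarities $\setW$. Where you genuinely diverge is in how the dependence inside each piece is handled. For the sampling/crowd term $S-\Eb[S|\setW]$ you sum independent rank-$\bigO{1}$ matrices (one per observed comparison) and apply matrix Bernstein directly, whereas the paper splits the antisymmetric pairs $B_{ijrs}=-B_{rsij}$ into two independent families by equitable colouring and additionally needs an entrywise truncation event $\mathcal{E}$; your route is cleaner and even yields a slightly better additive term ($\ln n$ instead of $(\ln n)^2$, which is why your appeal to ``truncation of the tails of $F_{in},F_{out}$'' to recover the $(\ln n)^2$ floor is unnecessary --- the similarities enter only through bounded indicators). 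For the inherent-noise term $\Eb[S|\setW]-\Eb[S]$ you use a Hájek-projection-style decomposition, isolating the dominant entrywise part $h(w_{ij})=\sum_{r,s}\bigl(F_{rs}(w_{ij})-\Eb F_{rs}(w_{ij})\bigr)$, which has independent entries of range $\bigO{n^2}$, and absorbing the conditionally centred residual (Frobenius norm $\bigO{n^2\sqrt{\ln n}}$, comfortably below the leading $n^{5/2}\sqrt{\ln n}$); the paper instead partitions the dependency graph of the $B'_{ijrs}$ into $\bigO{n^2}$ independent sets via \citet{janson2002infamous} and pays a union bound over the colour classes. Both give the same rate $p\epsilon n^{5/2}\sqrt{\ln n}$, but your projection argument is arguably more transparent and avoids the colouring machinery, at the price of having to justify the residual bound; the colouring argument is more mechanical and reuses the same template for $\Delta_2$. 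Two small gaps to close for a complete write-up: the cluster averages in $\Delta_2$ have the same dependence problem as the spectral norm (your projection trick handles it, since the $h(w_{ij})$, $j\in\setC_\ell$, are independent for fixed $i$, but you should say so rather than invoking ``scalar Bernstein'' outright), and since the theorem is stated in terms of the realised counts $|\setT|,|\setQ|$ you must also note that these concentrate around $pn^3$ and $pn^4$ before substituting for $p$, as the paper does with a final Bernstein step.
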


Theorem~\ref{thm_adis} is proved in Appendix~\ref{app_thm_adis}. This result shows that given a sufficient number of comparisons, one can exactly recover the planted clusters using \ref{eqn_sdp} with an appropriate choice of $\lambda$.
In particular, if there are $k$ planted clusters of similar sizes and $\delta$ satisfies the stated condition, then recovery of the planted clusters with zero error is possible with only $\bigOmega{\frac{k^2}{\epsilon^2 \delta^2} n (\ln n)^2}$ passively obtained triplets or quadruplets.
We make a few important remarks about the sufficient conditions stated in Theorem~\ref{thm_adis}.

\begin{myrem}[\textbf{Comparison with existing results}]
\label{rem_theory_comparison}
For fixed $k$ and fixed $\epsilon,\delta \in(0,1]$, Theorem~\ref{thm_adis} states that $\bigOmega{n(\ln n)^2}$ passive comparisons (triplets or quadruplets) suffice to exactly recover the clusters.
This significantly improves over the $\bigOmega{n^{3.5}\ln n}$ passive quadruplets needed by \citet{ghoshdastidar2019foundations} in a planted setting, and the fact that $\bigOmega{n^3}$ triplets are necessary in the worst case \citep{emamjomehzadeh2018adaptive}.
\end{myrem}

\begin{myrem}[\textbf{Dependence of the number of comparisons on the noise levels $\epsilon,\delta$}]
When one can actively obtain comparisons, \citet{emamjomehzadeh2018adaptive} showed that it suffices to query  $\bigOmega{n\ln\left(\frac{n}{\epsilon}\right)}$ triplets.
Compared to the $\ln\left(\frac1\epsilon\right)$ dependence in the active setting, the sufficient number of passive comparisons in Theorem~\ref{thm_adis} has a stronger dependence of $\frac{1}{\epsilon^2}$ on the crowd noise level $\epsilon$.
While we do not know whether this dependence is optimal, the stronger criterion is intuitive since, unlike the active setting, the passive setting does not provide repeated observations of the same comparisons that can easily nullify the crowd noise. The number of comparisons also depends as $\frac{1}{\delta^2}$ on the inherent noise level, which is similar to the conditions in \citet{ghoshdastidar2019foundations}.
\end{myrem}

Theorem~\ref{thm_adis} states that exact recovery primarily depends on two sufficient conditions, one on $\delta$ and the other on the number of passive comparisons $(|\setT| \text{ or } |\setQ|)$.
The following two remarks show that both conditions are necessary, up to possible differences in logarithmic factors.

\begin{myrem}[\textbf{Necessity of the condition on $\delta$}]
\label{rem_optimal_delta}
The condition on $\delta$ imposes the condition of $n_{\min} = \bigOmega{\sqrt{n\ln n}}$. 
This requirement on $n_{\min}$ appears naturally in planted problems.
Indeed, assuming that all $k$ clusters are of similar sizes, the above condition is equivalent to a requirement of $k = \bigO{\sqrt{\frac{n}{\ln n}}}$ and it is believed that polynomial time algorithms cannot recover $k \gg \sqrt{n}$ planted clusters \citep[Conjecture 1]{chen2014statistical}.
\end{myrem}

\begin{myrem}[\textbf{Near-optimal number of comparisons}]
\label{rem_optimal_comparison}
To cluster $n$ items, one needs to observe each example at least once. Hence, one trivially needs at least $\bigOmega{n}$ comparisons (active or passive).
Similarly, existing works on actively obtained comparisons show that $\bigOmega{n\ln n}$ comparisons are sufficient for learning in supervised or unsupervised problems \citep{haghiri2017comparison,emamjomehzadeh2018adaptive,ghoshdastidar2019foundations}.
We observe that, in the setting of Remark~\ref{rem_theory_comparison}, it suffices to have $\bigOmega{n(\ln n)^2}$ passive comparisons which matches the necessary conditions up to logarithmic factors. 
However, the sufficient condition on the number of comparisons becomes $\bigOmega{k^2 n (\ln n)^2}$ if $k$ grows with $n$ while $\epsilon$ and $\delta$ are fixed.
It means that the worst case of $k = \bigO{\sqrt{\frac{n}{\ln n}}}$, stated in Remark~\ref{rem_optimal_delta}, can only be tackled using at least $\bigOmega{n^2 \ln n}$ passive comparisons.
\end{myrem}

\begin{myrem}[\textbf{No new information beyond $\bigOmega{n^2/\epsilon^2}$ comparisons}]
Theorem \ref{thm_adis} shows that for large $n$ and $\bigOmega{n^2/\epsilon^2}$ number of comparisons,
the condition for exact recovery of the clusters is only governed by the condition on $\delta$ as the interval for $\lambda$ is always non empty. It means that, beyond a quadratic number of comparisons, no new information is gained by observing more comparisons.
This explains why significantly fewer passive comparisons suffice in practice than the known worst-case requirements of $\bigOmega{n^3}$ passive triplets or $\bigOmega{n^4}$ passive quadruplets.
\end{myrem}

We conclude our theoretical discussion with a remark about recovering planted clusters when the pairwise similarities $w_{ij}$ are observed.
Our methods are near optimal even in this setting.

\begin{myrem}[\textbf{Recovering planted clusters for non-parametric $F_{in},F_{out}$}]
Theoretical studies in the classic setting of clustering with observed pairwise similarities $\{w_{ij}:i<j\}$ typically assume that the distributions $F_{in}$ and $F_{out}$ for the pairwise similarities are Bernoulli (in unweighted graphs), or take finitely many values (labelled graphs), or belong to exponential families
\citep{chen2014statistical,aicher2015learning,yun2016optimal}.
Hence, the applicability of such results are restrictive.
Recently, \citet{xu2020optimal} considered non-parametric distributions for $F_{in},F_{out}$, and presented a near-optimal approach based on discretisation of the similarities into finitely many bins.
Our work suggests an alternative approach: compute ordinal comparisons from the original similarities and use clustering on \ref{eqn_adis3} or \ref{eqn_adis4}.
Theorem~\ref{thm_adis} then guarantees, for any non-parametric and continuous $F_{in}$ and $F_{out}$, exact recovery of the planted clusters under a near-optimal condition on $\delta$. 
\end{myrem}

\section{Experiments}
\label{sec:experiments}

The goal of this section is three-fold: present a strategy to tune $\lambda$ in \ref{eqn_sdp}; empirically validate our theoretical findings; and demonstrate the performance of the proposed approaches on real datasets. 

\textbf{Choosing $\lambda$ and estimating the number of clusters based on Theorem \ref{thm_adis}.}
Given a similarity matrix $S$, the main difficulty involved in using \ref{eqn_sdp} is tuning the parameter $\lambda$.
\citet{yan2018provable} proposed the algorithm SPUR to select the best $\lambda$ as $\lambda^* = \argmax_{0\leq\lambda \leq \lambda_\text{max}}  \frac{\sum_{i\leq k_{\lambda}} \sigma_i(X_{\lambda})}{\tr{X_\lambda}}$ where $X_\lambda$ is the solution of \ref{eqn_sdp}, $k_{\lambda}$ is the integer approximation of $\tr{X_\lambda}$ and an estimate of the number of clusters, $\sigma_i(X_\lambda)$ is the $i$-th largest eigenvalue of $X_\lambda$, and $\lambda_\text{max}$ is a theoretically well-founded upper bound on $\lambda$.
The maximum of the above objective is 1, achieved when $X_\lambda$ has the same structure as $X^*$ in Proposition \ref{prop_sdp}.
In our setting, Theorem~\ref{thm_adis} gives an upper bound on $\lambda$ that depends on $\epsilon$, $\delta$ and $n_{\min}$ which are not known in practice.
Furthermore, it is computationally beneficial to use the theoretical lower bound for $\lambda$ instead of using $\lambda\geq0$ as suggested in SPUR. 

We propose to modify SPUR based on the fact that the estimated number of clusters $k$ monotonically decreases with $\lambda$ (details in Appendix \ref{app: sec: Algorithm}).
Given Theorem \ref{thm_adis}, we choose $\lambda_{\text{min}} = \sqrt{ c (\ln n)/n}$ and $\lambda_{\text{max}} = c/n$, where $c = |\setQ|$ or $|\setT|$. The trace of the \ref{eqn_sdp} solution then gives two estimates of the number of clusters, $k_{\lambda_{\text{min}}}$ and $k_{\lambda_{\text{max}}}$, and we search over $k \in [k_{\lambda_{\max}},k_{\lambda_{\min}}]$ instead of searching over $\lambda$---in practice, it helps to search over the values $\max\{2,k_{\lambda_{\max}}\} \leq k \leq k_{\lambda_{\min}}+2$.
We select $k$ that maximises the above SPUR objective, where $X$ is computed using a simpler SDP \citep{yan2018provable}:
\begin{align}
 \textstyle{\max_X}~& \langle S,X \rangle &\text{ s.t.~ } X \geq 0, \quad X \succeq 0, \quad X\1 = \1, \quad \tr{X} = k.
\label{eqn_sdp_PW}\tag{SDP\mbox{-}$k$}
\end{align}

\textbf{Clustering with \ref{eqn_adis3} and \ref{eqn_adis4}.}
For the proposed similarity matrices \ref{eqn_adis3} and \ref{eqn_adis4}, the above strategy provides the optimal number of clusters $k$ and a corresponding solution $X_k$ of \ref{eqn_sdp_PW}.
The partition is obtained by clustering the rows of $X_k$ using $k$-means. Alternative approaches, such as spectral clustering, lead to similar performances (see Appendix \ref{app: sec: planted}).

%











%
\textbf{Evaluation function.} We use the Adjusted Rand Index (ARI) \citep{hubert1985ARI} between the ground truth and the predictions. The ARI takes values in $[-1, 1]$ and measures the agreement between two partitions: $1$ implies identical partitions, whereas $0$ implies that the predicted clustering is random. In all the experiments, we report the mean and standard deviation over 10 repetitions.

\begin{figure*}
     \centering
     \begin{subfigure}[b]{0.32\textwidth}
         \centering
        \includegraphics[height=0.65\textwidth,clip=true,trim=2mm 2mm 185mm 3mm]{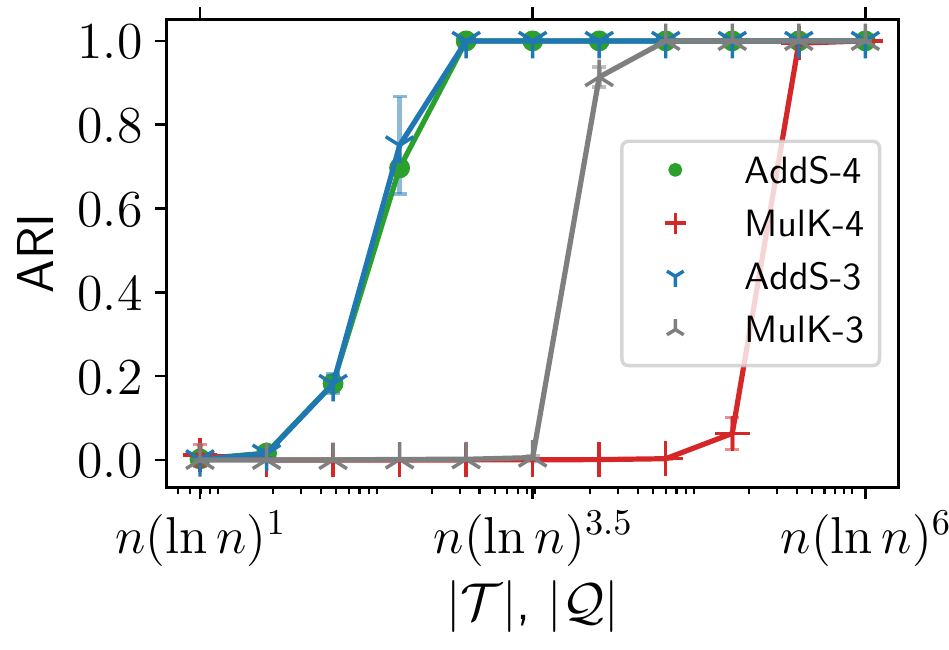}
         \caption{Vary the number of comparisons}
         \label{fig: planted sample number}
     \end{subfigure}
     \hfill
     \begin{subfigure}[b]{0.32\textwidth}
         \centering
        \includegraphics[height=0.65\textwidth,clip=true,trim=100mm 2mm 90mm 3mm]{figures/planted.pdf}
         \caption{Vary the external noise level, $\epsilon$}
         \label{fig: planted epsilon}
     \end{subfigure}
     \hfill
     \begin{subfigure}[b]{0.32\textwidth}
         \centering
        \includegraphics[height=0.65\textwidth,clip=true,trim=195mm 2mm 3mm 3mm]{figures/planted.pdf}
         \caption{Vary the distributions $F_{in}$, $F_{out}$}
         \label{fig: planted distributions}
     \end{subfigure}
        \caption{ARI of various methods on the planted model (higher is better). We vary: \eqref{fig: planted sample number} the number of comparisons $|\mathcal T|$ and $|\mathcal Q|$; \eqref{fig: planted epsilon} the crowd noise level $\epsilon$; \eqref{fig: planted distributions} the distributions $F_{in}$ and $F_{out}$.
        }
        \label{fig: planted cluster}
\end{figure*}

\textbf{Simulated data with planted clusters.} We generate data using the planted model from Section~\ref{sec_background} and verify that the learned clusters are similar to the planted ones. As default parameters we use $n=1000$, $k=4$, $\epsilon=0.75$, $|\mathcal{T}| = |\mathcal{Q}|= n(\ln n)^4$ and $F_{in}=\mathcal{N}\left(\sqrt{2}\sigma\Phi^{-1}\left( \frac{1+\delta}{2} \right),\sigma^2\right),F_{out}=\mathcal{N}\left(0,\sigma^2\right)$ with  $\sigma = 0.1$ and $\delta = 0.5$. In each experiment, we investigate the sensitivity of our method by varying one of the parameters while keeping the others fixed. We use SPUR to estimate the number of clusters. As baselines, we use \ref{eqn_sdp_PW} (using the number of clusters estimated by our approaches) followed by $k$-means with two comparison based multiplicative kernels: \ref{eqn_3k} for triplets \citep{kleindessner2017kernel} and \ref{eqn_4k} for quadruplets \citep{ghoshdastidar2019foundations}.

We present some significant results in Figure~\ref{fig: planted cluster} and defer the others to Appendix~\ref{app: sec: planted}. In Figure~\ref{fig: planted sample number}, we vary the number of sampled comparisons. Unsurprisingly, our approaches are able to exactly recover the planted clusters using as few as $n(\ln n)^3$ comparisons---extra $\ln n$ factor compared to Theorem~\ref{thm_adis} accounts for $\epsilon,\delta$ and constants. \ref{eqn_3k} and \ref{eqn_4k} respectively need $n(\ln n)^{4.5}$ and $n(\ln n)^{5.5}$ comparisons (both values exceed $n^2$ for $n=1000$). In all our experiments, \ref{eqn_adis3} and \ref{eqn_adis4} have comparable performance while \ref{eqn_3k} is significantly better than \ref{eqn_4k}. Thus we focus on triplets in the subsequent experiments for the sake of readability. In Figure~\ref{fig: planted epsilon}, we vary the external noise level $\epsilon$. Given $n(\ln n)^4$ comparisons, \ref{eqn_adis3} exactly recovers the planted clusters for $\epsilon$ as small as $0.25$ (high crowd noise) while, given the same number of comparisons, \ref{eqn_3k} only recovers the planted clusters for $\epsilon > 0.9$. 
Figure~\ref{fig: planted distributions} shows that \ref{eqn_adis3} outperforms \ref{eqn_3k} even when different distributions for $F_\text{in}$ and $F_\text{out}$ are considered (Uniform\,+\,Beta or Uniform\,+\,Normal; details in Appendix~\ref{app: sec: planted}).
It also shows that the distributions affect the performances, which is not evident from Theorem~\ref{thm_adis}, indicating the possibility of a refined analysis under distributional assumptions.




\textbf{MNIST clustering with comparisons.} We consider two datasets which are subsets of the MNIST test data \citep{lecun2010mnisthandwrittendigit}: (i) a subset of 2163 examples containing all $1$ and $7$ (\textit{MNIST 1vs.7}), two digits that are visually very similar, and (ii) a randomly selected subsets of 2000 examples from all 10 classes (\textit{MNIST 10}). To generate the comparisons, we use the Gaussian similarity on a 2-dimensional embedding of the entire MNIST test data constructed with t-SNE \citep{van2014accelerating} and normalized so that each example lies in $[-1,1]^2$.
We focus on the triplet setting and consider additional baselines.
First, we use t-STE \citep{van2012stochastic}, an ordinal embedding approach, to embed the examples in 2 dimensions, and then cluster them using $k$-means on the embedded data. Second, we directly use $k$-means on the normalized data obtained with t-SNE.
The latter is a baseline with access to Euclidean data instead of triplet comparisons.

%
For \textit{MNIST 1vs.7} (Figure~\ref{fig: mnist 1v7}), $|\mathcal{T}| = n(\ln n)^2$ is sufficient for \ref{eqn_adis3} to reach the performance of $k$-means and t-STE while \ref{eqn_3k} requires $n(\ln n)^3$ triplets. Furthermore, note that \ref{eqn_adis3} with known number of clusters performs similarly to \ref{eqn_adis3} using SPUR, indicating that SPUR estimates the number of clusters correctly.
If we consider \textit{MNIST 10} (\autoref{fig: mnist 10}) and $|\mathcal{T}| = n(\ln n)^2$,  \ref{eqn_adis3} with known $k$ outperforms  \ref{eqn_adis3} using SPUR, suggesting that the number of comparisons here is not sufficient to estimate the number of clusters accurately. Moreover, \ref{eqn_adis3} with known $k$ outperforms \ref{eqn_3k} while being close to the performance of t-STE. Finally for $n(\ln n)^4$ triplets, all ordinal methods converge to the baseline of $k$-means with access to original data. The ARI of \ref{eqn_adis3} SPUR improves when the number of comparisons increases due to better estimations of the number of clusters---estimated $k$ increases from 3 for $|\mathcal{T}| = n(\ln n)^2$ up to 9 for $|\mathcal{T}| = n(\ln n)^4$.

\textbf{Real comparison based data.}
We consider the Food dataset \citep{wilber2014hcomp} in Appendix~\ref{app: sec: realdata} and the Car dataset \citep{kleindessner2016cardataset} here. It contains 60 examples grouped into 3 classes (SUV, city cars, sport cars) with 4 outliers, and exhibits 12112 triplet comparisons.
For this dataset, \ref{eqn_adis3} SPUR estimates $k=2$ instead of the correct 3 clusters.
Figure~\ref{fig: car} considers all ordinal methods with $k=2$ and $k=3$, and shows the pairwise agreement (ARI) between different methods and also with the true labels.
While \ref{eqn_3k} with $k=3$ agrees the most with the true labels, all the clustering methods agree well for $k=2$ (top-left $3\times3$ block). 
Hence, the data may have another natural clustering with two clusters, suggesting possible discrepancies in how different people judge the similarities between cars (for instance, color or brand instead of the specified classes).

\begin{figure}
     \centering
     \begin{subfigure}[b]{0.32\textwidth}
         \centering
        \includegraphics[height=0.65\textwidth,clip=true,trim=1mm 1mm 90mm 3mm]{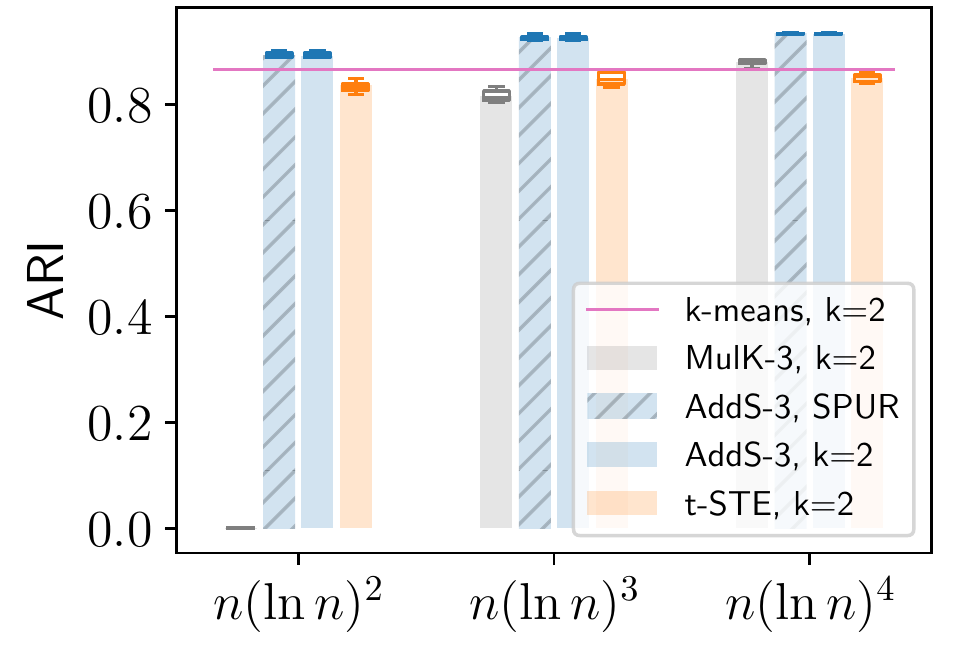}
         \caption{MNIST 1vs.7, $n=2163$}
         \label{fig: mnist 1v7}
     \end{subfigure}
     \hfill
     \begin{subfigure}[b]{0.32\textwidth}
         \centering
        \includegraphics[height=0.65\textwidth,clip=true,trim=97mm 1mm 1mm 3mm]{figures/real_data_mnist.pdf}
         \caption{MNIST 10, $n=2000$}
         \label{fig: mnist 10}
     \end{subfigure}
     \hfill
     \begin{subfigure}[b]{0.32\textwidth}
         \centering
        \includegraphics[height=0.65\textwidth,clip=true,trim=0mm 1mm 3mm 3mm]{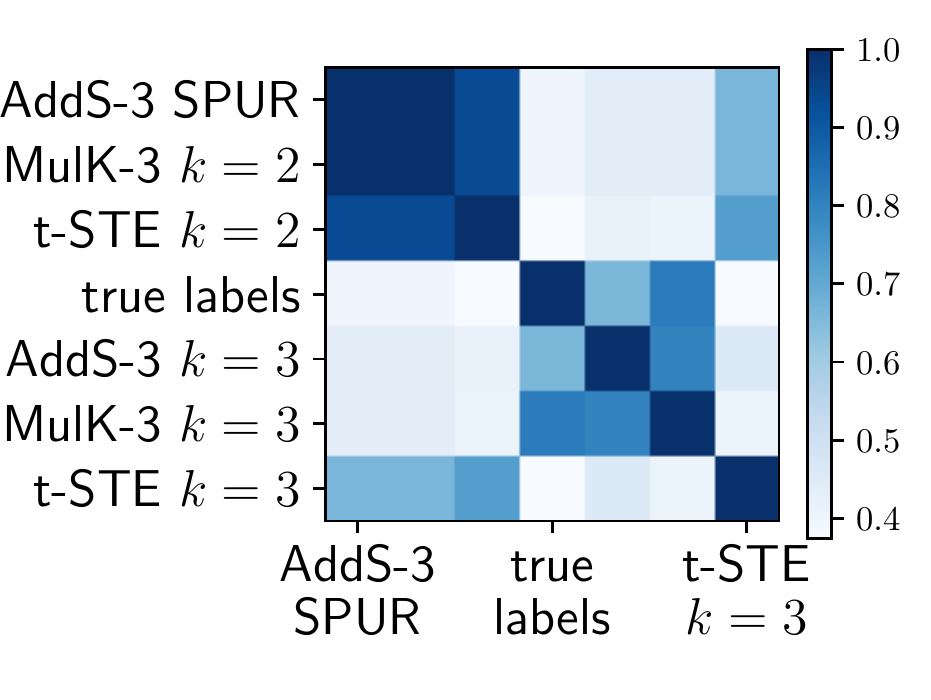}
         \caption{Car dataset, $|\setT| = 12112$}
         \label{fig: car}
     \end{subfigure}
        \caption{Experiments on real datasets. \eqref{fig: mnist 1v7}--\eqref{fig: mnist 10} ARI on MNIST; \eqref{fig: car} ARI similarity matrix comparing the clusters obtained by the different methods on car (darker means more agreement).
        }
        \label{fig: real cluster}
\end{figure}



\section{Conclusion}

It is generally believed that a large number of passive comparisons is necessary in comparison based learning. Existing results on clustering require at least $\bigOmega{n^3}$ passive comparisons in the worst-case or under a planted framework. We show that, in fact,  $\bigOmega{n(\ln n)^2}$ passive comparisons suffice for accurately recovering planted clusters.
This number of comparisons is near-optimal, and almost matches the number of active comparisons typically needed for learning. Our theoretical findings are based on two simple approaches for constructing pairwise similarity matrices from passive comparisons. 
While we studied the merits of \ref{eqn_adis3} and \ref{eqn_adis4} in the context of clustering, they could be used for other problems such as semi-supervised learning, data embedding, or classification.

\newpage
\section*{Broader Impact}

This work primarily has applications in the fields of psychophysics and crowdsourcing, and more generally, in learning from human responses.
Such data and learning problems could be affected by implicit biases in human responses. However, this latter issue is beyond the scope of this work and, thus, was not formally analysed.

\begin{ack}
The work of DG is partly supported by the Baden-W{\"u}rttemberg Stiftung through the BW Eliteprogramm for postdocs. The work of MP has been supported by the ACADEMICS grant of the IDEXLYON, project of the Université de Lyon, PIA operated by ANR-16-IDEX-0005.
\end{ack}

\bibliographystyle{plainnat}
\bibliography{bibliography}

\appendix

\section{Existing comparison based similarities / kernel functions}
\label{app_kernels}

The literature on ordinal embedding from triplet comparisons is extensive \citep{jamieson2011low,arias2017some}.
In contrast, the idea of directly constructing similarity or kernel matrices from the comparisons, without embedding the data in an Euclidean space, is rather new.
Such an approach is known to be significantly faster than embedding methods, and provides similar or sometimes better performances in certain learning tasks. 
To the best of our knowledge, there are only two works that learn kernel functions from comparisons \citep{kleindessner2017kernel,ghoshdastidar2019foundations}, while the works of \citet{jain2016finite} and \citet{mason2017learning} estimate a Gram (or kernel) matrix from the triplets, which is then further used for data embedding.
In this section, we describe the aforementioned approaches for constructing pairwise similarities from comparisons.
Through this discussion, we illustrate the fundamental difference between the proposed additive similarities, \ref{eqn_adis3} and \ref{eqn_adis4}, and the existing kernels that are of multiplicative nature \citep{kleindessner2017kernel,ghoshdastidar2019foundations}.

Kernels from ordinal data were introduced by \citet{kleindessner2017kernel}, who proposed two kernel functions (named $k_1$ and $k_2$) based on observed triplets.
The kernels originated from the notion of Kendall's $\tau$ correlation between two rankings, and $k_1$ was empirically observed to perform slightly better.
We mention this kernel function, which we refer to as a multiplicative triplet kernel \eqref{eqn_3k}.
For any distinct $i,j\in[n]$, the \ref{eqn_3k} similarity is computed as
\begin{align}
S_{ij} &= \frac{\sum\limits_{r<s} \big(\ind{(i,r,s)\in\setT} - \ind{(i,s,r)\in\setT}\big)\big(\ind{(j,r,s)\in\setT} - \ind{(j,s,r)\in\setT}\big)}{\sqrt{|\{(\ell,r,s) \in \setT ~:~ \ell = i \}|} \sqrt{|\{(\ell,r,s) \in \setT ~:~ \ell = j\}|}}
\label{eqn_3k}\tag{MulK\mbox{-}3}
\end{align}
where $\setT$ is the set of observed triplets. 
Note that this kernel does not consider comparisons involving $w_{ij}$ but, instead, uses multiplicative terms indicating how $i$ and $j$ behave with respect to every pair $r,s$.
For uniform sampling with rate $p \gg \frac{\ln n}{n^2}$, the denominators in \ref{eqn_3k} are approximately $p\binom{n}{2}$ for every $i\neq j$.
Hence, it suffices to focus only on the numerator.
\citet{ghoshdastidar2019foundations} proposed a kernel similar to \ref{eqn_3k} for the case of quadruplets, which is referred to as multiplicative quadruplet kernel \eqref{eqn_4k}.  For $i\neq j$, it is given by
\begin{align}
S_{ij} &= \sum\limits_{\ell \neq i,j} \sum\limits_{r<s}  \big(\ind{(i,\ell,r,s)\in\setQ} - \ind{(r,s,i,\ell)\in\setQ}\big) \big(\ind{(j,\ell,r,s)\in\setQ} - \ind{(r,s,j,\ell)\in\setQ}\big).
\label{eqn_4k}\tag{MulK\mbox{-}4}
\end{align}

\citet{ghoshdastidar2019foundations} studied \ref{eqn_4k} in the context of hierarchical clustering, and showed that it requires $\bigO{n^{3.5}\ln n}$ passive quadruplet comparisons to exactly recover a planted hierarchical structure in the data.
Combining their concentration results with Proposition \ref{prop_sdp} shows that the same number of passive quadruplets suffices to recover the planted clusters considered in this work.
Note that both \ref{eqn_3k} and \ref{eqn_4k} kernel functions have a multiplicative nature since each entry is an aggregate of products. This is essential for their positive semi-definite property.
In contrast, the proposed \ref{eqn_adis3} and \ref{eqn_adis4} similarities simply aggregate comparisons involving the pairwise similarity $w_{ij}$, and hence, are not positive semi-definite kernels.

We also mention the work on fast ordinal triplet embedding (FORTE) \citep{mason2017learning}, which learns a metric from the given triplet comparisons.
One can easily adapt the formulation to that of learning a kernel matrix $K \in \nsetR^{n\times n}$ from triplets.
Consider the squared distance in the corresponding reproducing kernel Hilbert space (RKHS), $d_K^2(i,j) = K_{ii} - 2K_{ij} + K_{jj}$. 
Assuming that the triplets adhere to the distance relation in the  RKHS, it is easy to see that when a comparison of $t = \{i,r,s\}$ with $i$ as pivot is available, then
\begin{align*}
 y_t := \ind{(i,r,s)\in\setT} - \ind{(i,s,r)\in\setT}
 &= \textup{sign}\left(d_K^2(i,r) - d_K^2(i,s)\right)
 \\&= \textup{sign}\left(K_{rr} - 2K_{ir} - K_{ss} + 2K_{is} \right),
\end{align*}
which is the sign of a linear map of $K$, which we can denote as
sign$(\langle M_t,K\rangle)$ for some $M_t \in \nsetR^{n\times n}$. 
One can learn the optimal kernel matrix, that satisfies most triplet comparisons, by minimising the empirical loss $\displaystyle \frac{1}{|\setT|}\sum_{t\in \setT} \ell (y_t \langle M_t,K\rangle)$ with positive definiteness constraints for $K$, where $\ell$ is a loss function (log loss is suggested by \citet{jain2016finite}).

\section{Proof of Proposition \ref{prop_sdp}}
\label{app_prop_sdp}

In this section, we first provide a proof of Proposition \ref{prop_sdp} which is split into two parts: the proof of optimality of $X^*$, and the proof of uniqueness of the optimal solution.
In addition, we provide a derivation for the claim that $X^*$ is the unique optimal solution for \ref{eqn_sdp} when $S=\St$ and $0 < \lambda < n_{\min}\Delta_1$. The derivation, given at the end of the section, follows from simplifying some of the computations in the proof of Proposition \ref{prop_sdp}.

\subsection{Optimality of $X^*$ when $S$ is close to $\St$}

The proof is adapted from \citet{yan2018provable}. 
We first state the Karush-Kahn-Tucker (KKT) conditions for \ref{eqn_sdp}.
Let $\Gamma,\Lambda\in\nsetR^{n\times n}$ be the Lagrange parameters for  the non-negativity constraint $(X\geq0)$ and the positive semi-definiteness constraint $(X \succeq 0)$, respectively.
Let $\alpha\in\nsetR^n$ be the Lagrange parameter for the row sum constraints.
The tuple $(X,\Lambda,\Gamma,\alpha)$ is a primal-dual optimal solution for \ref{eqn_sdp}  if and only if it satisfies the following KKT conditions:
\begin{align*}
&\text{Stationarity :}
&& S - \lambda I + \Lambda + \Gamma - \1\alpha^T - \alpha\1^T  = 0
\\
&\text{Primal feasibility :}
&&X\geq0 \quad; \quad X\succeq0 \quad;\quad X\1 = \1
\\
&\text{Dual feasibility :}
&&\Lambda \succeq 0 \quad;\quad \Gamma\geq0
\\
&\text{Complementary slackness :}
&&\langle\Lambda,X\rangle = 0 \quad;\quad \Gamma_{ij} X_{ij} = 0 ~\forall~i,j
\end{align*}
where we use $\langle A, B\rangle$ to denote $\tr{AB}$ for symmetric matrices $A,B$. The above derivation is straightforward.
The term $\1\alpha^T + \alpha\1^T$ in the stationarity condition arises due to the symmetry of $X$, that is, since row-sum and column-sum are identical.
We construct a primal-dual witness to show that $X^*$ is the optimal solution of \ref{eqn_sdp} under the stated conditions on $\lambda$.
We use the following notations. For any vector $u\in\nsetR^n$ and $\setC\subset [n]$, we let $u_{\setC}\in\nsetR^{|\setC|}$ be the projection of $u$ on the indices contained in $\setC$. Similarly, for a matrix $A\in\nsetR^{n\times n}$, $A_{\setC\setC'}$ is the sub-matrix corresponding to row indices in $\setC$ and column indices in $\setC'$. We also define $\1_{m}$ and $I_{m}$ the constant vector of ones and the identity matrix of size $m$, respectively.
We use $\setC_1,\ldots,\setC_k$ to denote the planted clusters of size $n_1,\ldots,n_k$.
We consider the following primal-dual construction that is similar to~\citet{yan2018provable}, where $X = X^*$. 
For every $j,\ell\in\{1,\ldots, k\}$ and $\ell\neq j$, we define
\begin{align}
\alpha: \quad & \hskip4.3ex
\alpha_\j = \frac{1}{n_j} S_\jj \1_\nj - \left(\frac{\lambda}{2n_j} + \frac{1}{2n_j^2} \1_\nj^TS_\jj \1_\nj\right) \1_\nj
\label{eqn_pf_sdp1}
\\
\Lambda : \quad &\left\{
\begin{aligned}
\Lambda_\jj &= -S_\jj +  \alpha_\j\1_\nj^T + \1_\nj\alpha_\j^T + \lambda I_\nj
\\
\Lambda_\jl &= - \left(I_\nj -\frac{1}{n_j}\1_\nj\1_\nj^T\right) S_\jl \left(I_\nl -\frac{1}{\nl}\1_\nl\1_\nl^T\right)
\end{aligned} \right.
\label{eqn_pf_sdp2}
\\
\Gamma : \quad &\left\{
\begin{aligned}
\Gamma_\jj &= 0 \\
\Gamma_\jl &=  - S_\jl - \Lambda_\jl +  \alpha_\j\1_\nl^T + \1_\nj\alpha_\l^T \;.
\end{aligned} \right.
\label{eqn_pf_sdp3}
\end{align}  

The proof of Proposition \ref{prop_sdp} is based on verifying the KKT conditions for the above $\Lambda, \Gamma, \alpha$ and $X=X^*$.
To this end, note that $X^*_\jj = \frac{1}{n_j} \1_\nj\1_\nj^T$ and $X^*_\jl = 0$ for $\ell\neq j$.
Primal feasibility is obviously satisfied by $X^*$, and it is easy to see that the choice of $\Lambda_\jj$ and $\Gamma_\jl$ ensures that stationarity holds.
Hence, we only need to verify dual feasibility and complementary slackness.

The complementary slackness condition for $\Gamma$ holds since $\Gamma_\jj =0$ and $X^*_\jl =0$ for $j\neq \ell$.
To verify $\langle \Lambda, X^*\rangle = 0$, observe that
\begin{align*}
\langle \Lambda, X^*\rangle = \sum_{j,\ell} \langle \Lambda_\jl, X_\jl^*\rangle
= \sum_j \langle \Lambda_\jj, X_\jj^*\rangle
&= \sum_j \frac{1}{n_j} \1_\nj^T \Lambda_\jj \1_\nj
\\&= \sum_j - \frac{1}{n_j} \1_\nj^T S_\jj \1_\nj + 2\1_\nj^T\alpha_\j + \lambda,
\end{align*}
where the last step follows by substituting $\Lambda_\jj$ from \eqref{eqn_pf_sdp2} and noting that $\1_\nj^T\1_\nj = n_j$.
Substituting the value of $\alpha_\j$ above shows that each term in the sum is zero, and hence, $\langle \Lambda, X^*\rangle=0$.

We now verify the dual feasibility and first prove that $\Gamma \geq0$, in particular, $\Gamma_\jl\geq0$ for $j\neq \ell$.
We substitute $\Lambda_\jl$ and $\alpha_\j$ in \eqref{eqn_pf_sdp3} to obtain
\begin{align*}
\Gamma_\jl &= -S_\jl  +  \left(I_\nj -\frac{1}{n_j}\1_\nj\1_\nj^T\right) S_\jl \left(I_\nl -\frac{1}{\nl}\1_\nl\1_\nl^T\right) 
\\&\qquad\qquad~ + \frac{1}{n_j} S_\jj \1_\nj\1_\nl^T - \left( \frac{\lambda}{2n_j} +\frac{1}{2n_j^2} \1_\nj^TS_\jj \1_\nj\right) \1_\nj\1_\nl^T
\\&\qquad\qquad~ + \frac{1}{\nl} \1_\nj\1_\nl^T S_{\setC_\ell\setC_\ell} - \left( \frac{\lambda}{2\nl} +\frac{1}{2n_\ell^2} \1_\nl^TS_{\setC_\ell\setC_\ell} \1_\nl\right) \1_\nj\1_\nl^T
\\
&= -\frac{1}{\nj} \1_\nj \1_\nj^TS_\jl - \frac{1}{\nl} S_\jl\1_\nl \1_\nl^T + \frac{1}{n_j} S_\jj \1_\nj\1_\nl^T +  \frac{1}{\nl} \1_\nj\1_\nl^T S_{\setC_\ell\setC_\ell}
\\&\qquad\qquad~ + \left(\frac{\1_\nj^T S_\jl \1_\nl}{\nj\nl} -  \frac{\lambda}{2n_j} - \frac{\1_\nj^TS_\jj \1_\nj}{2n_j^2}  -  \frac{\lambda}{2\nl} - \frac{ \1_\nl^TS_{\setC_\ell\setC_\ell} \1_\nl}{2n_\ell^2}\right) \1_\nj\1_\nl^T \;.
\end{align*}

Consider $i\in\setC_j$ and $r\in\setC_\ell$. From above, we can compute $\Gamma_{ir}$ as
\begin{align*}
\Gamma_{ir} &= -\frac{1}{\nj} \1_\nj^TS_{\setC_j r} - \frac{1}{\nl} S_{i\setC_\ell}\1_\nl + \frac{1}{n_j} S_{i\setC_j} \1_\nj +  \frac{1}{\nl} \1_\nl^T S_{\setC_\ell r}
\\&\qquad\qquad~  + \frac{\1_\nj^T S_\jl \1_\nl}{\nj\nl} - \frac{\1_\nj^TS_\jj \1_\nj}{2n_j^2}  - \frac{ \1_\nl^TS_{\setC_\ell\setC_\ell} \1_\nl}{2n_\ell^2} -  \frac{\lambda}{2n_j} -  \frac{\lambda}{2n_\ell} 
\\&= - \frac{1}{\nj} \sum_{i'\in\setC_j} S_{i'r}  - \frac{1}{\nl} \sum_{r'\in\setC_\ell} S_{ir'} +  \frac{1}{\nj} \sum_{i'\in\setC_j} S_{ii'} +   \frac{1}{\nl} \sum_{r'\in\setC_\ell} S_{rr'}
\\&\qquad\qquad~   + \frac{1}{\nj\nl} \sum_{i'\in\setC_j,r'\in\setC_\ell} \hskip-2ex S_{i'r'} - \frac{1}{2n_j^2} \sum_{i,i'\in\setC_j} S_{ii'}  - \frac{1}{2n_\ell^2}  \sum_{r,r'\in\setC_\ell} S_{rr'} -  \frac{\lambda}{2n_j} -  \frac{\lambda}{2n_\ell} \;.
\end{align*}
Our goal is to derive a lower bound for $\Gamma_{ir}$ and show that, for suitable values of $\lambda$, $\Gamma_{ir}\geq0$ for all $i\in\setC_j,r\in\setC_\ell$.
We bound each of the terms from below. For the last two terms involving $\lambda$, we note that both terms are at least $-\frac{\lambda}{2n_{\min}}$, where $n_{\min} = \min_\ell n_\ell$.
For each of the other terms, we rewrite the summations in terms of the ideal similarity matrix $\St$ and bound the deviation in terms of $\Delta_2 = \max\limits_{i\in [n]} \max\limits_{\ell \in [k]} \left| \frac{1}{n_\ell} \sum\limits_{r \in \setC_\ell} \left(S_{ir} - \St_{ir}\right) \right|$.
For the first term, we have
\begin{align*}
 - \frac{1}{\nj} \sum_{i'\in\setC_j} S_{i'r} &=  - \frac{1}{\nj} \sum_{i'\in\setC_j} \St_{i'r}  - \frac{1}{\nj} \sum_{i'\in\setC_j} \left(S_{i'r} - \St_{i'r} \right) 
 \\&= - \Sigma_{j\ell} - \frac{1}{\nj} \sum_{i'\in\setC_j} \left(S_{i'r} - \St_{i'r} \right) 
 \\&\geq  - \Sigma_{j\ell} - \Delta_2.
 \end{align*}
For the second inequality, we use the structure of $\St$ to note that $\St_{ir} = \Sigma_{j\ell}$ for every $i\in\setC_i,r\in\setC_\ell$, and finally the deviation term is bounded by $\Delta_2$.
Similarly, one can bound the second, third and fourth terms from below by $( - \Sigma_{j\ell} - \Delta_2)$, $(\Sigma_{jj} - \Delta_2)$ and  $(\Sigma_{\ell\ell} - \Delta_2)$, respectively.
For the fifth term, we write
\begin{align*}
  \frac{1}{\nj\nl} \sum_{i'\in\setC_j,r'\in\setC_\ell} \hskip-2ex S_{i'r'} &=   \frac{1}{\nj\nl} \sum_{i'\in\setC_j,r'\in\setC_\ell} \hskip-2ex \St_{i'r'} +  \frac{1}{\nj\nl} \sum_{i'\in\setC_j,r'\in\setC_\ell} \hskip-2ex \left(S_{i'r'} - \St_{i'r'} \right)
 \\&= \Sigma_{j\ell} + \frac{1}{\nj} \sum_{i'\in\setC_j} \left( \frac{1}{\nl} \sum_{r'\in\setC_\ell} \left(S_{i'r} - \St_{i'r} \right) \right)
 \\&\geq \Sigma_{j\ell} - \Delta_2,
 \end{align*}
since each term in the outer summation is at least $-\Delta_2$.
Similarly, one can bound the sixth and seventh terms from below by $\frac12(\Sigma_{jj} - \Delta_2)$ and  $\frac12(\Sigma_{\ell\ell} - \Delta_2)$, respectively. 
Combining the above lower bounds, we have
\begin{align*}
\Gamma_{ir} ~\geq~ \frac12 \Sigma_{jj} + \frac12\Sigma_{\ell\ell} - \Sigma_{j\ell} - 6\Delta_2 - \frac{\lambda}{n_{\min}}
~\geq~ (\Delta_1 - 6\Delta_2) - \frac{\lambda}{n_{\min}} \;,
\end{align*}
where we recall that $\Delta_1 = \min\limits_{\ell\neq \ell'} \left(\frac{\Sigma_{\ell\ell} + \Sigma_{\ell'\ell'}}{2} - \Sigma_{\ell\ell'}\right)$.
Hence, for $\lambda \leq n_{\min} (\Delta_1 - 6\Delta_2)$, as stated in Proposition \ref{prop_sdp}, $\Gamma_{ir} \geq 0$, and more generally, $\Gamma$ is non-negative. 

We finally derive the positive semi-definiteness of $\Lambda$.
Define the vectors $u_1,\ldots,u_k \in \nsetR^n$ such that $(u_\ell)_i = 1$ if $i\in\setC_\ell$ and 0 otherwise.
We first claim that $u_1,\ldots,u_k$ lie in the null space of $\Lambda$.
To verify this, we compute the $\setC_j$-th block of $\Lambda u_\ell$. For $j\neq \ell$, 
\begin{align*}
(\Lambda u_\ell)_\j = \Lambda_\jl \1_\nl =  - \left(I_\nj -\frac{1}{n_j}\1_\nj\1_\nj^T\right) S_\jl \left(I_\nl -\frac{1}{\nl}\1_\nl\1_\nl^T\right) \1_\nl = 0,
\end{align*}
whereas for $j=\ell$, we have from \eqref{eqn_pf_sdp1} and \eqref{eqn_pf_sdp2},
\begin{align*}
(\Lambda u_\ell)_\l &= \Lambda_{\setC_\ell\setC_\ell} \1_\nl 
\\&=   -S_{\setC_\ell\setC_\ell} \1_\nl +  n_\ell \alpha_\l + \1_\nl\alpha_\l^T\1_\nl + \lambda \1_\nl
\\&= - S_{\setC_\ell\setC_\ell} \1_\nl +  S_{\setC_\ell\setC_\ell} \1_\nl - 2\left(\frac{\lambda}{2} + \frac{\1_\nl^T S_{\setC_\ell\setC_\ell} \1_\nl}{2\nl} \right)\1_\nl + \1_\nl \frac{\1_\nl^T S_{\setC_\ell\setC_\ell} \1_\nl}{\nl} + \lambda \1_\nl
\\&=0.
\end{align*}
Thus $\Lambda u_\ell = 0$ for $\ell = 1,\ldots,k$, and to prove that $\Lambda \succeq 0$, it suffices to show that $u^T\Lambda u \geq 0$ for all $u\in\nsetR^n$ that are orthogonal to $u_1,\ldots,u_k$. 
In other words, we consider only $u$ such that $u_\l^T \1_\nl = 0$ for every $\ell$.
For such a vector $u$, we have
\begin{align*}
u^T\Lambda u = \sum_{j,\ell=1}^k u_\j^T \Lambda_\jl u_\l
&= \sum_{j=1}^k u_\j^T \Lambda_\jj u_\j + \sum_{j\neq\ell} u_\j^T \Lambda_\jl u_\l
\\&= \sum_{j=1}^k u_\j^T (-S_\jj + \lambda I_\nj) u_\j - \sum_{j\neq\ell} u_\j^T S_\jl u_\l
\\&= \sum_{j=1}^k \lambda u_\j^T u_\j - \sum_{j,\ell} u_\j^T S_\jl u_\l
\\&= \lambda \Vert u\Vert^2 - u^TSu,
\end{align*} 
where $\Vert u\Vert$ is the Euclidean norm. 
The third equality follows from \eqref{eqn_pf_sdp2} and $u_\l^T \1_\nl = 0$ for every $\ell$.
In addition to above, recall that $\St = Z\Sigma Z^T$, where $Z = [u_1 \ldots u_k]$.
Hence, for $u$ orthogonal to $u_1,\ldots,u_k$, we have $u^T\St u = 0$, which, combined with above, gives 
\begin{align*}
u^T\Lambda u &=  \lambda \Vert u\Vert^2 - u^TSu
\\&= \lambda \Vert u\Vert^2 - u^T\left(S-\St\right)u
\\&\geq \left( \lambda - \left\Vert S-\St\right\Vert_2\right) \Vert u\Vert^2
\\&> 0
\end{align*}
for all $u$ if $\lambda > \left\Vert S-\St\right\Vert_2$, which is the condition stated in Proposition \ref{prop_sdp}.
Thus, for the specified range of $\lambda$, the KKT conditions are satisfied and $X^*$ is the optimal solution for \ref{eqn_sdp}.

\subsection{Uniqueness of the optimal solution $X^*$} 

The uniqueness of the solution can be shown by proving that any other optimal solution $X'$  for \ref{eqn_sdp} must satisfy $X'=X^*$.
This is shown in two steps. First, we show that any optimal solution $X'$ must have the same block structure as $X^*$ and $X^* - X' \succeq 0$. We use this fact to show that the objective value for $X^*$ is strictly greater than that for any such $X'$.

Note that the previously constructed Lagrange parameters in~\eqref{eqn_pf_sdp1}--\eqref{eqn_pf_sdp3} need not correspond to the optimal solution associated with $X'$. 
However, for the previously defined $\alpha,\Lambda,\Gamma$, we can still use the condition for stationarity to write
\begin{align*}
\langle \Lambda + \Gamma, X'\rangle
&=\langle - S + \1 \alpha^T + \alpha \1^T + \lambda I, X' \rangle
\\&= - \langle S,X'\rangle + \sum_{i,j=1}^n (\alpha_i+\alpha_j) X'_{ij} + \lambda \tr{X}
\\&= - \tr{SX'} + \lambda\tr{X'} + 2\sum_{i=1}^n \alpha_i 
\end{align*}
where the simplification happens noting that $X'$ is primal feasible and hence $\sum_j X_{ij} = 1$.
Due to optimality of $X'$ and $X^*$, we have $\tr{SX'} - \lambda\tr{X'} = \tr{SX^*} - \lambda\tr{X^*}$, and so,
\begin{align*}
\langle \Lambda + \Gamma, X'\rangle
&= - \tr{SX^*} + \lambda \tr{X^*} + 2\1^T\alpha
\\&= \lambda + \sum_{j=1}^k \left(- \frac{\1_\nj^T S_\jj \1_\nj}{\nj} + 2\1_\nj^T \alpha_\j\right) 
= 0,
\end{align*}
where the final step follows by substituting $\alpha_\j$ from \eqref{eqn_pf_sdp1}. 
From above, we argue that both $\langle \Lambda, X'\rangle$ and $\langle \Gamma,X'\rangle$ are zero.
To verify this, note that $\Gamma$ and $X'$ are both non-negative and hence, $\langle \Gamma,X'\rangle \geq0$.
On the other hand, from the definition of Frobenius (or Hilbert-Schmidt) norm, we have $\langle \Lambda, X'\rangle = \left\Vert \Lambda^{1/2}X'^{1/2} \right\Vert_F^2 \geq 0$, where the matrices square roots exist since $\Lambda,X'$ are both positive semi-definite.
Since both inner products, $\langle \Lambda, X'\rangle$ and $\langle \Gamma, X'\rangle$, are non-negative and yet their sum is zero, we can conclude that each of them equals zero.

Note that $\langle \Lambda, X'\rangle = \left\Vert \Lambda^{1/2}X'^{1/2} \right\Vert_F^2 = 0$ implies $\Lambda X' = 0$, or the range space of $X'$ lies in the null space of $\Lambda$.
Recall, from the proof of positive semi-definiteness of $\Lambda$, that, for $\lambda > \Vert S- \St\Vert_2$, the null space of $\Lambda$ is exactly spanned by $Z = [u_1 \ldots u_k]$.
Thus, the range space of $X'$ is spanned by the columns of $Z$, or in other words $X' = ZAZ^T$ for some $A\in\nsetR^{k\times k}$ that is symmetric, non-negative, and positive semi-definite (to ensure that $X'$ is primal feasible), and $\sum_j A_{ij} n_j = 1$ (to satisfy the row sum constraint).
Recall that $X^* = ZN^{-1}Z^T$ where $N=\text{diag}(n_1,\ldots,n_k)$.
Thus, $X'$ has the same block structure as $X^*$. However, this result does not imply that we can recover $k$ planted clusters from $X'$ since it is possible that $A$ has less than $k$ distinct rows.

We now argue that $X^* - X'$ must be positive semi-definite, a property that we use later. To see this, note that
\[
X^* - X'= ZN^{-1/2} \left(I_k - N^{1/2} A N^{1/2} \right) N^{-1/2} Z^T,
\]
where $ZN^{-1/2}$ is a matrix with orthonormal columns. Hence, to prove that $X^* -X' \succeq 0$, it suffices to show that 
$I_k - N^{1/2} A N^{1/2} \succeq 0$ or, equivalently, that the largest eigenvalue of $N^{1/2} A N^{1/2}$ is smaller than 1.
This can verified as
\begin{align*}
\left\Vert N^{1/2} A N^{1/2} \right\Vert_2 = \max_{u~:~\Vert u\Vert = 1} u^T N^{1/2} A N^{1/2} u
= \max_{u~:~\Vert u\Vert = 1} \sum_{i,j=1}^k A_{ij} \sqrt{n_i n_j} u_i u_j.
\end{align*}
From the AM-GM inequality, we have $\sqrt{n_i n_j} u_i u_j \leq \frac12\left(n_iu_j^2 + n_j u_i^2\right)$. Hence,
\begin{align*}
\left\Vert N^{1/2} A N^{1/2} \right\Vert_2 \leq \max_{u~:~\Vert u\Vert = 1} \frac12 \sum_{i,j=1}^k A_{ij}\left(n_iu_j^2 + n_j u_i^2\right) = \sum_{i=1}^k u_i^2 =1,
\end{align*}
where we use the fact that $\sum_j A_{ij} n_j = \sum_i A_{ij} n_i = 1$.
From this discussion, we have $X^*-X' \succeq 0$.
We now claim that 
\begin{equation}
\label{eqn_pf_sdp4}
\left| \tr{(S-\St)(X^* - X)}\right|  \leq \left\Vert S-\St\right\Vert_2 \tr{X^*-X'},
\end{equation}
which follows from von Neumann's trace inequality and the fact that $X^*-X'$ is positive semi-definite.

We now prove that for any $X' = ZAZ^T \neq X^*$, with $A$ satisfying the above mentioned conditions, and for $\Vert S-\St\Vert_2 <\lambda < \frac12 \Delta_1 n_{\min}$,
\begin{equation}
\label{eqn_pf_sdp5}
\tr{SX^*} - \lambda \tr{X^*} > \tr{SX'} - \lambda \tr{X'},
\end{equation}
which shows that $X^*$ is the unique optimal solution.
We compute
\begin{align*}
\tr{SX^*} - &\lambda \tr{X^*} - \tr{SX'} + \lambda \tr{X'}
\\&= \tr{S(X^*-X')} - \lambda \tr{X^* - X'}
\\&= \tr{\St(X^*-X')} + \tr{(S-\St)(X^*-X')} - \lambda \tr{X^* - X'}
\\&> \tr{\St(X^*-X')} -  \left\Vert S-\St\right\Vert_2 \tr{X^*-X'} - \lambda \tr{X^* - X'}
\\&> \tr{\St(X^*-X')} - n_{\min} \Delta_1 \tr{X^* - X'}.
\end{align*}

In the last step, we use $\left\Vert S-\St\right\Vert_2 + \lambda < 2\lambda < n_{\min}\Delta_1$.
We later prove that
\begin{equation}
\label{eqn_pf_sdp6}
\tr{\St(X^*-X')} \geq \sum_{\ell=1}^k \nl(1-A_{\ell\ell} \nl) \Delta_1 \geq  n_{\min} \Delta_1 \tr{X^* - X'}.
\end{equation}
Using \eqref{eqn_pf_sdp6} in the previous derivation proves \eqref{eqn_pf_sdp5} or the fact that $X^*$ is the unique optimal solution, provided that $\tr{X^*-X'} >0$ for all $X' \neq X^*$.
Hence, we need to verify the strict positivity of the trace.
Assume that $\tr{X^*-X'} = 0$.
Due to the row sum  constraint for $X'$, we have $\sum_j A_{\ell j} n_j = 1$, which implies $A_{\ell \ell} \nl \leq 1$.
On the other hand $\tr{X'} = \tr{X^*} = k$ holds if $\sum_\ell A_{\ell\ell} \nl = k$, which is only possible if $A_{\ell \ell} \nl = 1$ for every $\ell$, and hence $A_{\ell j} = 0$ for $j\neq\ell$. Thus, $\tr{X^*-X'} = 0$ if and only if $X' = X^*$.
For every $X'\neq X^*$,  we have $\tr{X^*-X'} = \sum_\ell (1- A_{\ell\ell} \nl) > 0$.
We conclude the proof by proving \eqref{eqn_pf_sdp6}.
We compute
\begin{align*}
\tr{\St (X^* - X')} &= \tr{Z\Sigma Z^T Z(N^{-1}-A)Z^T}
\\&=  \tr{\Sigma Z^T Z(N^{-1}-A)Z^TZ}
\\&=  \tr{\Sigma N(N^{-1}-A)N}
\\&= \sum_{\ell=1}^k \Sigma_{\ell\ell} \nl(1-A_{\ell\ell}\nl) - \sum_{\ell=1}^k \sum_{j\neq \ell} A_{\ell j} \nj\nl \Sigma_{\ell j},
\end{align*}
where the third equality follows from the fact $Z^TZ = N$.
Recall from the definition of $\Delta_1$ that $\Sigma_{\ell j} \leq \frac12(\Sigma_{jj} + \Sigma_{\ell\ell}) - \Delta_1$. 
Using this, we can write
\begin{align*}
&\tr{\St (X^* - X')} 
\\&\geq \sum_{\ell=1}^k \Sigma_{\ell\ell} \nl(1-A_{\ell\ell}\nl) + \sum_{\ell=1}^k \sum_{j\neq \ell} A_{\ell j} \nj\nl \Delta_1 -  \frac12 \sum_{\ell=1}^k \sum_{j\neq \ell} A_{\ell j} \nj\nl (\Sigma_{\ell\ell} + \Sigma_{jj})
\\&= \sum_{\ell=1}^k \Sigma_{\ell\ell} \nl(1-A_{\ell\ell}\nl) + \sum_{\ell=1}^k \sum_{j\neq \ell} A_{\ell j} \nj\nl \Delta_1 -   \sum_{\ell=1}^k \sum_{j\neq \ell} A_{\ell j} \nj\nl \Sigma_{\ell\ell}
\\&=  \sum_{\ell=1}^k \Sigma_{\ell\ell} \nl(1-A_{\ell\ell}\nl) + \sum_{\ell=1}^k \nl \Delta_1 (1-A_{\ell\ell}\nl) -   \sum_{\ell=1}^k \nl \Sigma_{\ell\ell}  (1-A_{\ell\ell}\nl). 
\end{align*}
In the first equality, we exploit the symmetry of the third summation, while the second equality uses the row sum constraint to write $\sum_{j\neq \ell} A_{\ell j} \nj = 1 - A_{\ell\ell} \nl$.
Cancelling first and third terms, we get
\begin{align*}
\tr{\St (X^* - X')} &\geq \Delta_1 \sum_{\ell=1}^k n_\ell (1- A_{\ell \ell} \nl)
\\&\geq \Delta_1 n_{\min}  \sum_{\ell=1}^k (1- A_{\ell \ell} \nl) = n_{\min} \Delta_1 \tr{X^* - X'},
\end{align*}
which proves \eqref{eqn_pf_sdp6}, and completes the proof.

\subsection{Unique optimality of $X^*$ when $S=\St$}

We now prove that $X^*$ is the unique optimal solution when $S=\St = Z\Sigma Z^T$ and $0 < \lambda < n_{\min}\Delta_1$.
This claim does not immediately follow from Proposition \ref{prop_sdp}, but can be derived from the proof.

We first prove the optimality of $X^*$ in this case.
Recall, from the proof of Proposition \ref{prop_sdp}, that the claim hinges on showing that $\Gamma \geq 0$ and $\Lambda \succeq 0$.
From the previous proof, it suffices to show that $\Gamma_\jl \geq 0$ and $u^T\Lambda u \geq0$ for any $u$ that is orthogonal to the columns of $Z$.
To show that the latter holds, recall that
\[
u^T\Lambda u = \lambda \Vert u\Vert^2 - u^TSu.
\]
Since $S=\St = Z\Sigma Z^T$ and $Z^Tu =0$, we get $u^T\Lambda u = \lambda \Vert u\Vert^2 \geq0$, which in turn shows that $\Lambda \succeq0$ for all $\lambda>0$.
To verify the non-negativity of $\Gamma_\jl$, we observe that, in this case, it can be computed as
\begin{align*}
\Gamma_\jl 
&= -\frac{1}{\nj} \1_\nj \1_\nj^T\St_\jl - \frac{1}{\nl} \St_\jl\1_\nl \1_\nl^T + \frac{1}{n_j} \St_\jj \1_\nj\1_\nl^T +  \frac{1}{\nl} \1_\nj\1_\nl^T \St_{\setC_\ell\setC_\ell}
\\&\qquad\qquad~ + \left(\frac{\1_\nj^T \St_\jl \1_\nl}{\nj\nl} -  \frac{\lambda}{2n_j} - \frac{\1_\nj^T\St_\jj \1_\nj}{2n_j^2}  -  \frac{\lambda}{2\nl} - \frac{ \1_\nl^T\St_{\setC_\ell\setC_\ell} \1_\nl}{2n_\ell^2}\right) \1_\nj\1_\nl^T \;
\\
&= \left( - 2\Sigma_{j\ell} + \Sigma_{jj} + \Sigma_{\ell\ell} + \Sigma_{j\ell} - \frac{\lambda}{2}\left(\frac{1}{n_j} + \frac{1}{\nl}\right) - \frac{\Sigma_{jj} + \Sigma_{\ell\ell}}{2} \right)\1_\nj\1_\nl^T
\\
&\geq \left(\Delta_1 - \frac{\lambda}{n_{\min}}\right) \1_\nj\1_\nl^T
\end{align*}
So for $\lambda\leq n_{\min}\Delta_1$, $\Gamma_\jl \geq0$, and hence $\Gamma$ is non-negative.
Combining this with the previous proof of optimality, we derive that $X^*$ is an optimal solution in this case for $0< \lambda\leq n_{\min}\Delta_1$.

The proof of uniqueness is similar to the more general case in Proposition \ref{prop_sdp}. We use the previously derived claim that any optimal solution $X'$ must be of the form $X' = ZAZ^T$ for some $A\in\nsetR^{k\times k}$, and $X^* - X \succeq 0$. We have also previously shown that $\tr{\St(X^*-X')} \geq n_{\min}\Delta_1 \tr{X^*-X'}$.
Hence, we have
\[
\tr{\St X^*} - \lambda\tr{X^*} - \left(\tr{\St X'} - \lambda\tr{X'}\right)
\geq (n_{\min}\Delta_1 - \lambda)\tr{X^*-X'},
\]
which is strictly positive for $\lambda < n_{\min}\Delta_1$, and hence, $X^*$ is the unique optimal solution in this case.

\section{Proof of Theorem \ref{thm_adis}}
\label{app_thm_adis}

We prove the result for triplets and quadruplets in separate sections. 
While the proof structure is the same in both cases, the computations are quite different.
Before presenting the proofs, we list the key steps.

We first compute the expectation of the similarity matrix $S$ computed using \ref{eqn_adis3} or \ref{eqn_adis4}, and derive appropriate ideal matrices $\St$ in each case. In our proofs, $\St = \Eb[S]$, except differences in the diagonal entries since $S_{ii} = 0$ for all $i$.
From the block structure of $\St$, we can compute $\Delta_1$.

Subsequently, concentration inequalities are used to derive upper bounds on $\Vert S-\St\Vert_2$ and $\Delta_2$ in terms of the model parameters.
In this context, note that though the pairwise similarities $\{w_{ij} ~:~ i<j\}$ are independent, the entries of the matrix $S$ are highly dependent since each $w_{ij}$ appears in multiple entries of $S$.
Hence, to decouple such dependencies, we use a technique by \citet{janson2002infamous}, which considers the dependency graphs of the random variables and finds an equitable colouring to find independent sets of comparable sizes.
To the best of our knowledge, the present work is the first study which uses the equitable colouring approach of \citet{janson2002infamous} to derive spectral norm bounds. 
\citet{ghoshdastidar2019foundations} use this technique only to bound matrix entries.

Finally, we use concentration to show that for a sampling rate $p$ large enough, the number of comparisons  ($|\setQ|$ or $|\setT|$) is close to its expected value. Hence, we can replace the sampling rate $p$ in the previously derived bounds by the number of comparisons, leading to differences in constants only.

\textbf{Notation.} For the sake of simplicity, we will ignore absolute constants in the inequalities stated below, and use the notations $\lesssim$ and $\gtrsim$ to write inequalities that hold up to some multiplicative absolute constant.

\subsection{Quadruplet setting}

We first present the proof for the quadruplet setting using the aforementioned steps.

\textbf{Computation of $\Delta_1$.}
We first derive the expectation of the \ref{eqn_adis4} similarity matrix $S$, where for $i\neq j$,
\begin{equation}
\Eb[S_{ij}] = \sum_{r\neq s}  \Pb\big((i,j,r,s)\in\setQ\big) - \Pb\big((r,s,i,j)\in\setQ\big).
\label{eqn_expadis4}
\end{equation}

Note that the summation in \ref{eqn_adis4} is a sum over all distinct pairs $r,s$, noting that we do not count both $(s,r)$ and $(r,s)$ since they refer to the same comparison.
To compute the expectation of each term  in the summation, recall that the items belong to the planted clusters $\setC_1,\ldots,\setC_k$ and, for each item $i$, use $\psi_i \in [k]$ to denote the cluster index in which $i$ belongs, that is, $i \in \setC_{\psi_i}$. The expected values of the terms are given in Table \ref{tab_expadis4}.
\begin{table}[ht]
  \caption{Value of each term in the summation in \eqref{eqn_expadis4}, assuming $i\neq j$, $r\neq s$, $(i,j)\neq (r,s)$.}
  \label{tab_expadis4}
  \centering
  \begin{tabular}{cccc}
    \toprule
    Case     & $\Pb\big((i,j,r,s)\in\setQ\big)$   & $\Pb\big((r,s,i,j)\in\setQ\big)$ & Difference \\
    \midrule
    $\psi_i = \psi_j$; $\psi_r = \psi_s$ & $p/2$ & $p/2$ & 0 \\
    $\psi_i = \psi_j$; $\psi_r \neq \psi_s$ & $p(1+\epsilon\delta)/2$ & $p(1-\epsilon\delta)/2$ &$p\epsilon\delta$ \\
    $\psi_i \neq \psi_j$; $\psi_r = \psi_s$ & $p(1-\epsilon\delta)/2$ & $p(1+\epsilon\delta)/2$ &$-p\epsilon\delta$\\
    $\psi_i \neq \psi_j$; $\psi_r \neq \psi_s$ & $p/2$ & $p/2$ & 0 \\   
    \bottomrule
  \end{tabular}
\end{table}

We only explain the derivation of $\Pb\big((i,j,r,s)\in\setQ\big)$ for the case $\psi_i=\psi_j$ and $\psi_r\neq\psi_s$ as the other values are computed similarly. In this case,
\begin{align*}
\Pb\big((i,j,r,s)\in\setQ\big) &= \Pb\big((i,j,r,s)\in\setQ \big| (i,j),(r,s) \text{ compared}\big)\Pb\big((i,j),(r,s) \text{ compared}\big)
\\&= p \Pb\big((i,j,r,s)\in\setQ \big| (i,j),(r,s) \text{ compared}\big)
\\&= p\Big[  \Pb\big((i,j,r,s)\in\setQ \big| w_{ij} > w_{rs};~(i,j),(r,s) \text{ compared}\big) \Pb(w_{ij} > w_{rs})
\\&\qquad\quad + \Pb\big((i,j,r,s)\in\setQ \big| w_{ij} < w_{rs};~(i,j),(r,s) \text{ compared}\big) \Pb(w_{ij} < w_{rs}) \Big]
\\&= p \left[ \frac{(1+\epsilon)}{2}\frac{(1+\delta)}{2} + \frac{(1-\epsilon)}{2}\frac{(1-\delta)}{2} \right]
= p\frac{(1+\epsilon\delta)}{2} \;,
\end{align*}
where in each product, the term $\Pb(w_{ij}>w_{rs})$ is computed from \eqref{eqn_Fcondn}, and the other term, denoting flipped answers, follows from the quadruplet variant of \eqref{eqn_Errcond}.
Based on Table \ref{tab_expadis4}, we have for $i, j$ such that $\psi_i =  \psi_j$
\begin{align*}
\Eb[S_{ij}] &= \sum_{(r,s): \psi_r\neq\psi_s} \hskip-2ex p\epsilon\delta = p\epsilon\delta \sum_{\ell=1}^k \frac{\nl(n-\nl)}{2}
\end{align*} 
if $i\neq j$. For $i=j$, obviously $\Eb[S_{ij}] = 0$.  For $i, j$ such that $\psi_i\neq\psi_j$, we have
\begin{align*}
\Eb[S_{ij}] &= \sum_{(r,s): \psi_r=\psi_s}\hskip-2ex -p\epsilon\delta = -p\epsilon\delta \sum_{\ell=1}^k \binom{\nl}{2}.
\end{align*} 
Hence, we define the ideal similarity matrix as $\St_{ij} = \Eb[S_{ij}]$ for $i\neq j$, and $\St_{ii} = p\epsilon\delta \sum\limits_{\ell=1}^k \frac{\nl(n-\nl)}{2}$.
Observe that $\St = Z\Sigma Z^T$, where $Z \in \{0,1\}^{n\times k}$ is the assignment matrix for the planted clusters, and $\Sigma\in\nsetR^{k\times k}$ such that $\Sigma_{\ell\ell} = p\epsilon\delta \sum_{\ell} \frac{\nl(n-\nl)}{2}$ and $\Sigma_{\ell\ell'} = -p\epsilon\delta \sum_{\ell} \binom{\nl}{2}$ for $\ell\neq\ell'$.
Hence, in this case, we have 
\begin{equation}
\Delta_1 = p\epsilon\delta\binom{n}{2}.
\label{eqn_pf_adis4_Del1}
\end{equation}

\textbf{Preliminary computations and definitions for concentration.}
As noted earlier, $\St$ and $\Eb[S]$ are identical, except in the diagonal entries. 
Hence, we mainly have to obtain concentration of $f(S-\Eb[S])$, where $f$ is a non-negative scalar function.
In the case of $\Delta_2$, $f$ denotes the maximum partial row sum, whereas $f$ is the spectral norm in the bound for $\Vert S-\St\Vert_2$.
Define $\setW = \{w_{ij} : i<j\}$ as the collection of random pairwise similarities. We write
\[
S - \Eb[S] = \big( S - \Eb[S|\setW] \big) + \big( \Eb[S|\setW] - \Eb[S] \big),
\]
where the first difference accounts for randomness in sampling and crowd noise, while the second difference accounts for the inherent noise in $\setW$.
This helps in separately concentrating both terms, which have different dependence structures.
Formally, we perform the concentration of $f(S-\Eb[S])$ in the following way, assuming $f$ satisfies triangle inequality (which holds in the cases that we later consider).
\begin{align*}
\Pb\big( f(S-\Eb[S]) > t \big) 
&\leq  \Pb\big( f(S-\Eb[S|\setW]) + f(\Eb[S|\setW] - \Eb[S]) > t \big) 
\\&\leq  \Pb\big( f(S-\Eb[S|\setW])  > t/2 \big) +  \Pb\big( f(\Eb[S|\setW]-\Eb[S])  > t/2 \big)
\\&\leq \Eb_\setW \left[ \Pb_{\cdot|\setW}\big( f(S-\Eb[S|\setW]) > t/2 \big) \right] +  \Pb\big( f(\Eb[S|\setW]-\Eb[S])  > t/2 \big),
\end{align*}
where $\Pb_{\cdot|\setW}$ denotes the probability over sampling and crowd noise, but conditioned on $\setW$. In fact, we derive an uniform upper bound on the conditional probability, irrespective of $\setW$, and hence the expectation is trivially bounded.
To separately deal with the randomness in $\setW$ and the randomness due to sampling and crowd noise,
we write
\begin{align}
S_{ij} &= \sum_{r\neq s}  \big(\ind{(i,j,r,s)\in\setQ} - \ind{(r,s,i,j)\in\setQ}\big)
= \sum_{r<s} \xi_{ijrs} \big(\ind{w_{ij}>w_{rs}} - \ind{w_{ij}<w_{rs}}\big)
\label{eqn_adis4_formal}
\end{align}
where $\xi_{ijrs} \in \{-1,0,+1\}$ denotes whether the comparison between $(i,j)$ and $(r,s)$ is observed ($\xi_{ijrs}=0$ if not observed), and whether the crowd response was correct $(\xi_{ijrs}=+1)$ or flipped $(\xi_{ijrs}=-1)$.
Under our sampling and noise model,
\[
\Pb(\xi_{ijrs} = 0) = 1-p, \qquad   \Pb(\xi_{ijrs} = 1) =\frac{p(1+\epsilon)}{2}\;,  \qquad   \Pb(\xi_{ijrs} = -1) =\frac{p(1-\epsilon)}{2} 
\]
and so, $\Eb[\xi_{ijrs}] = p\epsilon$ and $\var{\xi_{ijrs}} \leq p$.
Note that the set  $\Xi=\{ \xi_{ijrs} ~:~ i<j, r<s, (i,j)<(r,s)\}$ is a collection of independent random variables. Here, $(i,j)<(r,s)$ denotes a lexicographic ordering of tuples since we do not care about the ordering between $(i,j)$ and $(r,s)$.


In addition, recall that $F_{in},F_{out}$ are continuous, and hence, with probability 1, any two pairwise similarities are distinct. 
Hence, we can write $\ind{w_{ij}>w_{rs}} - \ind{w_{ij}<w_{rs}} = 2\ind{w_{ij}>w_{rs}} - 1$.
It is noted that $\xi_{ijrs}$ is independent of $(2\ind{w_{ij}>w_{rs}} - 1)$, and furthermore, the latter variable is deterministic conditioned on $\setW$.
Based on this and using the notation of $\xi_{ijrs}$, we write
\begin{equation}
\begin{aligned}
&S_{ij} - \Eb[S_{ij}|\setW] = \sum_{r<s} B_{ijrs}, 
&&\text{where} \qquad
B_{ijrs} = (\xi_{ijrs} - p\epsilon)\big(2\ind{w_{ij}>w_{rs}} - 1\big)
\\
&\Eb[S_{ij}|\setW] - \Eb[S_{ij}] = \sum_{r<s} B'_{ijrs}, 
&&\text{where} \qquad
B'_{ijrs} = 2p\epsilon\big(\ind{w_{ij}>w_{rs}}- \Pb(w_{ij}>w_{rs}) \big).
\end{aligned}
\label{eqn_pf_adis4_B}
\end{equation}

We make the following observations about the collection of random variables $B_{ijrs},B'_{ijrs}$, which are crucial to the subsequent concentration results.
It is easy to see that $|B_{ijrs}| \leq 2$, $|B'_{ijrs}| \leq 2p\epsilon$ with probability 1, and $\Eb[B_{ijrs}] = \Eb[B'_{ijrs}] = 0$, $\var{B_{ijrs}} \leq p$ and $\var{B'_{ijrs}} \leq 4p^2\epsilon^2$.
Define the sets 
\begin{equation}
\begin{aligned}
\setB &= \{ B_{ijrs} ~:~ i<j, r<s, (i,j)\neq(r,s) \}, \\
\setB' &= \{ B'_{ijrs} ~:~ i<j, r<s, (i,j)\neq(r,s) \},\\
\setB_{i\ell} &= \{ B_{ijrs} ~:~ j\in\setC_\ell, j\neq i, r<s, (i,j)\neq(r,s) \} \qquad\text{for every } i\in[n],\ell\in[k], \\
\text{and} \qquad \setB'_{i\ell} &= \{ B'_{ijrs} ~:~ j\in\setC_\ell, j\neq i, r<s, (i,j)\neq(r,s) \} \qquad\text{for every } i\in[n],\ell\in[k].
\end{aligned}
\label{eqn_pf_adis4_setB}
\end{equation}
Each of $\setB$ and $\setB'$ have $\binom{n}{2}\left(\binom{n}{2}-1\right)$ random variables. 
$B_{ijrs} = - B_{rsij}$, but conditioned on $\setW$, $B_{ijrs}$ is independent of all other variables in $\setB$. 
Thus, a dependency graph on $\setB$, conditioned on $\setW$, has a maximum degree of 1.
On the other hand, $B'_{ijrs}$ depends on all the random variables of the form $B'_{ijr's'}, B'_{i'j'rs}, B'_{r's'ij}$ and $B'_{rsi'j'}$, and so, the dependence graph for $\setB'$ has degree smaller than $4\binom{n}{2} - 7$.
Similarly, $\setB_{i\ell}, \setB'_{i\ell}$ have at most $\nl\left(\binom{n}{2}-1\right)$ random variables.
While $\setB_{i\ell}$ has a dependency graph with degree at most 1, the dependency graph of $\setB'_{i\ell}$ has degree at most $\nl + \binom{n}{2} - 3$.
We now use the above discussion to derive upper bounds on $\Delta_2$ and $\Vert S-\St\Vert_2$.

\textbf{Upper bound for $\Delta_2$.}
To derive a bound on $\Delta_2$, we first note that
\begin{align*}
\Delta_2 \leq \max_{i\in[n]} \max_{\ell\in[k]} \left| \frac{1}{\nl} \sum_{j\in\setC_\ell} S_{ij} - \Eb[S_{ij}] \right| +  \frac{a_0}{n_{\min}}
\end{align*}
where $a_0 = \St_{ii}$ takes into account the fact that $\St$ and $\Eb[S]$ differ only in diagonal terms.
In the subsequent steps, we bound the first term.
For any $t>0$, the union bound leads to
\begin{align}
&\Pb\left( \max_{i\in[n]} \max_{\ell\in[k]} \left| \frac{1}{\nl} \sum_{j\in\setC_\ell} S_{ij} - \Eb[S_{ij}] \right| > t\right)
\nonumber
\\&\leq \sum_{i\in[n]} \sum_{\ell \in [k]} \Pb\left( \left| \sum_{j\in\setC_\ell} S_{ij} - \Eb[S_{ij}|\setW] + \Eb[S_{ij}|\setW] - \Eb[S_{ij}] \right| > \nl t\right).
\nonumber
\\&\leq \sum_{i\in[n]} \sum_{\ell \in [k]} \Eb_{\setW} \hskip-1ex\left[\Pb_{\cdot|\setW}\hskip-1ex\left( \left| \sum_{j\in\setC_\ell} S_{ij} - \Eb[S_{ij}|\setW] \right| > \frac{\nl t}{2}\right) \right] + \Pb\hskip-1ex\left( \left| \sum_{j\in\setC_\ell} \Eb[S_{ij}|\setW] - \Eb[S_{ij}]\right| > \nl t\right)
\nonumber
\\&\leq \sum_{i\in[n]} \sum_{\ell \in [k]} \Eb_{\setW} \left[\Pb_{\cdot|\setW}\left( \left| \sum_{j\in\setC_\ell} \sum_{r<s} B_{ijrs} \right| > \frac{\nl t}{2}\right) \right] + \Pb\left( \left| \sum_{j\in\setC_\ell} \sum_{r<s} B'_{ijrs}\right| > \frac{\nl t}{2}\right)
\label{eqn_pf_adis4_1}
\end{align}

For the probability conditioned on $\setW$, the summation involves terms in the set $\setB_{i\ell}$ in~\eqref{eqn_pf_adis4_setB}. 
The discussion on $\setB_{i\ell}$ shows that, conditioned on $\setW$, the summation is a sum of independent random variables $B_{ijrs}$ whose properties are stated after \eqref{eqn_pf_adis4_B}. 
Hence, we can apply Bernstein's inequality to bound the conditional probability as
\begin{align*}
\Pb_{\cdot|\setW}\left( \left| \sum_{j\in\setC_\ell} \sum_{r<s} B_{ijrs} \right| > \frac{\nl t}{2}\right)
&\leq 2\exp\left(-\frac{(\frac{\nl t}{2})^2}{2p\nl\binom{n}{2} + \frac23 2 \frac{\nl t}{2}}\right)
\\& \lesssim 2\exp\left(- \min\left\{ \frac{\nl t^2}{pn^2} \,, {\nl t} \right\} \right)
 \lesssim \frac{1}{n^3}
\end{align*}
for $t \gtrsim \displaystyle\max\left\{ \sqrt{\frac{p n^2 \ln n}{n_{\min}}} \,, \frac{\ln n}{n_{\min}} \right\}$.
Since the $\bigO{\frac{1}{n^3}}$ bound on the probability holds uniformly for all $\setW$, it also bounds the first term in \eqref{eqn_pf_adis4_1}.

For the second probability in \eqref{eqn_pf_adis4_1}, note that the $B'_{ijrs}$ in the summation are not independent, and we cannot directly apply Bernstein inequality. 
Hence, we apply the technique in \citet[Theorem 5]{janson2002infamous} which bounds the probability by partitioning the random variables in $\setB'_{i\ell}$ into independent sets. 
Since the dependency graph on $\setB'_{i\ell}$ has maximum degree $d = \nl + \binom{n}{2}-3$, we can obtain an equitable $(d+1)$-colouring, with each independent set of size $\lfloor |\setB'_{i\ell}|/(d+1) \rfloor$ or $\lceil |\setB'_{i\ell}|/(d+1) \rceil$, which are both smaller than $\nl$.
Denote the independent sets by $\setB'_{i\ell,(1)}, \ldots,\setB'_{i\ell,(d+1)}$, and we can apply Bernstein's inequality to bound the summation over each independent set. 
Hence, we  bound the second probability in \eqref{eqn_pf_adis4_1}, for every $i,\ell$, as
\begin{align*}
 \Pb&\left( \left| \sum_{j\in\setC_\ell} \sum_{r<s} B'_{ijrs} \right| > \frac{ \nl t}{2}\right)
 \\&\leq  \Pb\left( \max_{r \in \{1,\ldots, d+1\}} \left| \sum_{B'\in \setB'_{i\ell,(r)}} B' \right| > \frac{\nl t}{2(d+1)}\right)
 \\&\leq  \sum_{r=1}^{d+1} \Pb\left( \left| \sum_{B'\in \setB'_{i\ell,(r)}} B' \right| > \frac{\nl t}{2(d+1)}\right)
 & \text{(union bound)}
 \\&\leq  2(d+1) \exp\left( - \frac{(\frac{\nl t}{2(d+1)})^2}{2\sum\limits_{B'\in \setB'_{i\ell,(r)}} \var{B'} + \frac23 2p\epsilon \frac{\nl t}{2(d+1)}} \right)
 & \text{(Bernstein bound)}
 \\&\leq 2(d+1) \exp\left( - \frac{(\frac{\nl t}{d+1})^2}{8p^2\epsilon^2 \nl + \frac23 p\epsilon \frac{\nl t}{d+1}} \right)
\\&\lesssim n^2 \exp\left( - \min\left\{ \frac{\nl t^2}{p^2\epsilon^2 n^4}\,,  \frac{\nl t}{p\epsilon n^2} \right\} \right),
\end{align*}
which is $\bigO{\frac{1}{n^3}}$ for $t\gtrsim \displaystyle  p\epsilon n^2 \cdot  \max\left\{ \sqrt{\frac{\ln n}{n_{\min}}},\frac{\ln n}{n_{\min}} \right\}$.
The first term dominates since, under the condition on $\delta$, we have $n_{\min} \gtrsim \ln n$.
Thus, we conclude that, with probability $1-\frac{1}{4n}$,
\begin{align}
\Delta_2 &\leq \max_{i\in[n]} \max_{\ell\in[k]} \left| \frac{1}{\nl} \sum_{j\in\setC_\ell} S_{ij} - \Eb[S_{ij}] \right| +  \frac{a_0}{n_{\min}}
\nonumber
\\&\lesssim   \max\left\{ \sqrt{\frac{p n^2 \ln n}{n_{\min}}} \,, \frac{\ln n}{n_{\min}}\,, p\epsilon n^2  \sqrt{\frac{\ln n}{n_{\min}}} \,, \frac{p\epsilon \delta n^2}{n_{\min}} \right\} \;,
\label{eqn_pf_adis4_Del2}
\end{align}
 where the last term is obviously dominated by the third term.
 
\textbf{Upper bound for $\Vert S-\St\Vert_2$.}
Similar to the case of $\Delta_2$, we bound the spectral norm as
\[
\Vert S - \St \Vert_2 \leq \Vert S - \Eb[S|\setW] \Vert_2 + \Vert \Eb[S|\setW] - \Eb[S] \Vert_2 + \Vert \Eb[S] - \St \Vert_2,
\]
where the last term equals $a_0 = \St_{ii}$ since $\Eb[S] - \St$ is a diagonal matrix. 
For the first term, we derive a bound conditioned on $\setW$.
Recall from \eqref{eqn_pf_adis4_B}--\eqref{eqn_pf_adis4_setB} that, conditioned on $\setW$, the matrix $S-\Eb[S|\setW]$ comprises of variables in $\setB$, which has a dependence graph with degree 1.
We partition $\setB$ into two independent sets via equitable colouring, and write $S-\Eb[S|\setW] = A + A'$, where $A$ and $A'$ are the symmetric matrices corresponding to each of the independent sets.
We derive a spectral norm for each of $A$ and $A'$.
For this, we first claim that, conditioned on $\setW$,  
the event $\mathcal{E} = \left\{\max_{i,j} \big\{ |A_{ij}|, |A'_{ij}| \big\} \lesssim \max \left\{ \sqrt{pn^2 \ln n}, \ln n\right\}\right\}$ occurs with probability $1 - \bigO{\frac1n}$.
To see this, observe that $A_{ij}$ (or $A'_{ij}$) is a sum of at most $\binom{n}{2}$ independent random variables $B_{ijrs}$. 
By Bernstein inequality,
\[
\Pb_{\cdot|\setW} ( |A_{ij}| > \tau ) \leq 2\exp\left(-\frac{\tau^2}{2p\binom{n}{2} + \frac43\tau}\right)
\lesssim \exp\left(-\min\left\{\frac{\tau^2}{pn^2}, \tau\right\}\right)
\]
which is $\bigO{\frac{1}{n^3}}$ for $\tau \gtrsim \max \left\{ \sqrt{pn^2 \ln n}, \ln n\right\}$.
Applying the union bound gives $\Pb(\mathcal{E}^c) = \bigO{\frac1n}$.

Conditioned on $\setW$ and $\mathcal{E}$, the matrices $A,A'$ have independent zero mean entries, with each entry bounded by $ \bigO{\max \left\{ \sqrt{pn^2 \ln n}, \ln n\right\}}$.
Furthermore, from the variance of $B_{ijrs}$, we have $\max_i \sum_j \var{A_{ij}} < pn^3$, and same for $A'$.
Hence, by matrix Bernstein inequality \citep{tropp2012userfriendly}, 
\begin{align*}
\Pb_{\cdot| \setW,\mathcal{E}} \left(\Vert S - \Eb[S |\setW] \Vert_2 > t \right)
& \leq \Pb_{\cdot| \setW,\mathcal{E}} \left(\Vert A \Vert_2 > t/2 \right) + \Pb_{\cdot| \setW,\mathcal{E}} \left(\Vert A' \Vert_2 > t/2 \right)
\\&\leq 2n\exp\left(-\frac{t^2/4}{pn^3 + \frac13t\cdot\max \left\{ \sqrt{pn^2 \ln n}, \ln n\right\}}\right)
\\&\lesssim n \exp\left(-\min\left\{\frac{t^2}{pn^3},\frac{t}{\sqrt{pn^2 \ln n}},\frac{t}{\ln n}\right\}\right)
\lesssim \frac{1}{n}
\end{align*}
for $t\gtrsim \displaystyle \left\{ \sqrt{p n^3\ln n} \,, \sqrt{pn^2 (\ln n)^3} \,, (\ln n)^2 \right\}$, where the second term is smaller than the first for $n$ large enough.
Denote the complement of $\mathcal{E}$ by $\mathcal{E}^c$. For $t$ satisfying the stated condition,
\begin{align*}
\Pb&\left( \Vert S - \Eb[S|\setW] \Vert_2 > t \right)
\\&= \Eb_\setW \left[ \Pb_{\cdot|\setW}\left( \Vert S - \Eb[S|\setW] \Vert_2 > t \right) \right]
\\&= \Eb_\setW \left[ \Pb_{\cdot|\setW,\mathcal{E}}\left( \Vert S - \Eb[S|\setW] \Vert_2 > t \right) \Pb_{\cdot|\setW}(\mathcal{E}) + \Pb_{\cdot|\setW,\mathcal{E}^c}\left( \Vert S - \Eb[S|\setW] \Vert_2 > t \right) \Pb_{\cdot|\setW}(\mathcal{E}^c)\right]
\\&\lesssim  \Eb_\setW \left[ \Pb_{\cdot|\setW,\mathcal{E}}\left( \Vert S - \Eb[S|\setW] \Vert_2 > t \right)  +  \Pb_{\cdot|\setW}(\mathcal{E}^c)\right]
\\&\lesssim  \frac1n
\end{align*}
as each term in the expectation is $\bigO{\frac1n}$.
Thus, we have $\Vert S- \Eb[S|\setW]\Vert_2 \lesssim  \displaystyle \left\{ \sqrt{p n^3\ln n} \,,  (\ln n)^2 \right\}$.

To bound $\Vert \Eb[S|\setW] - \Eb[S]\Vert_2$, we note that the entries of the matrix comprises of mutually dependent variables in the set $\setB'$ defined in \eqref{eqn_pf_adis4_setB}.
We need to partition the entries into independent sets.
Since the dependency graph for $\setB'$ has maximum degree $d=4\left(\binom{n}{2}-1\right)$, we can partition $\setB'$ into $d+1$ independent sets of nearly identical sizes (equitable colouring).
Let $\Eb[S|\setW] - \Eb[S] = A^{(1)} + \ldots + A^{(d+1)}$ denote the corresponding partition of the matrix, where $A^{(\ell)}\in\nsetR^{n\times n}$ is a symmetric matrix, consisting of the variables in the $\ell$-th independent set.
Due to independence of variables, we have $A^{(\ell)}_{ij} = B'_{ijrs}$ for some $r,s$, and hence, we can conclude that each $A^{(\ell)}$ is a symmetric matrix with independent zero-mean entries, bounded by $2p\epsilon$ and variance at most $4p^2\epsilon^2$ (follows from properties of $B'_{ijrs}$).  
Thus, by matrix Bernstein inequality \citep{tropp2012userfriendly}, we have
\[
\Pb\left( \Vert A^{(\ell)}\Vert_2  > \tau \right) \leq n \exp\left(-\frac{t^2}{8p^2\epsilon^2 n + \frac23 p\epsilon t}\right),
\]
and combining with the union bound,
\begin{align*}
\Pb\left( \Vert \Eb[S|\setW] - \Eb[S] \Vert_2  > t \right) &\leq \Pb\left( \max_{\ell \in [d+1]} \Vert A^{(\ell)}\Vert_2  > \frac{t}{d+1} \right) 
\\&\leq n(d+1) \exp\left(-\frac{(\frac{t}{d+1})^2}{8p^2\epsilon^2 n + \frac23 p\epsilon\frac{t}{d+1}}\right)
\\&\lesssim n^3 \exp\left(- \min\left\{ \frac{t^2}{p^2\epsilon^2 n^5} \,, \frac{t}{p\epsilon n^2} \right\} \right),
\end{align*}
which is $\bigO{\frac1n}$ for $t\gtrsim \displaystyle p\epsilon n^2 \cdot \max\left\{ \sqrt{n\ln n},\ln n \right\}$, where the first term obviously dominates.
Combining the above derivations, we have with probability $1-\frac{1}{4n}$,
\begin{align}
\Vert S-\St\Vert_2 
\lesssim  \max\left\{ \sqrt{p n^3\ln n} \,,  (\ln n)^2\,, p\epsilon n^2 \sqrt{n \ln n} \,, p \epsilon \delta n^2\right\}
\label{eqn_pf_adis4_spec}
\end{align}
where the last term (arising due to $a_0$) is dominated by the third.

\textbf{Deriving interval for $\lambda$ in terms of $|\setQ|$.}
We now use \eqref{eqn_pf_adis4_Del1}, \eqref{eqn_pf_adis4_Del2} and \eqref{eqn_pf_adis4_spec} to complete the proof for the quadruplet setting.
To this end, our main objective is to verify the conditions in Proposition \ref{prop_sdp}:
\[
\frac{\Delta_1}{2} < \Delta_1 - 6\Delta_2, \text{ that is, } \Delta_2 < \frac{\Delta_1}{12}
\qquad \text{and} \qquad
\Vert S - \St\Vert_2 < \frac{n_{\min} \Delta_1}{2}.
\]
Using \eqref{eqn_pf_adis4_Del1} and the bound in \eqref{eqn_pf_adis4_Del2}, we observe that $\Delta_2 \lesssim \Delta_1$ if
\[
p \gtrsim \max\left\{\frac{\ln n}{\epsilon^2\delta^2 n^2 n_{\min}} \,, \frac{\ln n}{\epsilon \delta n^2 n_{\min}} \right\} \qquad \text{and} \qquad
\delta \gtrsim \sqrt{\frac{\ln n}{n_{\min}}},
\]
where the condition on $\delta$ arises due to the third term in the bound in \eqref{eqn_pf_adis4_Del2}. 
Similarly, comparing the bound in \eqref{eqn_pf_adis4_spec} to \eqref{eqn_pf_adis4_Del1}, we get that $\Vert S-\St\Vert_2 \lesssim n_{\min} \Delta_1$ if 
\[
p \gtrsim \max\left\{\frac{\ln n}{\epsilon^2\delta^2 n n_{\min}^2} \,, \frac{(\ln n)^2}{\epsilon \delta n^2 n_{\min}} \right\} \qquad \text{and} \qquad
\delta \gtrsim \frac{\sqrt{n\ln n}}{n_{\min}},
\]
where the condition on $\delta$ arises from the third bound in \eqref{eqn_pf_adis4_spec}.
Combining the above cases, we conclude that if 
\begin{equation}
\label{eqn_pf_adis4_2}
\delta \gtrsim \frac{\sqrt{n \ln n}}{n_{\min}} 
\qquad \text{and} \qquad 
p \gtrsim \frac{(\ln n)^2}{\epsilon^2 \delta^2 n n_{\min}^2} \;,
\end{equation}
then the criteria for $\Delta_2$ and $\Vert S-\St\Vert_2$ are satisfied, and by Proposition \ref{prop_sdp}, $X^*$ is the unique optimal solution for \ref{eqn_sdp} with the range of $\lambda$ given by
\begin{equation}
\label{eqn_pf_adis4_3}
\Vert S-\St \Vert_2 ~\lesssim~ \max\left\{\sqrt{p n^3 \ln n}\,, p\epsilon \sqrt{n^5\ln n}\,,(\ln n)^2\right\} ~\lesssim~ \lambda ~<~ \frac{p\epsilon \delta n_{\min}}{2}\binom{n}{2} ~=~ \frac{\Delta_1}{2} \;.
\end{equation}

We finally show that the condition on $p$ holds under the stated condition of $|\setQ|  \gtrsim \displaystyle \frac{n^3 (\ln n)^2}{\epsilon^2\delta^2 n_{\min}^2}$, and state the above interval for $\lambda$ in terms of $|\setQ|$. Under the assumption that each quadruplet is observed independently with probability $p$, we have that $\Eb[|\setQ|] = p \binom{\binom{n}{2}}{2} = O(pn^4)$.
By Bernstein inequality, it is easy to verify that for $p \gtrsim \frac{\ln n}{n^4}$ or equivalently $|\setQ| \gtrsim \ln n$, we have $|\setQ| \in \left(\frac12 \Eb[|\setQ|], \frac32\Eb[|\setQ|] \right)$ with probability $1-\bigO{\frac1n}$.
Hence, we can replace $p$ by $\frac{|\setQ|}{n^4}$ in \eqref{eqn_pf_adis4_2}--\eqref{eqn_pf_adis4_3} up to difference in constants,
which leads to the statement of Theorem \ref{thm_adis} in the quadruplet setting.

\subsection{Triplet setting}

The proof structure in the triplet case is similar to that of the quadruplet setting.
We derive an appropriate ideal matrix $\St$, where $\St = \Eb[S]$, except for some differences in the diagonal entries since $S_{ii} = 0$ for all $i$.
From the block structure of $\St$, we can compute $\Delta_1$.
Subsequently, concentration inequalities are used to derive upper bounds on $\Vert S-\St\Vert_2$ and $\Delta_2$ in terms of the model parameters.
As done in the analysis of \ref{eqn_adis4}, we let $\setW$ denote the collection of random pairwise similarities, and decompose 
\[
S-\St = \big(S - \Eb[S|\setW]\big) + \big(\Eb[S|\setW] - \Eb[S] \big) + \big(\Eb[S] - \St\big).
\]
The last term is easy to tackle, and we use separate concentration for the first two terms both in the context of $\Delta_2$ and the spectral norm.
Bounds on these terms, combined with Proposition \ref{prop_sdp}, provide sufficient conditions on $\delta$ and sampling rate $p$ such that exact recovery occurs.
Finally, we show that for $p$ large enough, the number of triplets  $|\setT|$ is close to its expected value $pn\binom{n-1}{2}$, and state the conditions in terms of $|\setT|$.

\textbf{Computation of $\Delta_1$.}
The expectation of the \ref{eqn_adis3} similarity $S_{ij}$, for $i\neq j$, is given by
\begin{equation}
\Eb[S_{ij}] = \sum_{r\neq i,j}  \Pb\big((i,j,r)\in\setT\big) - \Pb\big((i,r,j)\in\setT\big) + \Pb\big((j,i,r)\in\setT\big) - \Pb\big((j,r,i)\in\setT\big).
\label{eqn_expadis3}
\end{equation}
We now compute each term in the summation using the notation $\psi_i \in [k]$ to indicate $i \in \setC_{\psi_i}$. The expected values of the terms are given in Table \ref{tab_expadis3}, where the last column represents the overall term for each $r\neq i,j$ in \eqref{eqn_expadis3}.
The derivation for these values is identical to the one in the quadruplet setting.

\begin{table}[ht]
  \caption{Value of each term in the summation in \eqref{eqn_expadis3}, assuming $i,j,r$ are distinct.}
  \label{tab_expadis3}
  \centering
  \begin{tabular}{@{}c@{}c@{}c@{}c@{}c@{}c@{}}
    \toprule
    Case     & $\;\Pb\big((i,j,r)\in\setT\big)\;$ & $\;\Pb\big((i,r,j)\in\setT\big)\;$ & $\;\Pb\big((j,i,r)\in\setT\big)\;$ & $\;\Pb\big((j,r,i)\in\setT\big)\;$ & \;Aggregate\; \\
    \midrule
    $\psi_i = \psi_j = \psi_r$ & $p/2$ & $p/2$ & $p/2$ & $p/2$ & 0 \\
    $\psi_i = \psi_j \neq \psi_r$ & $p(1+\epsilon\delta)/2$ & $p(1-\epsilon\delta)/2$ & $p(1+\epsilon\delta)/2$ & $p(1-\epsilon\delta)/2$ & $2p\epsilon\delta$ \\
    $\psi_i \neq \psi_j = \psi_r$ & $p/2$ & $p/2$ & $p(1-\epsilon\delta)/2$ & $p(1+\epsilon\delta)/2$ & $-p\epsilon\delta$\\
    $\psi_i = \psi_r \neq \psi_j$ & $p(1-\epsilon\delta)/2$ & $p(1+\epsilon\delta)/2$ & $p/2$ & $p/2$ & $-p\epsilon\delta$\\
    $\psi_i \neq \psi_j \neq \psi_r$ & $p/2$ & $p/2$ & $p/2$ & $p/2$ & 0 \\
    \bottomrule
  \end{tabular}
\end{table}

Based on Table \ref{tab_expadis3}, we can infer that for $i\neq j$ such that $\psi_i =  \psi_j$, 
\begin{align*}
\Eb[S_{ij}] &= \sum_{r \notin \setC_{\psi_i}}  2p\epsilon\delta = 2p\epsilon\delta (n - n_{\psi_i})
\end{align*} 
For $i, j$ such that $\psi_i\neq\psi_j$, we have
\begin{align*}
\Eb[S_{ij}] &= \sum_{r \in \setC_{\psi_i}, r\neq i}\hskip-2ex (-p\epsilon\delta) +   \sum_{r \in \setC_{\psi_j}, r\neq j}\hskip-2ex (-p\epsilon\delta) = -p\epsilon\delta (n_{\psi_i} + n_{\psi_j} - 2).
\end{align*} 
Hence, we define the ideal similarity matrix as $\St_{ij} = Z\Sigma Z^T$, where $\Sigma_{\ell\ell} = 2p\epsilon\delta (n-\nl)$ and $\Sigma_{\ell\ell'} = -p\epsilon\delta (\nl + n_{\ell'} - 2)$ for $\ell\neq\ell'$, and we can compute 
\begin{equation}
\Delta_1 
=  p\epsilon\delta (n-2).
\label{eqn_pf_adis3_Del1}
\end{equation}

\textbf{Preliminary computations and definitions for concentration.}
We define $\setW = \{w_{ij} : i<j\}$ as the collection of random pairwise similarities, and split the concentration of $\Delta_2$ and $\Vert S-\St\Vert_2$ into terms involving $S-\Eb[S|\setW]$ and $\Eb[S|\setW] - \Eb[S]$.
The basic idea is discussed in the corresponding part of the quadruplet setting, and here, we introduce the key random variables.
We first write
\begin{align}
S_{ij} &= \sum_{r\neq i,j}  \big(\ind{(i,j,r)\in\setT} - \ind{(i,r,j)\in\setT}\big) +  \big(\ind{(j,i,r)\in\setT} - \ind{(j,r,i)\in\setT}\big)
\nonumber
\\&= \sum_{r\neq i,j} \xi_{ijr} \big(\ind{w_{ij}>w_{ir}} - \ind{w_{ij}<w_{ir}}\big) +  \xi_{jir} \big(\ind{w_{ji}>w_{jr}} - \ind{w_{ji}<w_{jr}}\big)
\label{eqn_adis3_formal}
\end{align}
where $\xi_{ijr} \in \{-1,0,+1\}$ denotes whether the comparison between $(i,j)$ and $(i,r)$ is observed ($\xi_{ijr}=0$ if not observed), and whether the crowd response was correct $(\xi_{ijr}=+1)$ or flipped $(\xi_{ijr}=-1)$.
Under our sampling and noise model,
\[
\Pb(\xi_{ijr} = 0) = 1-p, \qquad   \Pb(\xi_{ijr} = 1) =\frac{p(1+\epsilon)}{2}\;,  \qquad   \Pb(\xi_{ijr} = -1) =\frac{p(1-\epsilon)}{2} 
\]
and so, $\Eb[\xi_{ijr}] = p\epsilon$ and $\var{\xi_{ijr}} \leq p$.
The set  $\Xi=\{ \xi_{ijr} ~:~  j<r, i \neq j,r\}$ denotes the collection of such random variables, where we abuse notation by using $\xi_{ijr}$ and $\xi_{irj}$ to refer to the same variable. 
We note that the variables in $\Xi$ are mutually independent.

We use the continuous nature of $F_{in},F_{out}$ to write $\ind{w_{ij}>w_{ir}} - \ind{w_{ij}<w_{ir}} = 2\ind{w_{ij}>w_{ir}} - 1$, and further define
\begin{equation}
\begin{aligned}
&S_{ij} - \Eb[S_{ij}|\setW] = \sum_{r\neq i,j} B_{ijr} + B_{jir}
&&\text{with} \quad
B_{ijr} = (\xi_{ijr} - p\epsilon)\big(2\ind{w_{ij}>w_{ir}} - 1\big),
\\
&\Eb[S_{ij}|\setW] - \Eb[S_{ij}] = \sum_{r\neq i,j} B'_{ijr} + B'_{jir}
&&\text{with} \quad
B'_{ijr} = 2p\epsilon\big(\ind{w_{ij}>w_{ir}}- \Pb(w_{ij}>w_{ir}) \big).
\end{aligned}
\label{eqn_pf_adis3_B}
\end{equation}
The random variables $B_{ijr},B'_{ijr}$ have the following properties: $|B_{ijr}| \leq 2$, $|B'_{ijr}| \leq 2p\epsilon$ with probability 1, $\Eb[B_{ijr}] = \Eb[B'_{ijr}] = 0$, $\var{B_{ijr}} \leq p$ and $\var{B'_{ijr}} \leq 4p^2\epsilon^2$.
We define the sets 
\begin{equation}
\begin{aligned}
\setB &= \{ B_{ijr} ~:~ j\neq i, r\neq i,j \}, \\
\setB' &= \{ B'_{ijr} ~:~ j\neq i, r\neq i,j  \},\\
\setB_{i\ell} &= \{ B_{ijr}, B_{jir} ~:~ j\in\setC_\ell, r\neq i,j \} \qquad\text{for every } i\in[n],\ell\in[k], \\
\text{and} \qquad \setB'_{i\ell} &= \{ B'_{ijr}, B'_{jir} ~:~ j\in\setC_\ell, r\neq i,j  \} \qquad\text{for every } i\in[n],\ell\in[k].
\end{aligned}
\label{eqn_pf_adis3_setB}
\end{equation}
Each of $\setB$ and $\setB'$ have $n(n-1)(n-2)$ random variables, whereas $\setB_{i\ell}, \setB'_{i\ell}$ have at most $2\nl\left(n-2\right)$ random variables.
Note that $B_{ijr} = - B_{irj}$, but conditioned on $\setW$, $B_{ijr}$ is independent of all other variables in $\setB$. 
Thus, a dependency graph on $\setB$, conditioned on $\setW$, has a maximum degree of 1.
The same is also true for $\setB'_{i\ell}$.
On the other hand, $B'_{ijr}$ depends on the random variables that involve either $w_{ij}$ or $w_{ir}$, that is $B'_{irj}, B'_{jir}, B'_{jri}, B'_{rij}, B'_{rji}$, as well as all six variants of variables $B'_{ijr'}$ and $B'_{ij'r}$, $j',r' \notin\{i,j,r\}$. Thus each $B'_{ijr}$ depends on at most $5 + 12(n-3) = \bigO{n}$ variables in $\setB'$.
The same holds when we restrict the set to $\setB'_{i\ell}$.
Thus, the dependency graph for $\setB'$ and $\setB'_{i\ell}$ have dependency graph with $\bigO{n}$ maximum degree.
We now use the above defined random variables and their properties to derive upper bounds on $\Delta_2$ and $\Vert S-\St\Vert_2$.

\textbf{Upper bound for $\Delta_2$.}
To derive a bound on $\Delta_2$, we first note that
\begin{align*}
\Delta_2 \leq \max_{i\in[n]} \max_{\ell\in[k]} \left| \frac{1}{\nl} \sum_{j\in\setC_\ell} S_{ij} - \Eb[S_{ij} | \setW] \right| +  \max_{i\in[n]} \max_{\ell\in[k]} \left| \frac{1}{\nl} \sum_{j\in\setC_\ell} \Eb[S_{ij} | \setW] - \Eb[S_{ij}]\right| + \max_{i\in [n]} \frac{\St_{ii}}{n_{\min}}.
\end{align*}
In the subsequent steps, we bound the first term.
For any $t>0$, the union bound leads to
\begin{align*}
\Pb&\left( \max_{i\in[n]} \max_{\ell\in[k]} \left| \frac{1}{\nl} \sum_{j\in\setC_\ell} S_{ij} - \Eb[S_{ij}|\setW] \right| > t\right)
\nonumber
\\&\leq \sum_{i\in[n]} \sum_{\ell \in [k]} \Pb\left( \left| \sum_{j\in\setC_\ell} S_{ij} - \Eb[S_{ij}|\setW] \right| > \nl t\right)
\nonumber
\\&= \sum_{i\in[n]} \sum_{\ell \in [k]} \Eb_{\setW} \left[\Pb_{\cdot|\setW}\left( \left| \sum_{j\in\setC_\ell} \sum_{r\neq i,j} B_{ijr} + B_{jir} \right| >\nl t\right) \right] 
\end{align*}
The summation inside the conditional probability involves terms in $\setB_{i\ell}$ defined in~\eqref{eqn_pf_adis3_setB}, and the previous discussions show that the dependency graph of $\setB_{i\ell}$ has maximum degree of 1.
Hence, we can split the $2\nl(n-2)$ variables in $\setB_{i\ell}$ into two independent sets, say $\setB_{i\ell,(1)}$ and $\setB_{i\ell,(2)}$, and derive concentration for each of them separately using Bernstein inequality in the following way.
\begin{align*}
\Pb&\left( \max_{i\in[n]} \max_{\ell\in[k]} \left| \frac{1}{\nl} \sum_{j\in\setC_\ell} S_{ij} - \Eb[S_{ij}|\setW] \right| > t\right)
\\&\leq \sum_{i\in[n]} \sum_{\ell \in [k]} \Eb_{\setW} \left[\Pb_{\cdot|\setW}\left( \left| \sum_{B\in \setB_{i\ell,(1)}} B \right| >\frac{\nl t}{2}\right) + [\Pb_{\cdot|\setW}\left( \left| \sum_{B\in \setB_{i\ell,(2)}} B \right| >\frac{\nl t}{2}\right) \right] 
\\&\leq \sum_{i\in[n]} \sum_{\ell \in [k]}   \Eb_{\setW} \left[ 2\exp\left(-\frac{ (\nl t/2)^2}{2p|\setB_{i\ell,(1)}| + 2\nl t / 3}\right) + 2\exp\left(-\frac{ (\nl t/2)^2}{2p|\setB_{i\ell,(2)}| + 2\nl t / 3}\right) \right]
\\&\lesssim n^2 \exp\left(- \min\left\{ \frac{n_{\min} t^2}{pn} \,, n_{\min} t \right\}\right),
\end{align*}
where the last step follows by noting that each of the two independent sets have $\bigOmega{n\nl}$ variables, and the bounds are independent of $\setW$.
The above probability is $\bigO{\frac1n}$ for $t\gtrsim \displaystyle \max\left\{ \sqrt{\frac{pn\ln n}{n_{\min}}}\,, \frac{\ln n}{n_{\min}}\right\}$.

For the second term in the upper bound for $\Delta_2$, we have
\begin{align*}
\Pb\hskip-0.5ex\left( \max_{i\in[n]} \max_{\ell\in[k]} \left| \frac{1}{\nl} \sum_{j\in\setC_\ell} \Eb[S_{ij}|\setW] - \Eb[S_{ij}] \right| > t\right)
\nonumber
\leq \sum_{i\in[n]} \sum_{\ell \in [k]} \Pb\hskip-0.5ex\left( \left| \sum_{j\in\setC_\ell} \sum_{r\neq i,j} B'_{ijr} + B'_{jir} \right| > \nl t\right)
\end{align*}
where the tail bound is for the sum of all random variables in $\setB'_{i\ell}$.
Since the dependency graph on $\setB'_{i\ell}$ has maximum degree $d = \bigO{n}$, we can obtain an equitable $(d+1)$-colouring with each independent set of size $\lfloor |\setB'_{i\ell}|/(d+1) \rfloor$ or $\lceil |\setB'_{i\ell}|/(d+1) \rceil$, which are smaller than $\nl$.
We denote the independent sets by $\setB'_{i\ell,(1)}, \ldots,\setB'_{i\ell,(d+1)}$, and use Bernstein inequality to bound the summation over each independent set. 
Hence, we bound the probability for every $i,\ell$, as
\begin{align*}
 \Pb\left( \left| \sum_{j\in\setC_\ell} \sum_{r\neq i,j} B'_{ijr} + B'_{jir} \right| > \nl t\right)
 &\leq  \Pb\left( \max_{r \in \{1,\ldots, d+1\}} \left| \sum_{B'\in \setB'_{i\ell,(r)}} B' \right| > \frac{\nl t}{(d+1)}\right)
 \\&\leq  \sum_{r=1}^{d+1} \Pb\left( \left| \sum_{B'\in \setB'_{i\ell,(r)}} B' \right| > \frac{\nl t}{(d+1)}\right)
 \\&\leq  2(d+1) \exp\left( - \frac{(\frac{\nl t}{(d+1)})^2}{8p^2\epsilon^2|\setB'_{i\ell,(r)}| + \frac43 p\epsilon \frac{\nl t}{(d+1)}} \right)
\\&\lesssim n \exp\left( - \min\left\{ \frac{\nl t^2}{p^2\epsilon^2 n^2}\,,  \frac{\nl t}{p\epsilon n} \right\} \right),
\end{align*}
which is $\bigO{\frac{1}{n^3}}$ for $t\gtrsim \displaystyle  p\epsilon n \cdot  \max\left\{ \sqrt{\frac{\ln n}{n_{\min}}},\frac{\ln n}{n_{\min}} \right\}$.
The first term dominates for $n_{\min} \gtrsim \ln n$, which arises due to the condition on $\delta$.
Combining the above discussions we claim that, with probability $1-\frac{1}{4n}$,
\begin{align}
\Delta_2 
&\lesssim   \max\left\{ \sqrt{\frac{p n \ln n}{n_{\min}}} \,, \frac{\ln n}{n_{\min}}\,, p\epsilon n  \sqrt{\frac{\ln n}{n_{\min}}} \,, \frac{p\epsilon \delta n}{n_{\min}} \right\} \;,
\label{eqn_pf_adis3_Del2}
\end{align}
 where the last term is obviously dominated by the third term.
 
\textbf{Upper bound for $\Vert S-\St\Vert_2$.}
Similar to the case of $\Delta_2$, we bound
\[
\Vert S - \St \Vert_2 \leq \Vert S - \Eb[S|\setW] \Vert_2 + \Vert \Eb[S|\setW] - \Eb[S] \Vert_2 + \Vert \Eb[S] - \St \Vert_2,
\]
where the last term equals $\max_i \St_{ii}$. 
For the first term, we derive a bound conditioned on $\setW$.
Recall from \eqref{eqn_pf_adis3_B}--\eqref{eqn_pf_adis3_setB} that, conditioned on $\setW$, the matrix $S-\Eb[S|\setW]$ comprises of variables in $\setB$, which has a dependence graph with degree 1.
We partition $\setB$ into two independent sets via equitable colouring, and write $S-\Eb[S|\setW] = A + A'$, where $A$ and $A'$ are the matrices corresponding to each of the independent sets.
We derive a spectral norm for each of $A$ and $A'$.
For this, we first claim that, conditioned on $\setW$,  
the event $\mathcal{E} = \left\{\max_{i,j} \big\{ |A_{ij}|, |A'_{ij}| \big\} \lesssim \max \left\{ \sqrt{pn \ln n}, \ln n\right\}\right\}$ occurs with probability $1 - \bigO{\frac1n}$.
To see this, observe that $A_{ij}$ (or $A'_{ij}$) is a sum of $2(n-2)$ independent random variables $B_{ijr}, B_{jir}$. 
By Bernstein inequality,
\[
\Pb_{\cdot|\setW} ( |A_{ij}| > \tau ) \leq 2\exp\left(-\frac{\tau^2}{4p(n-2) + \frac43\tau}\right)
\lesssim \exp\left(-\min\left\{\frac{\tau^2}{pn}, \tau\right\}\right)
\]
which is $\bigO{\frac{1}{n^3}}$ for $\tau \gtrsim \max \left\{ \sqrt{pn \ln n}, \ln n\right\}$.
Applying the union bound gives $\Pb(\mathcal{E}^c) = \bigO{\frac1n}$.

Conditioned on $\setW$ and $\mathcal{E}$, the matrices $A,A'$ have independent zero mean entries, with each entry bounded by $ \bigO{\max \left\{ \sqrt{pn \ln n}, \ln n\right\}}$.
Furthermore, from the variance of $B_{ijr}$, we have $\max_i \sum_j \var{A_{ij}} < 2pn^2$, and the same holds for $A'$.
Hence, by matrix Bernstein inequality \citep{tropp2012userfriendly}, 
\begin{align*}
\Pb_{\cdot| \setW,\mathcal{E}} \left(\Vert S - \Eb[S |\setW] \Vert_2 > t \right)
& \leq \Pb_{\cdot| \setW,\mathcal{E}} \left(\Vert A \Vert_2 > t/2 \right) + \Pb_{\cdot| \setW,\mathcal{E}} \left(\Vert A' \Vert_2 > t/2 \right)
\\&\leq 2n\exp\left(-\frac{t^2/4}{pn^2 + \frac13t\cdot\max \left\{ \sqrt{pn \ln n}, \ln n\right\}}\right)
\\&\lesssim n \exp\left(-\min\left\{\frac{t^2}{pn^2},\frac{t}{\sqrt{pn \ln n}},\frac{t}{\ln n}\right\}\right)
\lesssim \frac{1}{n}
\end{align*}
for $t\gtrsim \displaystyle \left\{ \sqrt{p n^2 \ln n} \,, \sqrt{pn (\ln n)^3} \,, (\ln n)^2 \right\}$, where the second term is smaller than the first for $n$ large enough.
As in the quadruplet setting, we add the probability $\Pb(\mathcal{E}^c)$ and take expectation over $\setW$ to obtain $\Vert S- \Eb[S|\setW]\Vert_2 \lesssim  \displaystyle \left\{ \sqrt{p n^2\ln n} \,,  (\ln n)^2 \right\}$ with probability $1-\bigO{\frac1n}$.

To bound $\Vert \Eb[S|\setW] - \Eb[S]\Vert_2$, we note that the entries of the matrix comprises of mutually dependent variables in the set $\setB'$ defined in \eqref{eqn_pf_adis3_setB}.
Since the dependency graph for $\setB'$ has maximum degree $d=\bigO{n}$, we partition $\setB'$ into $d+1$ independent sets of nearly identical sizes (equitable colouring).
Let $\Eb[S|\setW] - \Eb[S] = A^{(1)} + \ldots + A^{(d+1)}$ denote the corresponding partition of the matrix, where $A^{(\ell)}\in\nsetR^{n\times n}$ is a symmetric matrix consisting of the variables in the $\ell$-th independent set.
Due to the independence of the variables, we have $A^{(\ell)}_{ij} = A^{(\ell)}_{ji} = B'_{ijr}$ or $B'_{jir}$ for some $r\neq i,j$.
Hence, each $A^{(\ell)}$ is a symmetric matrix with independent zero-mean entries, bounded by $2p\epsilon$ and variance at most $4p^2\epsilon^2$ (follows from properties of $B'_{ijr}$).  
Thus, by matrix Bernstein inequality \citep{tropp2012userfriendly}, we have
\[
\Pb\left( \Vert A^{(\ell)}\Vert_2  > \tau \right) \leq n \exp\left(-\frac{t^2}{8p^2\epsilon^2 n + \frac23 p\epsilon t}\right),
\]
and combining with the union bound,
\begin{align*}
\Pb\left( \Vert \Eb[S|\setW] - \Eb[S] \Vert_2  > t \right) &\leq \Pb\left( \max_{\ell \in [d+1]} \Vert A^{(\ell)}\Vert_2  > \frac{t}{d+1} \right) 
\\&\leq n(d+1) \exp\left(-\frac{(\frac{t}{d+1})^2}{8p^2\epsilon^2 n + \frac23 p\epsilon\frac{t}{d+1}}\right)
\\&\lesssim n^2 \exp\left(- \min\left\{ \frac{t^2}{p^2\epsilon^2 n^3} \,, \frac{t}{p\epsilon n} \right\} \right),
\end{align*}
which is $\bigO{\frac1n}$ for $t\gtrsim \displaystyle p\epsilon n \cdot \max\left\{ \sqrt{n\ln n},\ln n \right\}$, where the first term obviously dominates.
Combining the above derivations, we have with probability $1-\frac{1}{4n}$,
\begin{align}
\Vert S-\St\Vert_2 
\lesssim  \max\left\{ \sqrt{p n^2\ln n} \,,  (\ln n)^2\,, p\epsilon n \sqrt{n \ln n} \,, p \epsilon \delta n\right\}
\label{eqn_pf_adis3_spec}
\end{align}
where the last term (arising due to $\max_i \St_{ii}$) is dominated by the third.

\textbf{Deriving interval for $\lambda$ in terms of $|\setT|$.}
We now use \eqref{eqn_pf_adis3_Del1}, \eqref{eqn_pf_adis3_Del2} and \eqref{eqn_pf_adis3_spec} to complete the proof for the triplet setting.
We verify the conditions in Proposition \ref{prop_sdp} by deriving conditions under which $\Delta_2 < \frac{1}{12} \Delta_1$ and $\Vert S-\St\Vert_2 < \frac12 n_{\min}\Delta_1$.
Similar to the proof for the quadruplet setting, we compare the upper bounds in \eqref{eqn_pf_adis4_Del2} and \eqref{eqn_pf_adis4_spec} with $\Delta_1$ and $n_{\min}\Delta_1$, respectively. 
As in the previous setting, the first two bounds in \eqref{eqn_pf_adis4_Del2}--\eqref{eqn_pf_adis4_spec} lead to conditions on $p$, while the third term leads to a condition on $\delta$.
Combining the different cases, it follows that if 
\begin{equation}
\label{eqn_pf_adis3_2}
\delta \gtrsim \frac{\sqrt{n\ln n}}{n_{\min}} 
\qquad \text{and} \qquad 
p \gtrsim \frac{(\ln n)^2}{\epsilon^2 \delta^2  n_{\min}^2} \;,
\end{equation}
then the criteria for $\Delta_2$ and $\Vert S-\St\Vert_2$ are satisfied, and by Proposition \ref{prop_sdp}, $X^*$ is the unique optimal solution for \ref{eqn_sdp} with the range of $\lambda$ given by
\begin{equation}
\label{eqn_pf_adis3_3}
\Vert S-\St \Vert_2 ~\lesssim~ \max\left\{ \sqrt{p n^2 \ln n} \,, p\epsilon \sqrt{n^3\ln n} \,, (\ln n)^2\right\}   ~\lesssim~ \lambda ~<~ \frac{p\epsilon \delta n_{\min}(n-2)}{2} ~=~ \frac{\Delta_1}{2} \;.
\end{equation}

We finally show that the condition on $p$ holds under the stated condition of $|\setT|  \gtrsim \displaystyle \frac{n^3 (\ln n)^2}{\epsilon^2\delta^2 n_{\min}^2}$, and state the above interval for $\lambda$ in terms of $|\setT|$. Under the assumption that each triplet is observed independently with probability $p$, we $\Eb[|\setT|] = pn\binom{n-1}{2} = \bigO{pn^3}$.
By Bernstein inequality, it is easy to verify that for $p \gtrsim \frac{\ln n}{n^3}$ or equivalently $|\setT| \gtrsim \ln n$, we have $|\setT| \in \left(\frac12 \Eb[|\setT|], \frac32\Eb[|\setT|] \right)$ with probability $1-\bigO{\frac1n}$.
Hence, we can replace $p$ by $\frac{|\setT|}{n^3}$ in \eqref{eqn_pf_adis3_2}--\eqref{eqn_pf_adis3_3} up to differences in constants,
which leads to the statement of Theorem \ref{thm_adis} in the triplet setting.

\section{Algorithmic details}\label{app: sec: Algorithm}

In this section, we provide details on the modified SPUR algorithm that we use to tune the parameter $\lambda$, and to select the number of clusters.


SPUR, acronym for Semidefinite Program with Unknown $r$ ($r$ denoting the number of clusters), was proposed by \citet{yan2018provable} to tune the parameter $\lambda$ of \ref{eqn_sdp} in the context of graph clustering (see Algorithm \ref{alg: SPUR}).
The underlying idea of this approach is to search for the optimal $\lambda$ using a grid search over the range $0<\lambda<\lambda_{\max}$, where $\lambda_{\max}$ is derived from an exact recovery result under stochastic block model. 

\begin{algorithm}
\caption{Semidefinite Program with Unknown $k$ (SPUR).}
\label{alg: SPUR}
\SetKwInOut{Input}{input}\SetKwInOut{Output}{output}
\DontPrintSemicolon
\Input{~graph $A$, number of candidates $T$.}
\Begin{
\For{$t = 1$ to $T$}{
$\lambda_t = \exp\left(\frac{t}{T}\ln\left(1+\lambda_{\max}\right)\right)-1$. \hfill (\citet{yan2018provable} set $\lambda_{\max}=\norm{A}_{op}$)\;
Solve \ref{eqn_sdp} with $\lambda =\lambda_t$ to obtain  $X_t$.\;
Estimate $k_t =$ integer approximation of $\tr{X_t}$.
}
Choose $\hat{t} = \displaystyle{\argmax_t} \frac{\sum_{i\leq k_t} \sigma_i(X_t)}{\tr{X_t}}$, where $\sigma_i(X_t)$ denotes $i$-th largest eigenvalue of $X_t$.\;
}
\Output{~Number of clusters $k_{\hat{t}}$, $X_{\hat{t}}$.}
\end{algorithm}

In the present setting, Theorem \ref{thm_adis} shows that the planted clusters can be exactly recovered given a sufficient number of comparisons and an appropriate choice of $\lambda$.
From Theorem \ref{thm_adis}, a candidate for $\lambda_{\max}$ can be chosen as $\frac{|\setT|}{n}$ (for triplets) or $\frac{|\setQ|}{n}$ (for quadruplets), which is a loose upper bound for the theoretical interval for $\lambda$, obtained by noting that $\epsilon \delta n_{\min} \leq n$. 
Thus, following \citet{yan2018provable}, we could use Algorithm~\ref{alg: SPUR} with our choice of $\lambda_{\max}$.

Unfortunately, this approach has two main drawbacks. First, it ignores the lower bound in Theorem~\ref{thm_adis} and, second, setting $T$, the number of $\lambda$ values that should be considered in Algorithm~\ref{alg: SPUR}, is difficult. To address the former issue, we propose to consider Theorem~\ref{thm_adis} once more and to use $\lambda_{\text{min}} = \sqrt{ c (\ln n)/n}$ as a lower bound for $\lambda$ instead of $0$, as used in \citet{yan2018provable}. To address the latter issue, we use the fact that the estimated number of clusters $k$ monotonically decreases with $\lambda$ as shown in the next Lemma.
\begin{mylem}[\textbf{The estimated number of clusters decreases monotonically with increasing $\lambda$}]
For any $\lambda>0$, let $X_\lambda$ denote the solution of \ref{eqn_sdp} and $k_\lambda = \lfloor\tr{X_\lambda}\rceil$ be the integer approximation of $\tr{X_\lambda}$, which is an estimate of the number of clusters. Then, $k_\lambda$ is a non-increasing function of $\lambda$, that is
\begin{align*}
  \lambda' \geq \lambda \Rightarrow k_{\lambda'} \leq k_\lambda.
\end{align*}
\end{mylem}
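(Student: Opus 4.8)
The plan is to use a simple exchange (interchange) argument that exploits two structural facts about \ref{eqn_sdp}: the feasible region does not depend on $\lambda$, and $\lambda$ enters the objective only through the linear penalty $-\lambda\tr{X}$. First I would fix $\lambda' \geq \lambda$ and let $X_\lambda$ and $X_{\lambda'}$ be optimal solutions of \ref{eqn_sdp} for the two parameter values. Since the constraints $X\geq 0$, $X\succeq 0$, $X\1=\1$ are identical in both problems, $X_{\lambda'}$ is feasible for the $\lambda$-problem and $X_\lambda$ is feasible for the $\lambda'$-problem, which is what makes the comparison of objective values legitimate.

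Next I would record the two optimality inequalities,
\begin{align*}
\tr{SX_\lambda} - \lambda\tr{X_\lambda} &\geq \tr{SX_{\lambda'}} - \lambda\tr{X_{\lambda'}}, \\
\tr{SX_{\lambda'}} - \lambda'\tr{X_{\lambda'}} &\geq \tr{SX_\lambda} - \lambda'\tr{X_\lambda}.
\end{align*}
Adding the two inequalities cancels the $\tr{SX}$ terms, and after rearranging one obtains
\[
(\lambda' - \lambda)\big(\tr{X_\lambda} - \tr{X_{\lambda'}}\big) \geq 0 .
\]
Because $\lambda' - \lambda \geq 0$, this forces $\tr{X_{\lambda'}} \leq \tr{X_\lambda}$, so the trace of the optimizer is a non-increasing function of $\lambda$. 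Finally, since the integer rounding map $x \mapsto \lfloor x \rceil$ is non-decreasing, composing it with the non-increasing map $\lambda \mapsto \tr{X_\lambda}$ preserves the direction of monotonicity and gives $k_{\lambda'} = \lfloor \tr{X_{\lambda'}}\rceil \leq \lfloor \tr{X_\lambda}\rceil = k_\lambda$, which is the claim.

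I expect the only delicate point—and the main thing requiring care rather than computation—to be the possible non-uniqueness of the SDP optimizer, since the objective is linear in $X$. I would stress that the exchange inequality above holds for \emph{every} choice of optimal $X_\lambda$ and $X_{\lambda'}$, so the conclusion $\tr{X_{\lambda'}} \leq \tr{X_\lambda}$ is valid for any selection of solutions; in particular the (possibly set-valued) optimal trace never increases as $\lambda$ grows. Consequently $k_\lambda$ is a well-defined non-increasing step function away from at most finitely many breakpoints of $\lambda$, and this ambiguity at breakpoints does not affect the monotonicity statement. No other step is substantive: the argument never uses the specific form of $S$, only that the penalty is linear and the feasible set is $\lambda$-independent.
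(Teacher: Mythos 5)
Your proof is correct and follows essentially the same route as the paper's: both derive the two optimality inequalities from cross-feasibility, combine them to get $(\lambda'-\lambda)\left(\tr{X_{\lambda'}}-\tr{X_\lambda}\right)\leq 0$, and conclude via monotonicity of the rounding map. Your added remark on non-uniqueness of the optimizer is a sensible refinement that the paper's proof leaves implicit.
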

\begin{proof}
We start this proof by noting that since $k_\lambda$ is the integer approximation of $\tr{X_\lambda}$, it suffices to show that $\tr{X_\lambda}$ is a non-increasing function of $\lambda$. 
Then, consider distinct $\lambda',\lambda$ and let $X_{\lambda'}$, $X_\lambda$ be the solutions of \ref{eqn_sdp} with parameters $\lambda',\lambda$, respectively. We have
\begin{align*}
    \tr{SX_\lambda} - \lambda\tr{X_\lambda} \geq{}& \tr{SX_{\lambda'}} - \lambda\tr{X_{\lambda'}} \;, \\
    \tr{SX_\lambda} - \lambda'\tr{X_\lambda} \leq{}& \tr{SX_{\lambda'}} - \lambda'\tr{X_{\lambda'}} \;. 
\end{align*}
Subtracting the second inequality from the first inequality implies
\begin{align*}
\tr{SX_\lambda} - \lambda\tr{X_\lambda} -{}& \left(\tr{SX_\lambda} - \lambda'\tr{X_\lambda}\right) \\
\geq{}& \tr{SX_{\lambda'}} - \lambda\tr{X_{\lambda'}} - \tr{SX_{\lambda'}} + \lambda'\tr{X_{\lambda'}} 
\end{align*}
which implies
\[
(\lambda' - \lambda)\tr{X_\lambda} \geq (\lambda' - \lambda)\tr{X_{\lambda'}} 
\]
or equivalently, 
$(\lambda' - \lambda)(\tr{X_{\lambda'}}-\tr{X_\lambda}) \leq 0$.
Thus, for $\lambda'>\lambda$, we can conclude that $\tr{X_{\lambda'}} \leq \tr{X_\lambda}$, which shows that $\tr{X_\lambda}$ and  $k_\lambda$ are non-increasing functions of $\lambda$.
\end{proof}
Following this, using $\lambda_{\min}$ and $\lambda_{\max}$, we get two estimates of the number of clusters, $k_{\lambda_{\text{min}}}$ and $k_{\lambda_{\text{max}}}$. Then, we search over $k \in [k_{\lambda_{\max}},k_{\lambda_{\min}}]$ instead of searching over $\lambda$---in practice, it helps to search over the values $\max\{2,k_{\lambda_{\max}}\} \leq k \leq k_{\lambda_{\min}}+2$. 
We select $k$ that maximises the above SPUR objective, where $X_k$ is computed using the simpler \ref{eqn_sdp_PW} \citep{yan2018provable}. This approach is summarized in Algorithm~\ref{alg: modified SPUR}.

\begin{algorithm}[ht]
\caption{Comparison-based SPUR}
\label{alg: modified SPUR}
\SetKwInOut{Input}{input}\SetKwInOut{Output}{output}
\DontPrintSemicolon
\Input{~$n$ and $\mathcal{T}$ or $\mathcal{Q}$}
\Begin{
Define $c=|\mathcal{T}|$ or $|\mathcal{Q}|$\;
Let $S$ be obtained with \ref{eqn_adis3} or \ref{eqn_adis4}\;
Define $\lambda_{\min}=\sqrt{\frac{c(\ln c)}{n}}$ and $\lambda_{\max}=\frac{c}{n}$ \; 
$X_{\lambda_{\min}},X_{\lambda_{max}}\leftarrow$ SDP-$\lambda_{\min}$, SDP-$\lambda_{\max}$ on $S$\;
$k_{\lambda_{\min}},k_{\lambda_{\max}}\leftarrow\lfloor\tr{X_{\lambda_{\min}}}\rceil,\lfloor\tr{X_{\lambda_{\max}}}\rceil$\;
\For{$k = \max\{2,k_{\lambda_{\max}}\}$ to $k_{\lambda_{min}}+2$}{
Solve \ref{eqn_sdp_PW} with $k$ to obtain $X_k$.\;
}
Choose $\hat{k} = \underset{k}{\textup{argmax}} \frac{\sum_{i\leq k} \sigma_i(X_k)}{\tr{X_k}}$, where $\sigma_i(X_k)$ denotes $i$-th largest eigenvalue of $X_k$.
}
\Output{~Number of clusters $\hat{k}$, $X_{\hat{k}}$.}
\end{algorithm}





\section{Additional results for the planted model}
\label{app: sec: planted}

In this section, we provide additional experiments on our planted model. We show that changing the clustering method used in the last step of our approach to cluster the matrix $X$ learned by \ref{eqn_sdp} or \ref{eqn_sdp_PW} does not affect the results. We demonstrate that, given a sufficient number of comparisons, SPUR correctly estimates the number of clusters. We give details on the distributions used in Figure~\ref{fig: planted distributions}. Finally, we consider several additional experiments where we vary the planted model parameters that were ignored in Section~\ref{sec:experiments} in the main paper.

\subsection{Clustering method in the last step}
In the last step of our approach, we use $k$-means to cluster the learned matrix $X_k$. We experimentally demonstrate here that the partition obtained is, in fact, independent of the clustering algorithm used in this step. 
Hence, in Figure~\ref{app: tab: plots SPUR vs known k}, we compare spectral clustering with k-means. As in the main paper, we here consider varying the number of observations, $|\mathcal{T}|,|\mathcal{Q}|$ and varying the crowd noise $\epsilon$ for both the setting where $k$ is estimated by SPUR and where we consider $k$ to be known.
There is no differences between the ARI obtained when using k-means or spectral clustering.

\begin{figure}
    \centering
\begin{minipage}{\textwidth}
\centering
\begin{tabular}{@{}*{4}{c@{}}}
\toprule
SPUR with k-means & 
SPUR with spectral& 
known $k$  with k-means &
known $k$ with spectral
\\ 
\cmidrule(r){1-1}\cmidrule(lr){2-2}\cmidrule(lr){3-3}\cmidrule(l){4-4}

\includegraphics[width=0.24\textwidth]{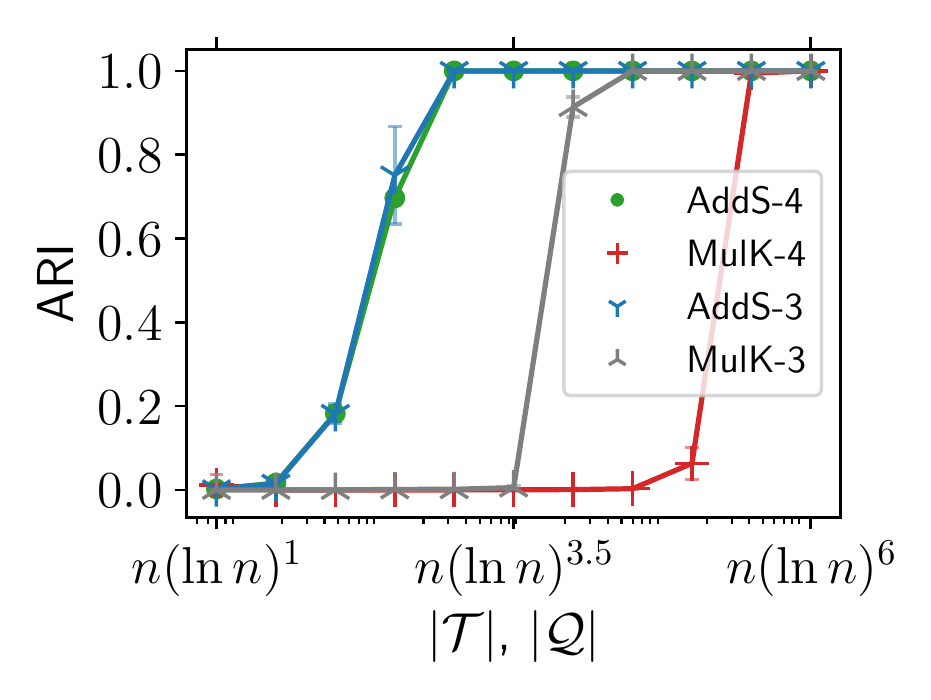}&
\includegraphics[width=0.24\textwidth]{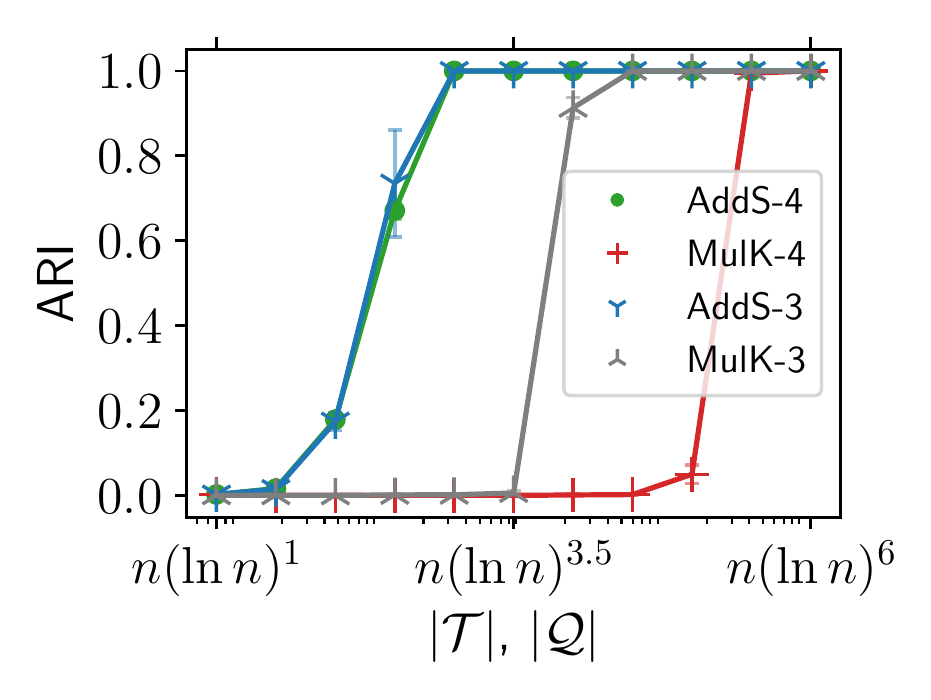}&
\includegraphics[width=0.24\textwidth]{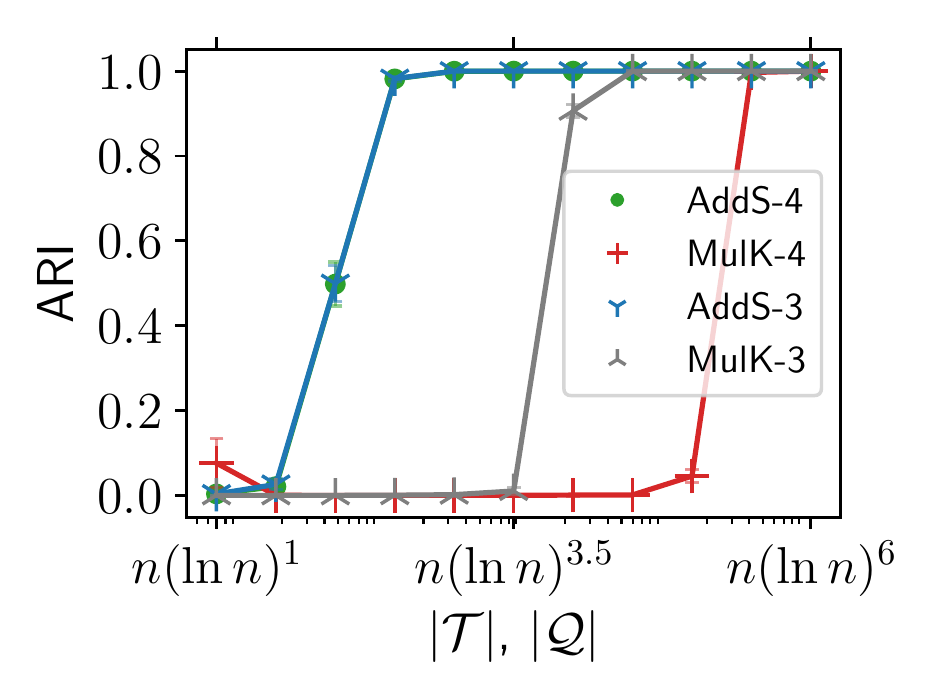}&
\includegraphics[width=0.24\textwidth]{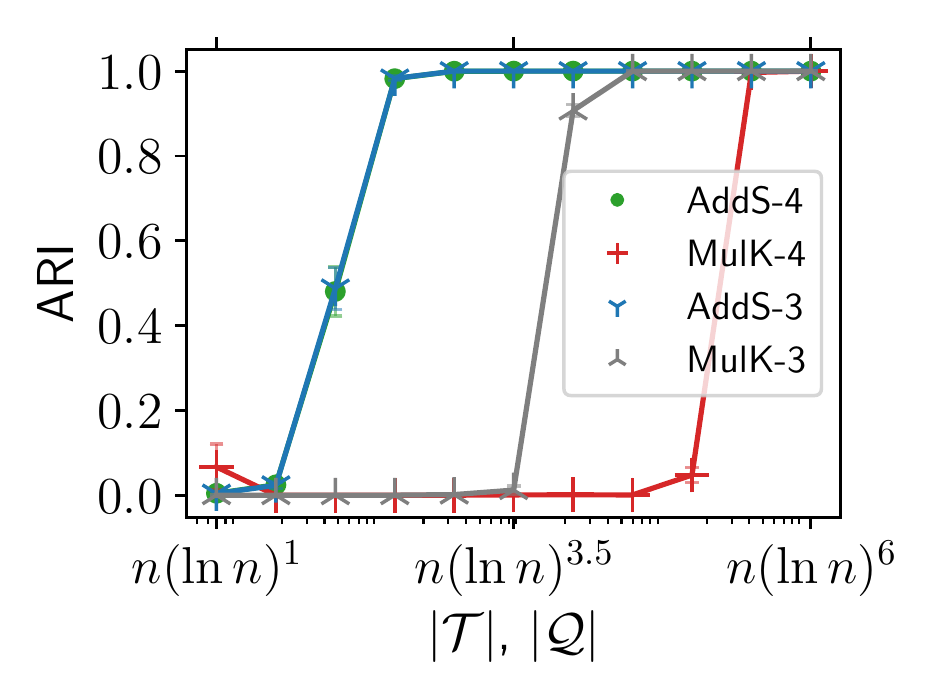}\\

\includegraphics[width=0.24\textwidth]{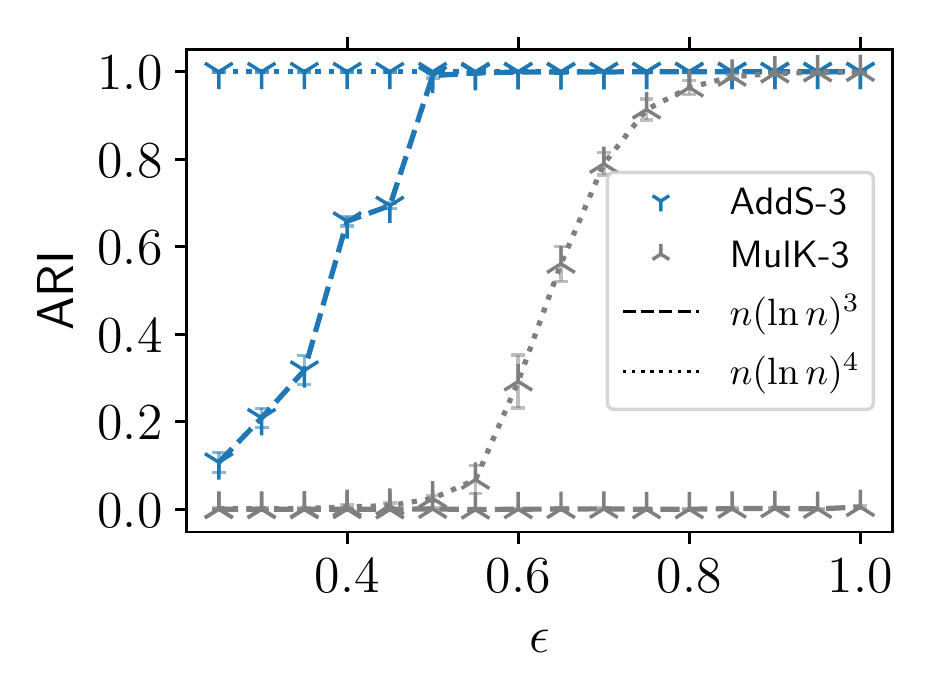}&
\includegraphics[width=0.24\textwidth]{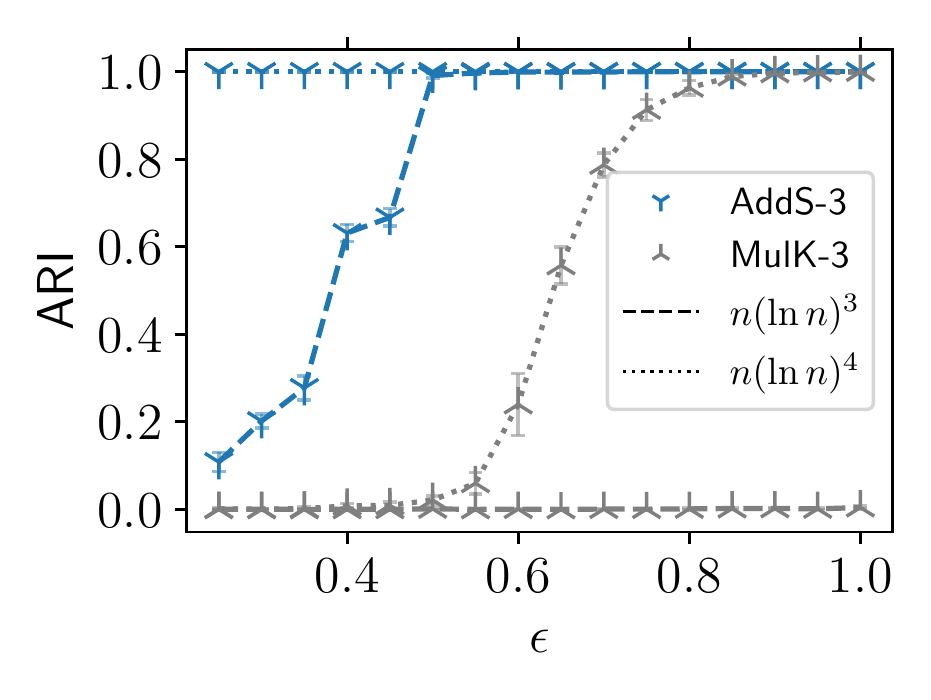}&
\includegraphics[width=0.24\textwidth]{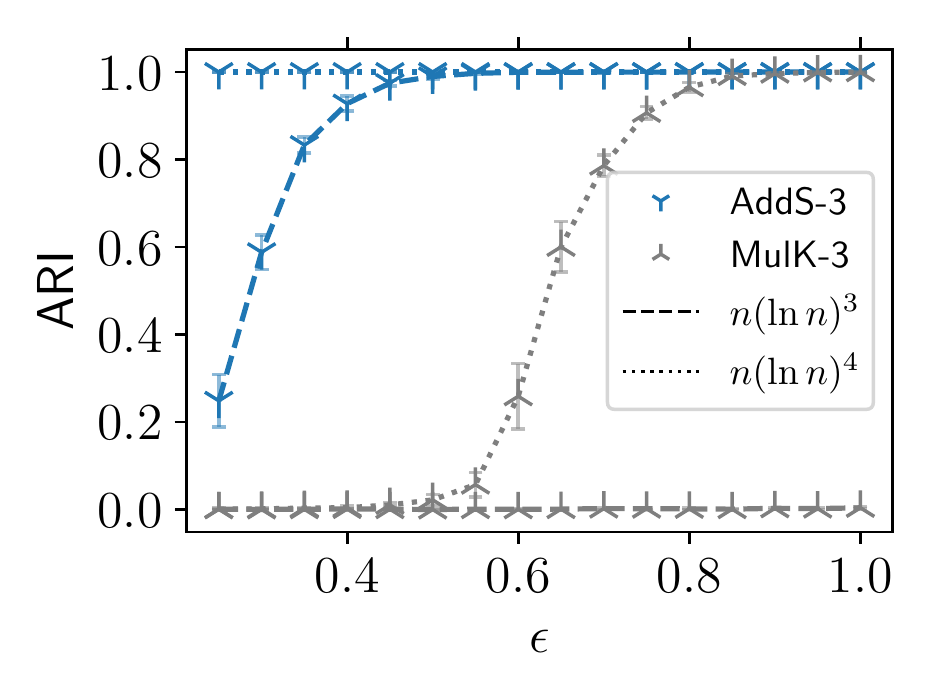}&
\includegraphics[width=0.24\textwidth]{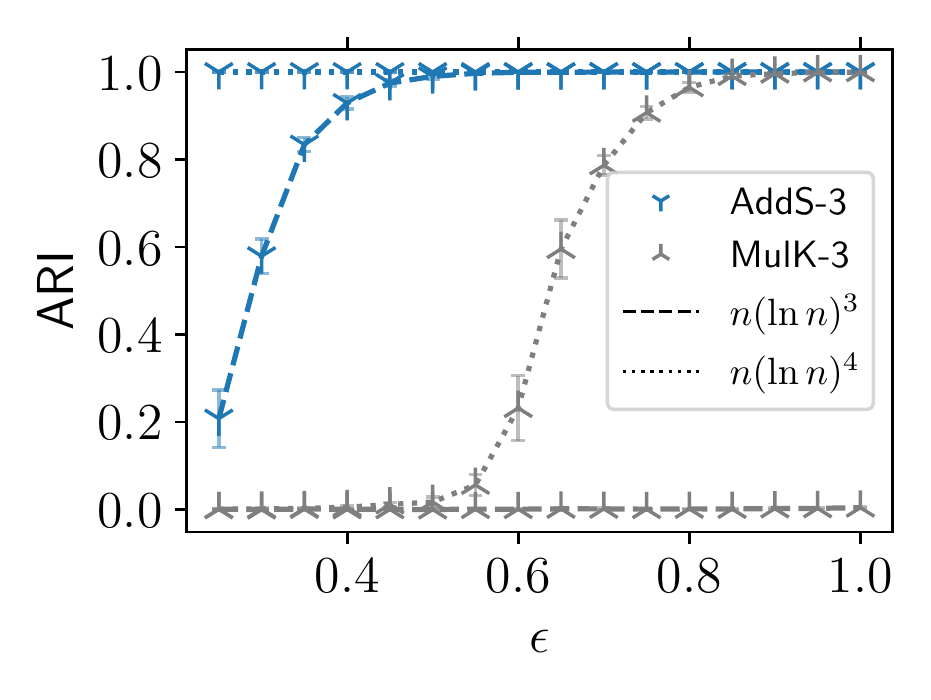}\\

\hline
\end{tabular}
\end{minipage}
\caption{Comparing clustering algorithms to partition $X$ in the last step. Using k-means or spectral clustering does not affect the output of our approach.}
    \label{app: tab: plots SPUR vs known k}
\end{figure}

\subsection{Compare SPUR with known $k$}\label{app: sec: compare SPUR to known k}

An important question is how good is SPUR at estimating the true number of clusters. We illustrate this in Figure~\ref{app: tab: plots SPUR vs known k 1}. We start by comparing the first two columns, showing how the ARI changes for various parameters of the planted model. In the setting of $|\mathcal{Q}|,|\mathcal{T}|=n(\ln n)^3$  we see that using a known number of clusters outperforms SPUR, especially in parameter ranges that are harder to cluster (e.g. small $\delta,\epsilon$ or for a larger number of clusters).
If we consider $|\mathcal{Q}|,|\mathcal{T}|=n(\ln n)^4$, SPUR correctly estimates the number of clusters and thus we omit the plots with known $k$. 

\subsection{Experimental details for changing $F_{in},F_{out}$ in the planted model}
In this section, we give implementation details on the different distributions considered in Figure~\ref{fig: planted distributions}.
In the following let $\phi$ be the normal pdf and $\Phi$ the normal cdf. Recall that, in all the experiments, we fix $\delta = 0.5$ as the default.

\textbf{Parameters for $F_{in}$ and $F_{out}$ normal distributions.} Let $F_{in} = \mathcal{N}(\mu_{in}, \sigma)$ and $F_{out} = \mathcal{N}(\mu_{out}, \sigma)$.
We fix $\sigma = 0.1$ and $\mu_{out} = 0$. Using $\delta$ we can compute $\mu_{in}$.
Indeed, in this case, the cumulative distribution function is known and, thus, by setting it equal to $\Pb_{w\sim F_{in},w'\sim F_{out}}(w>w') = \frac{1+\delta}{2} \quad \text{for some } \delta \in (0,1]$ (as given in Equation~\eqref{eqn_Fcondn}) we directly get the $\delta$ defined in Section~\ref{sec_background}:
$\delta=2 \Phi\left({\left(\mu_{in}-\mu_{out}\right)}/{  (\sqrt{2} \sigma)}\right)-1$. Then, assuming that $\mu_{out} = 0$, we get
$\mu_{in} = \sqrt{2}\sigma\Phi^{-1}\left( \frac{1+\delta}{2} \right)$.



\textbf{Parameters for $F_{in}$ and $F_{out}$ Beta distributions.} Let $F_{in} = \betadist(\alpha, \beta), F_{out} = \betadist(\alpha', \beta')$.
We set $\alpha' = \beta' = 1$ such that $F_{out} = \betadist(1, 1) = \uniform(0,1)$. We can then compute
\begin{align*}
     \Pb_{w\sim \betadist(\alpha, \beta),w'\sim \betadist(1, 1)}(w>w') &= {\textstyle\expect_w} \left[ \int_0^w dw'\right]\\
      &= {\textstyle\expect_w}\left[ w\right]\\
      &=\frac{\alpha}{\alpha+\beta}
\end{align*}
where the last line follows from the mean of the Beta distribution. Setting this equal to $\frac{1+\delta}{2} $ and solving for $\alpha$ gives:
$ \alpha = \beta\left(\frac{1+\delta}{1-\delta}\right)$. In our experiments, we fix $\beta=2$.

\textbf{Parameters for $F_{in}$ Normal and $F_{out}$ Uniform.} Let $F_{in} = \mathcal{N}(\mu, 0), F_{out} = \uniform(0,1)$. To set $\mu$, we compute: 
\begin{align*}
     \Pb_{w\sim\mathcal{N}(\mu, 0),w'\sim \uniform(0,1)}(w>w') =& \int_0^\infty \phi(w-\mu)dw\left[ \int_0^{\min(w, 1)} dw'\right]\\
      =& \int_0^1 w \phi(w -\mu) dw + \int_1^\infty \phi(w -\mu) dw + \mu \left(\Phi(1-\mu) - \Phi(-\mu)\right)\\
     =& 1 + \phi(-\mu) - \phi(1-\mu) + (\mu-1)\Phi(1-\mu) -  \mu\Phi(-\mu)
\end{align*}{}
Solving numerically for $\mu$ gives
$    \mu = \frac{1+\delta}{2}$.

\subsection{Influence of different planted model parameters}
In this section we present additional experiments where we vary various parameters of the planted model.
Recall that we consider the following parameters as default: $n=1000$, $k=4$, $\epsilon=0.75$, $|\mathcal{T}| = |\mathcal{Q}|= n(\ln n)^4$ and $F_{in}=\mathcal{N}\left(\sqrt{2}\sigma\Phi^{-1}\left( \frac{1+\delta}{2} \right),\sigma^2\right),F_{out}=\mathcal{N}\left(0,\sigma^2\right)$ with  $\sigma = 0.1$ and $\delta = 0.5$.

\textbf{Number of samples $n$, first row in Figure~\ref{app: tab: plots SPUR vs known k 1}.}
We can first note that for $|\mathcal{Q}|,|\mathcal{T}|=n(\ln n)^3$ there is no difference in the behaviour between SPUR and known $k$. Both \ref{eqn_adis3} and \ref{eqn_adis4} achieve full recovery while \ref{eqn_3k} and \ref{eqn_4k} predictions are random. To learn somewhat meaningful partitions with \ref{eqn_3k}, one needs to increase the number of observations to $n(\ln n)^4$. However, even with this many comparisons, \ref{eqn_4k} still learns random clusters. 

\textbf{Intrinsic noise $\delta$, second row in Figure~\ref{app: tab: plots SPUR vs known k 1}.}
Using $|\mathcal{Q}|,|\mathcal{T}|=n(\ln n)^3$, we see that, for both SPUR and known $k$, \ref{eqn_adis3} and \ref{eqn_adis4} exactly recover the clusters even when the intrinsic noise is high, that is $\delta = 0.4$.  \ref{eqn_3k} and \ref{eqn_4k} can only make random predictions in this case.
When the number of observations increases to $n(\ln n)^4$, \ref{eqn_adis3} and \ref{eqn_adis4} exactly recover the clusters even for values of $\delta$ that are as small as $0.25$. In this case, \ref{eqn_4k} still predicts random clusters, while \ref{eqn_3k} is able to recover the clusters when the intrinsic noise is sufficiently small, that is $\delta \geq 0.6$.

\textbf{Crowd noise $\epsilon$, third row in Figure~\ref{app: tab: plots SPUR vs known k 1}.}
This parameter was already analyzed in the main paper. The plots are recalled here for the sake of completeness.

\textbf{Number of clusters $k$, fourth row in Figure~\ref{app: tab: plots SPUR vs known k 1}.}
Finally, we vary the number of planted clusters. Here, we observe the most noticeable difference between SPUR and known $k$. For  $|\mathcal{Q}|,|\mathcal{T}|=n(\ln n)^3$, \ref{eqn_adis3} and \ref{eqn_adis4} with SPUR achieve perfect recovery for up to five clusters. While we notice a similar behaviour for \ref{eqn_adis3} and \ref{eqn_adis4} with known $k$, the drop in ARI only starts for $k > 7$ and is far less important than with SPUR.
For  $n(\ln n)^4$ observations \ref{eqn_adis3} and \ref{eqn_adis4} consistently recover all the clusters. On the other hand, \ref{eqn_3k} only recovers clusters up to $k=3$ (here, \ref{eqn_3k} uses the number of clusters estimated by \ref{eqn_adis3} with SPUR, that is $k=3$). Once again, \ref{eqn_4k} can only make random predictions.

\begin{figure}
    \centering
\begin{minipage}{\textwidth}
\centering
\begin{tabular}{@{}*{3}{c@{}}}
\toprule
$~~n(\ln n)^3$ with SPUR & 
$~~n(\ln n)^3$ with known $k$ & 
$~~n(\ln n)^4$ with SPUR \\ 
\cmidrule(r){1-1}\cmidrule(lr){2-2}\cmidrule(l){3-3}

\includegraphics[width=0.33\textwidth]{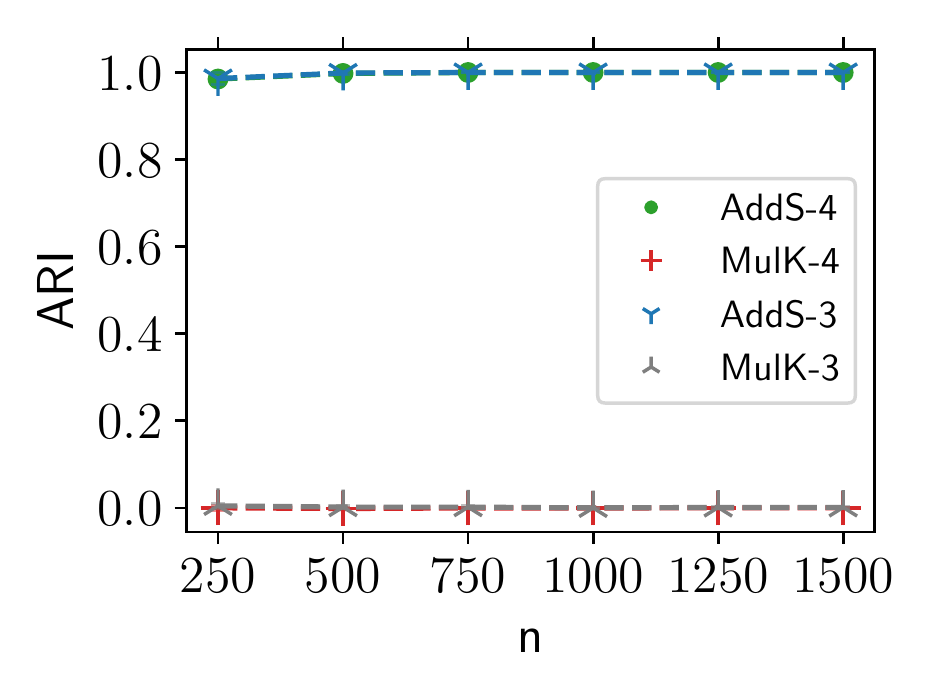}&
\includegraphics[width=0.33\textwidth]{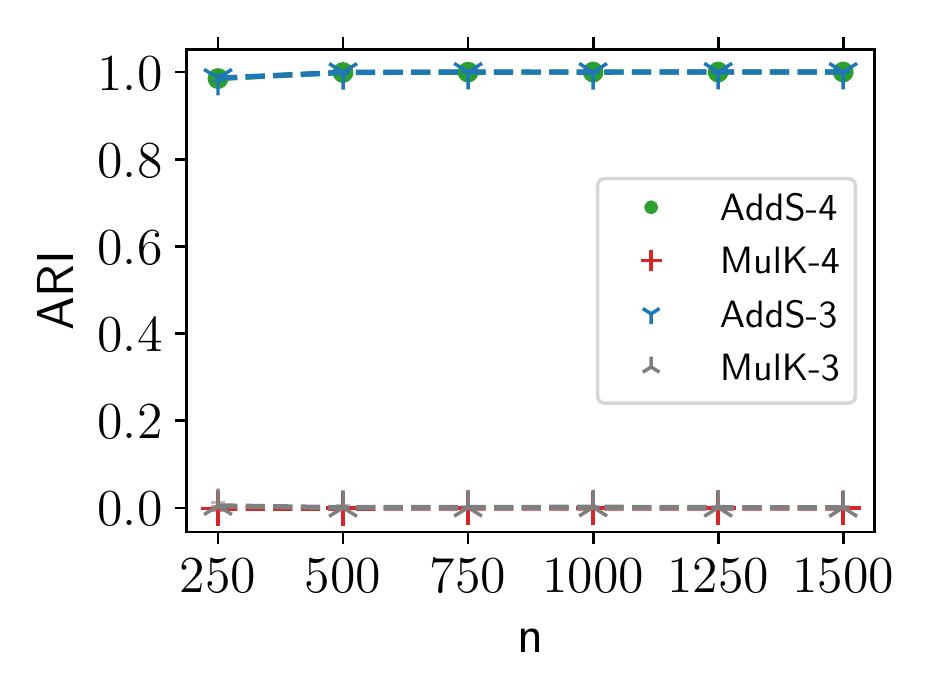}&
\includegraphics[width=0.33\textwidth]{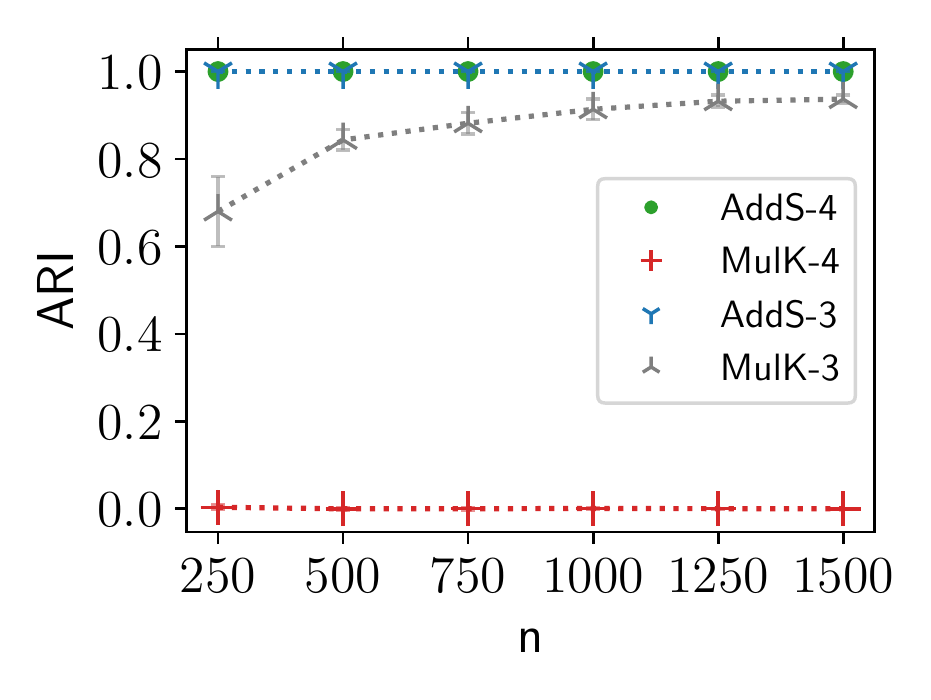}\\

\includegraphics[width=0.33\textwidth]{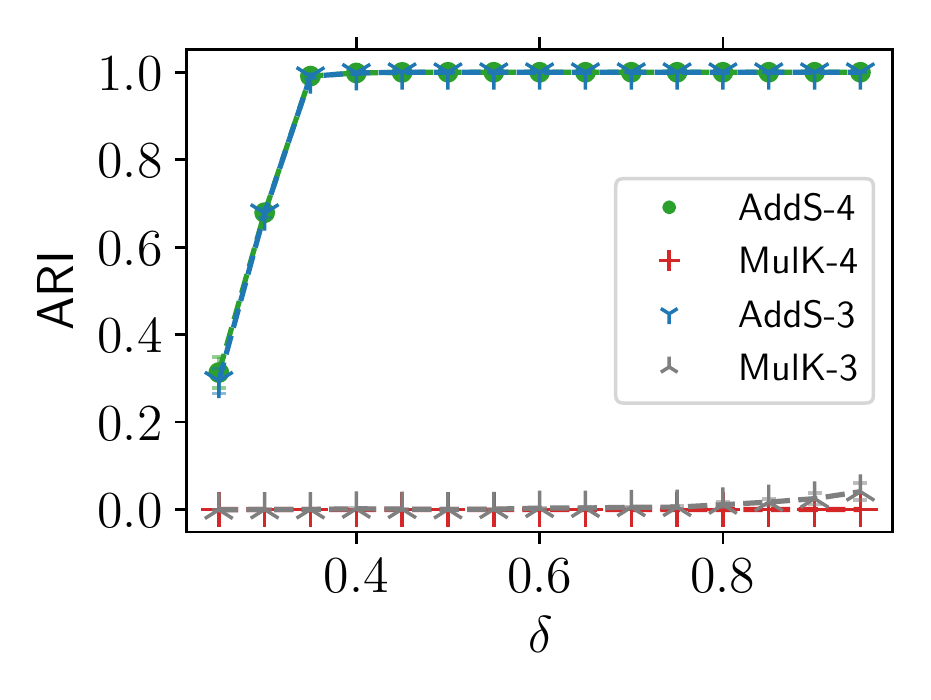}&
\includegraphics[width=0.33\textwidth]{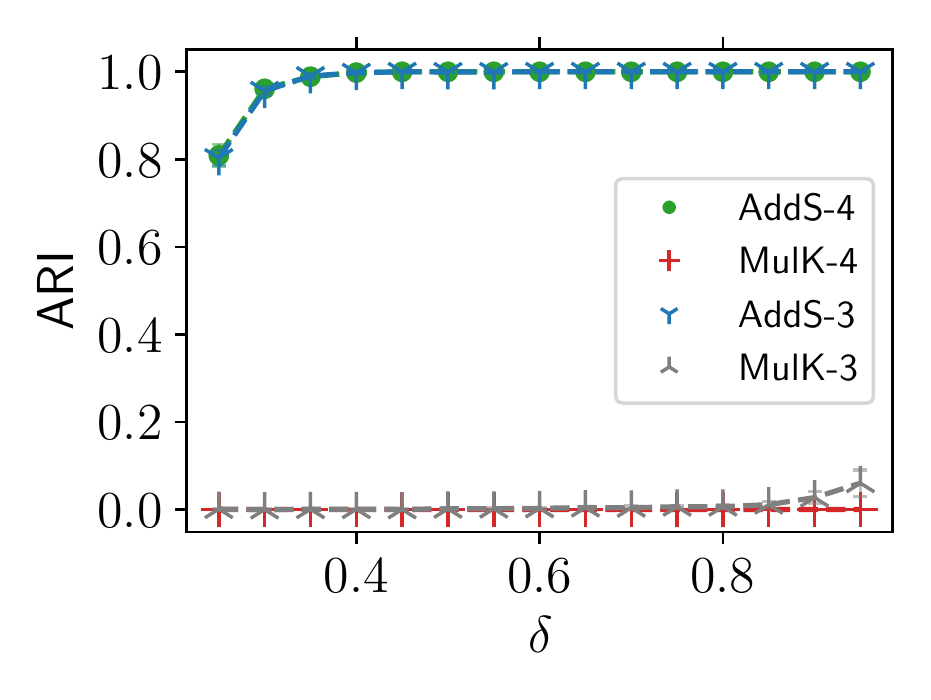}&
\includegraphics[width=0.33\textwidth]{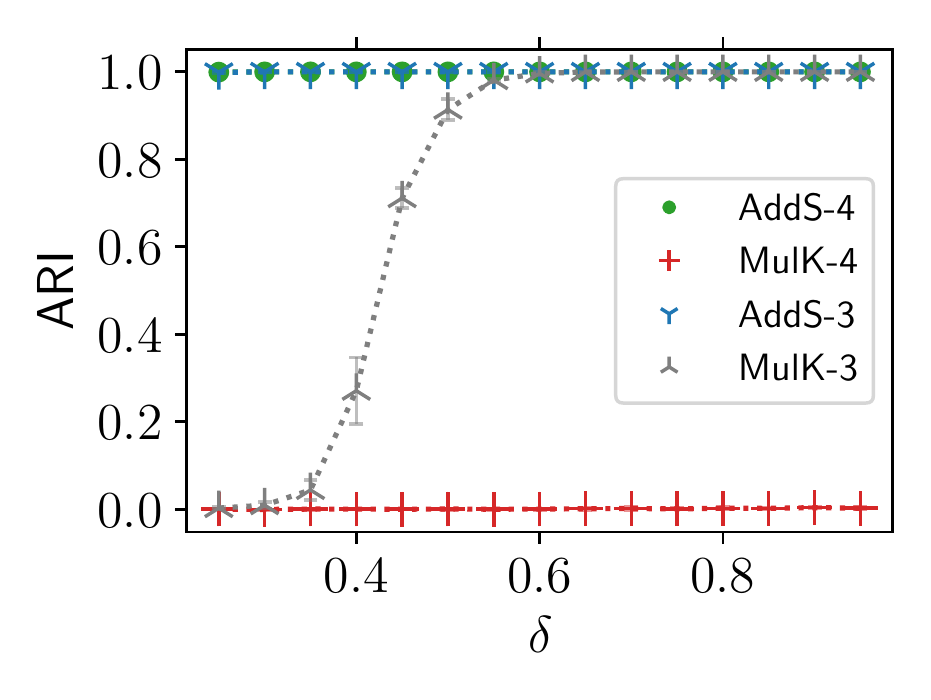}\\

\includegraphics[width=0.33\textwidth]{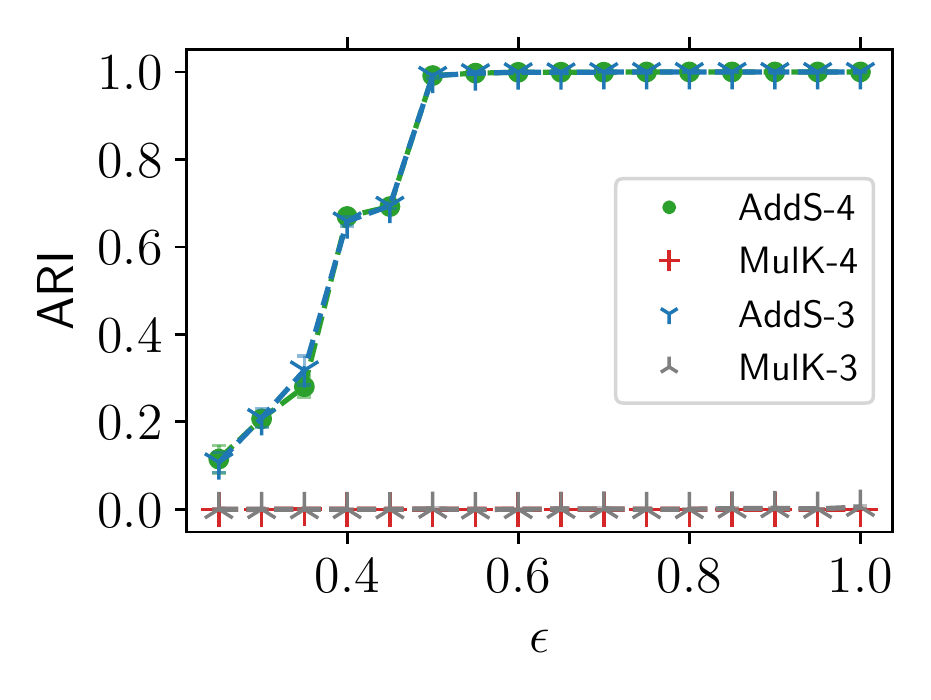}&
\includegraphics[width=0.33\textwidth]{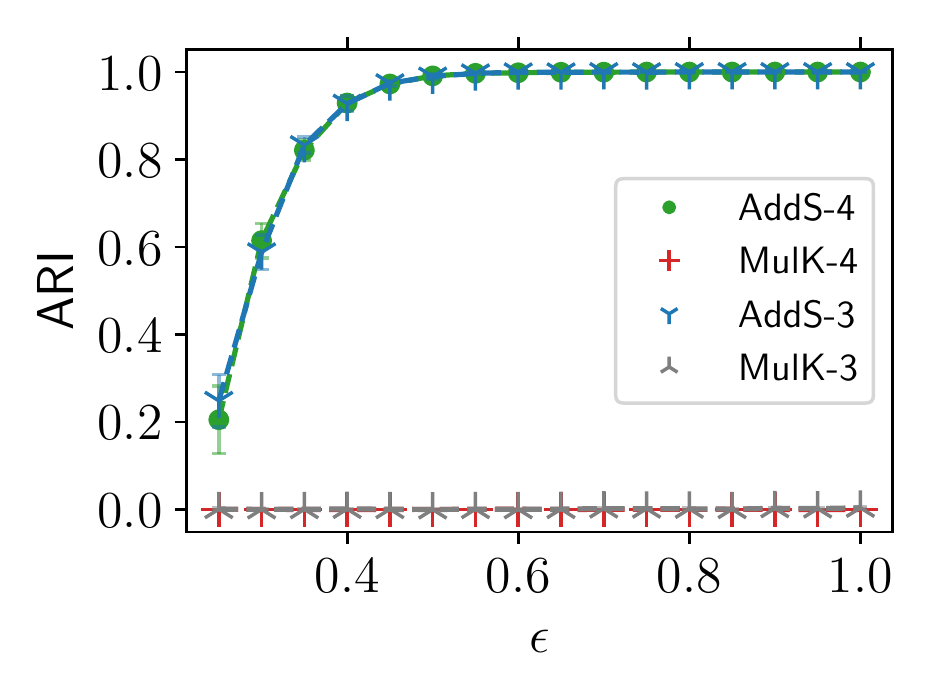}&
\includegraphics[width=0.33\textwidth]{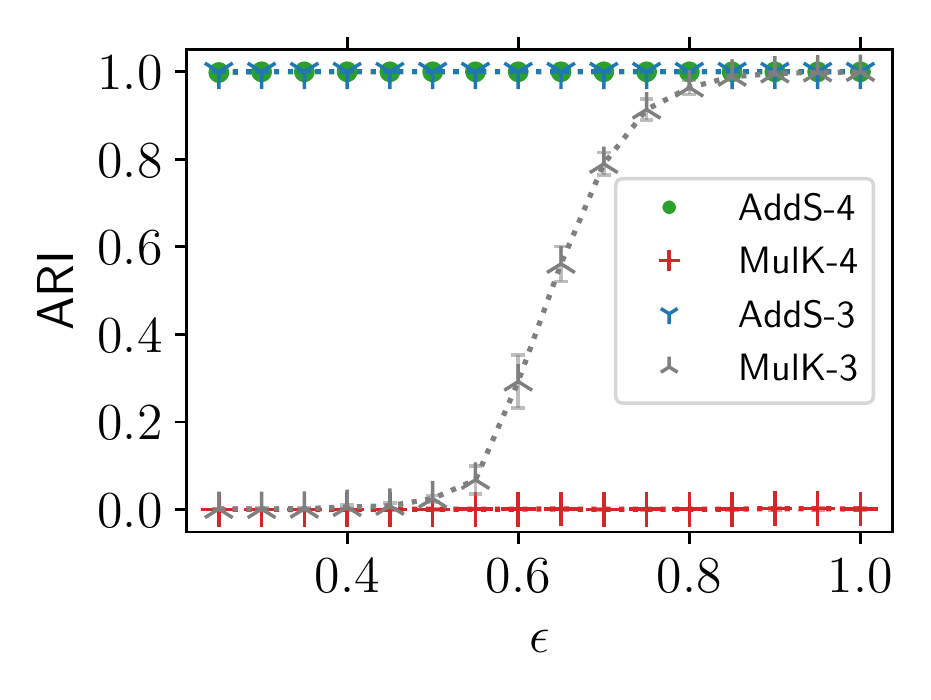}\\

\includegraphics[width=0.33\textwidth]{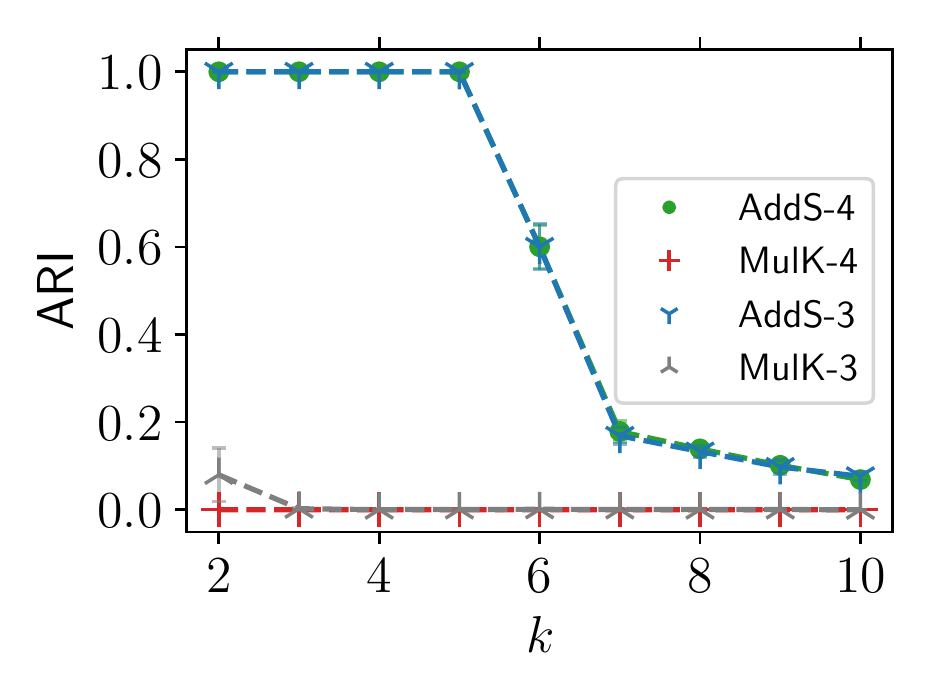}&
\includegraphics[width=0.33\textwidth]{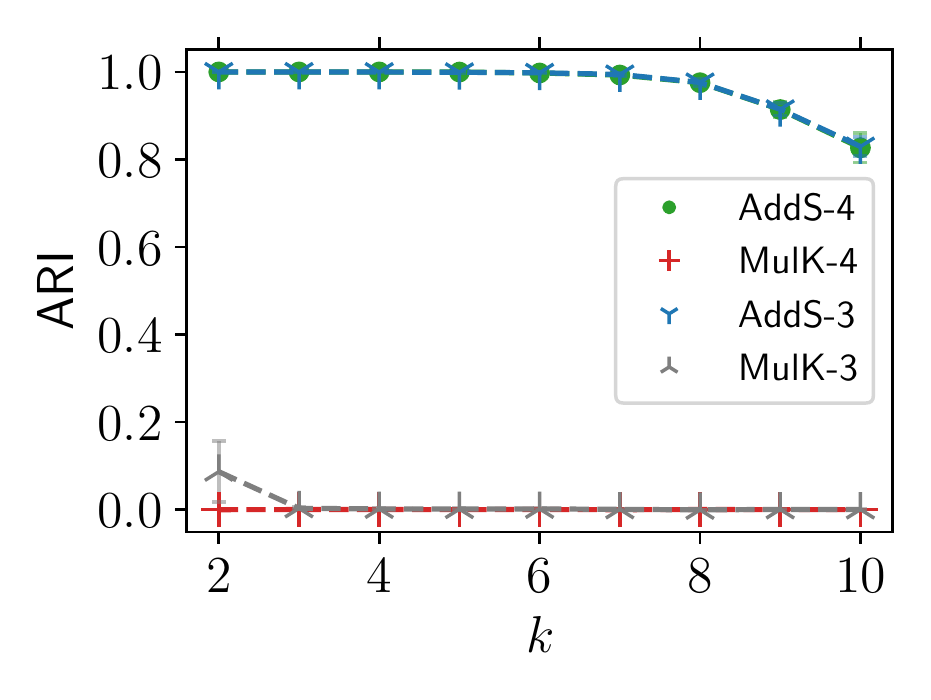}&
\includegraphics[width=0.33\textwidth]{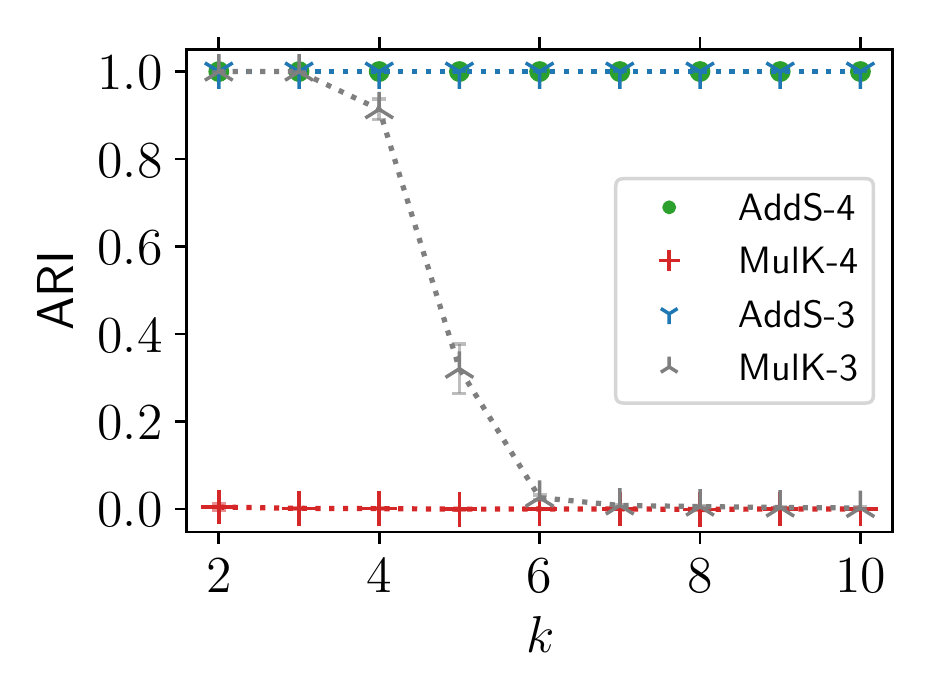}\\


\hline
\end{tabular}
\end{minipage}
\caption{Further experiments on the planted model. On the one hand, SPUR needs sufficiently many comparisons to correctly estimate the number of underlying clusters. On the other hand, our approaches are not overly sensitive to changes in the planted model parameters and are able to exactly recover the planted clusters with $n(\ln n)^3$ comparisons even in fairly difficult cases (small $\delta$, high $k$, \ldots). Furthermore, given $n(\ln n)^4$ comparisons, our approaches are able to exactly recover the planted clusters in all the considered cases.}
    \label{app: tab: plots SPUR vs known k 1}
\end{figure}




\section{Further results for experiments on real comparison based data}
\label{app: sec: realdata}
In this final section we present supporting results for the real data experiments presented in Section~\ref{sec:experiments}.

\subsection{Details on the Car dataset}
The Car dataset \citep{kleindessner2016cardataset} is a comparison based dataset that contains 60 examples grouped into 3 classes (SUV, city cars, sport cars) with 4 outliers. This dataset originally comes
with a set of 6056 comparisons of the form ``$x_i$ is most central in the triple $x_i$, $x_j$, $x_k$.'' Each of these comparisons corresponds to two triplets: ``$x_j$ is more similar to $x_i$ than to $x_k$'' and ``$x_k$ is more similar to $x_i$ than to $x_j$.'' Hence, we have access to 12112 triplet comparisons.
%


\subsection{Food Dataset}\label{app: sec: food}
In addition to the Car dataset we now look at a second comparison based dataset called Food \citep{wilber2014hcomp}. It contains 100 food images and comes with 190376 triplet comparisons. Since there are no ground truth labels for the food dataset, we use the number of clusters estimated by SPUR for all methods and plot, in Figure~\ref{app: fig: food comparison}, the similarity matrix between the different clustering approaches considered. Here, there is a high degree of agreement between all the clustering methods. Thus, most approaches predict the same clusters up to minor differences for a few data points. In Figure~\ref{app: tab: food samples}, we plot the clusters obtained by \ref{eqn_adis3} with SPUR (estimated $k$ is $2$). The two clusters seem to separate \textit{Sweet foods} from \textit{Savoury foods}. Intuitively, it seems indeed natural for humans to judge that two sweet foods are more similar to each other than to a third savoury food.
\begin{figure}
    \centering
    \includegraphics[width=0.3\textwidth]{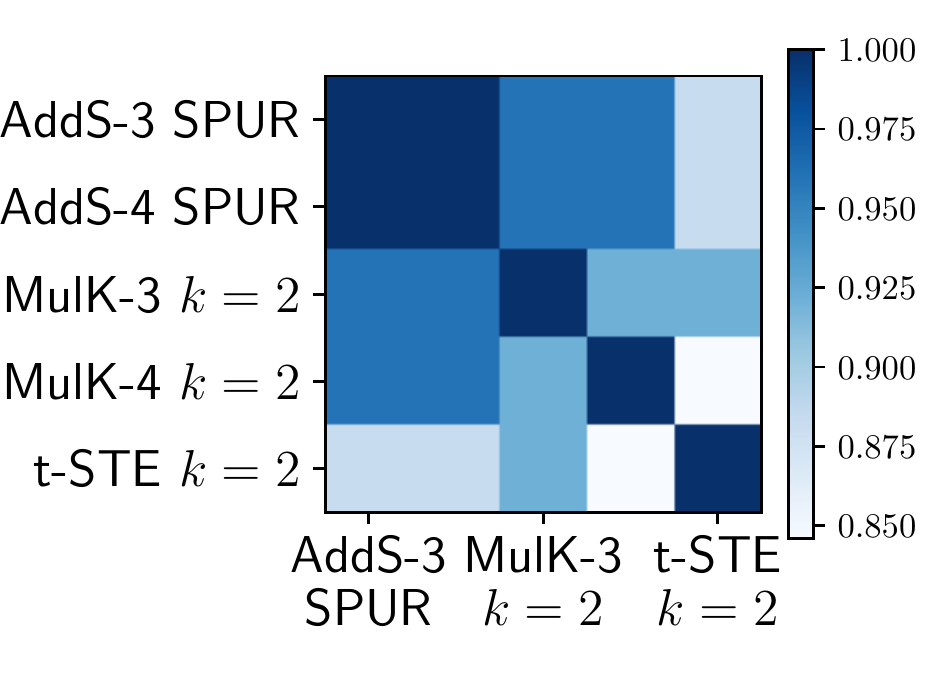}
    \caption{ARI between the clustering obtained by the different baselines. \ref{eqn_adis3} and \ref{eqn_adis4} with SPUR both estimate that the number of cluster is $k=2$. There is a high degree of agreement between the different approaches.}
    \label{app: fig: food comparison}
\end{figure}

\subsection{MNIST} \label{app: sec: mnist}

In this section, we consider additional experiments on the MNIST dataset. First, we consider a second similarity measure to generate the triplets. Then, we illustrate the partitions obtained with \ref{eqn_adis3} with known $k$ and SPUR respectively.

\textbf{Gaussian similarity.} In the main paper, we use the Gaussian similarity to generate the comparisons. More precisely, we compute the similarity between two examples $x_i$ and $x_j$ as
\begin{align*}
    w_{ij} = \exp\left(\frac{\normTwo{x_i - x_j}^2}{\gamma^2}\right) \text{ with } \gamma = 1.
\end{align*}

\textbf{Cosine similarity.}
Instead of the Gaussian similarity, we could consider alternatives to generate the comparisons. For example, the Cosine similarity:
\begin{align*}
    w_{ij} = \frac{\langle x_i,x_j\rangle}{\normTwo{x_i}\normTwo{x_j}}.
\end{align*}
In Figure~\ref{app: fig: mnist}, we show that using this alternative similarity affects the absolute results of the considered approaches. However, it does not change the overall trend, that is, as the number of comparisons increases, \ref{eqn_adis3} converges to the baseline of $k$-means with access to the original similarities.

\begin{figure}
     \centering
     \begin{subfigure}[b]{0.32\textwidth}
         \centering
        \includegraphics[height=0.65\textwidth,clip=true,trim=1mm 1mm 90mm 3mm]{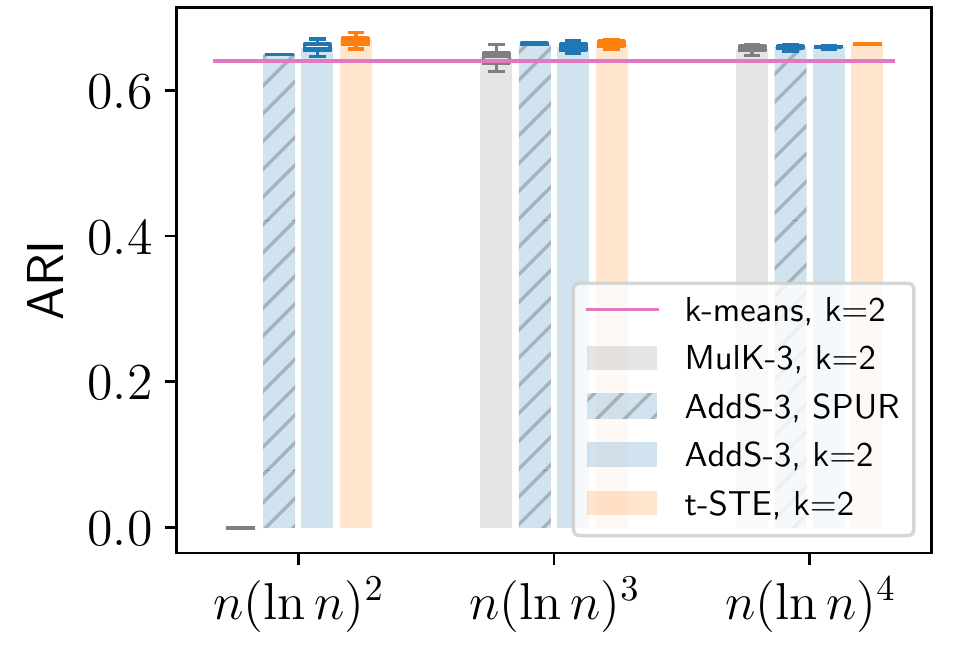}
         \caption{MNIST 1vs.7, $n=2163$}
         \label{app: fig: mnist 1v7}
     \end{subfigure}
     ~
     \begin{subfigure}[b]{0.32\textwidth}
         \centering
        \includegraphics[height=0.65\textwidth,clip=true,trim=97mm 1mm 1mm 3mm]{{figures_appendix/real_data_mnist}.pdf}
         \caption{MNIST 10, $n=2000$}
         \label{app: fig: mnist 10}
     \end{subfigure}
        \caption{Experiments on MNIST using the cosine similarity. The absolute ARI performances are different from the Gaussian similarity. However, the overall trend is preserved and, given sufficiently many comparisons, all the ordinal baselines reach the performance of $k$-means on the original data.
        }
        \label{app: fig: mnist}
\end{figure}

\textbf{Clustering using known $k$.}
Figure~\ref{app: fig: MNIST true labels} shows the t-SNE embedding of 2000 MNIST samples of all ten classes, where we see a clear separation between some classes (for example, 0 and 1) and very close embedding between others (for example, 1 and 9). Note that the classes obtained by \ref{eqn_adis3} are shown up to permutations and may not reflect the majority label in the different clusters. Further note that the data presented here corresponds to a single repetition out of the 10 repetitions used to compute the mean ARI (with standard deviation) in the main paper and this appendix.
In Figure~\ref{app: fig: MNIST AddS 2}, we see that, for $|\mathcal{T}|=n(\ln n)^2$, the learned partition is not very representative of the original labels.
Figure~\ref{app: fig: MNIST AddS 3} shows that, when the number of comparisons increases to  $|\mathcal{T}|=n(\ln n)^3$, the recovery ability of \ref{eqn_adis3} is greatly improved. However, the obtained partitions are not entirely satisfactory.  
Finally, Figure~\ref{app: fig: MNIST AddS 4} shows that, when the number of comparisons further increases to $|\mathcal{T}|=n(\ln n)^4$, the clustering obtained is close to the true labeling and most clusters are correctly identified.

\textbf{Clustering using SPUR.}
In this second set of experiments, we extend our observations from the previous paragraph to the labeling obtained by \ref{eqn_adis3} using SPUR. One can note that SPUR always underestimates the number of clusters.
Hence, in Figure~\ref{app: fig: MNIST SPUR AddS 3}, with $|\mathcal{T}|=n(\ln n)^3$, the number of predicted clusters is $k=6$ while, in Figure~\ref{app: fig: MNIST SPUR AddS 4}, with $|\mathcal{T}|=n(\ln n)^4$, the number of predicted clusters is $k=8$.
This explain the slightly worse behaviour of SPUR compared to known $k$ in Figure~\ref{fig: mnist 10} in the main paper. Nevertheless, the difference in average ARI is not so significant when $|\mathcal{T}|=n(\ln n)^4$, suggesting that $8$ clusters is, in fact, a good estimate of the number of clusters that can reliably be distinguished by the different methods.

\begin{figure}
    \centering
\begin{minipage}{\textwidth}
\centering
\centering
\begin{tabular}{*{1}{m{\textwidth}}}
\toprule\\
\begin{center}Class A with 36 images (\textit{Sweet?})\end{center}\\
\begin{center}\includegraphics[width=\textwidth]{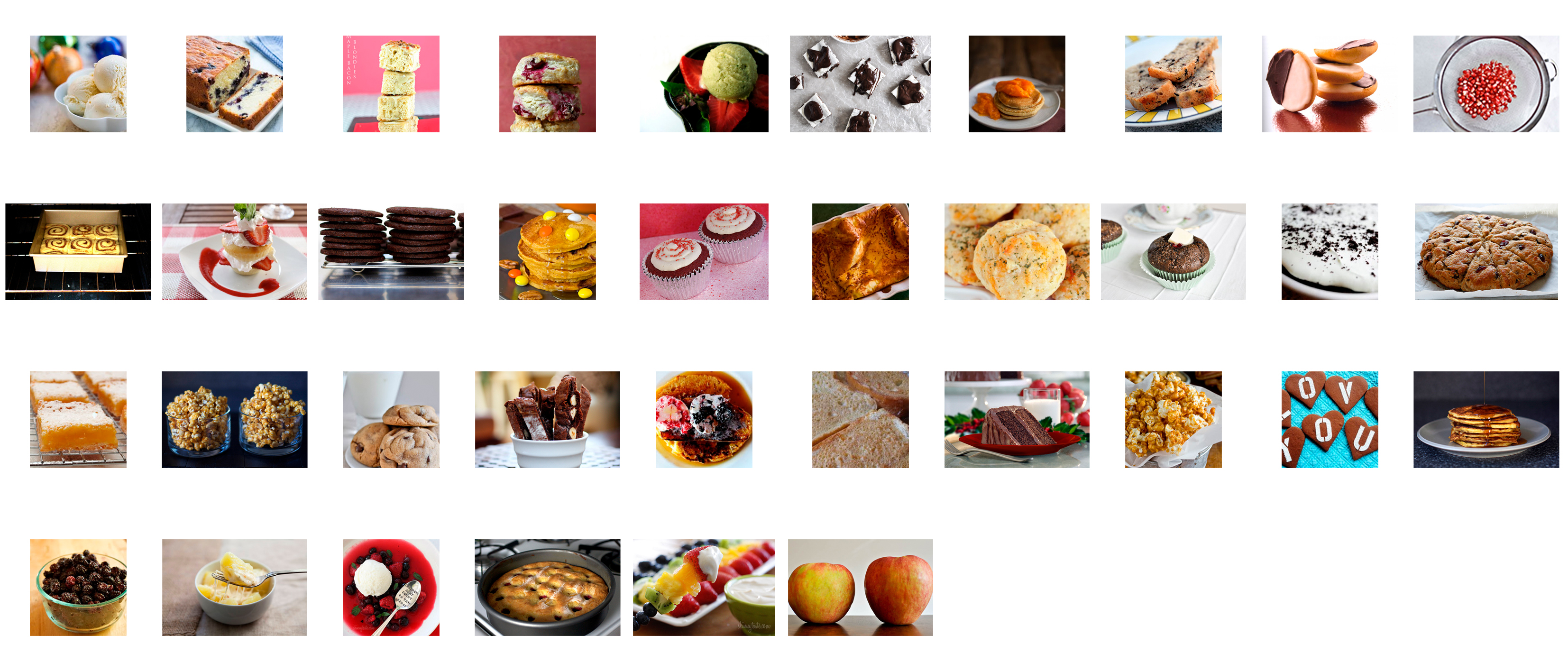}\end{center}\\
\midrule
\\
\begin{center}Class B with 64 images (\textit{Savoury?})\end{center}\\
\begin{center}\includegraphics[width=\textwidth]{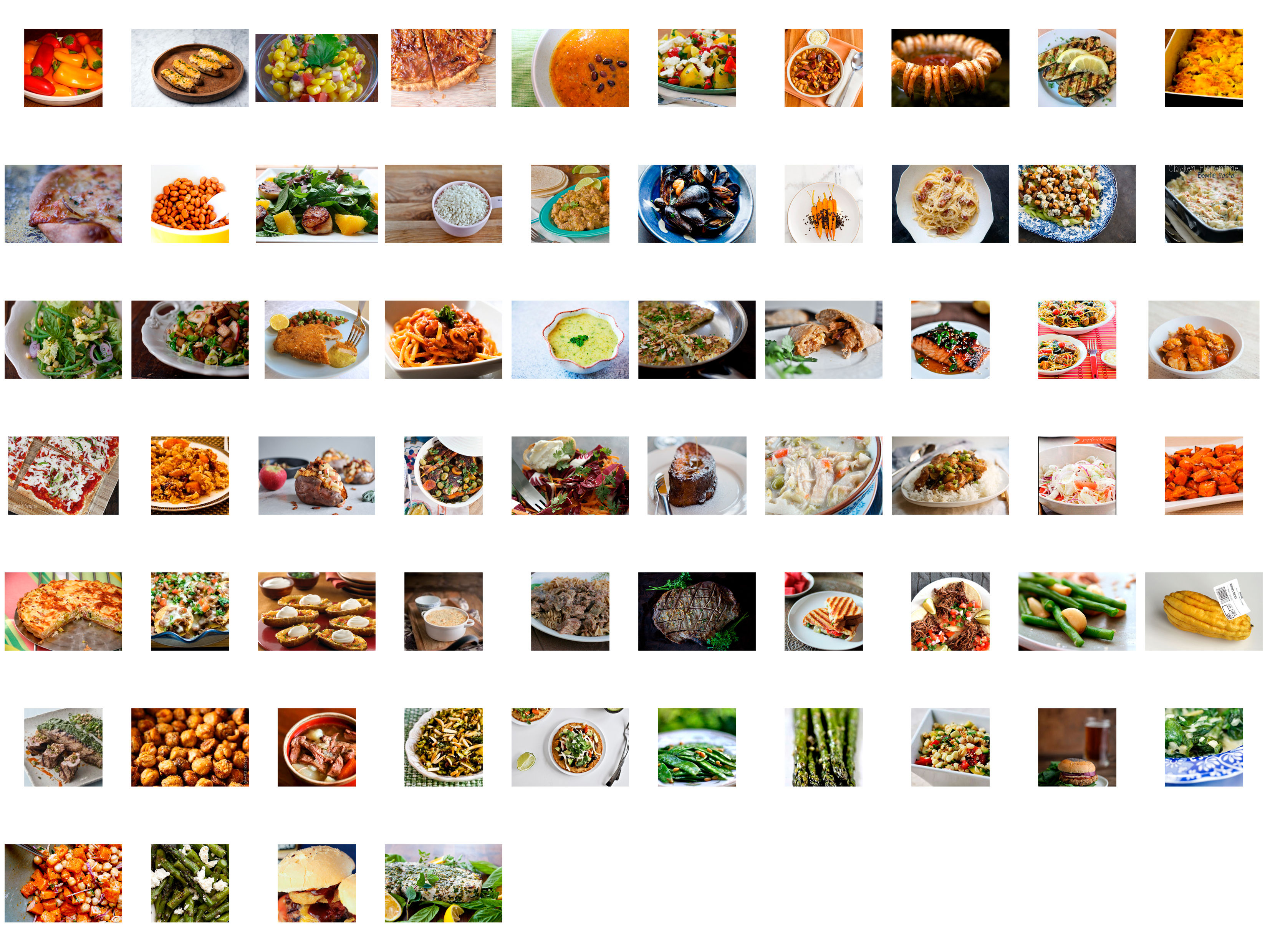}\end{center}\\
\hline
\end{tabular}
\end{minipage}
\caption{Clusters obtained by \ref{eqn_adis3} on the food dataset. It seems that the Sweet foods are separated from the Savoury ones.}
    \label{app: tab: food samples}
\end{figure}

\begin{figure*}[ht]
     \centering
     \begin{subfigure}[b]{0.49\textwidth}
         \centering
        \includegraphics[width=\textwidth]{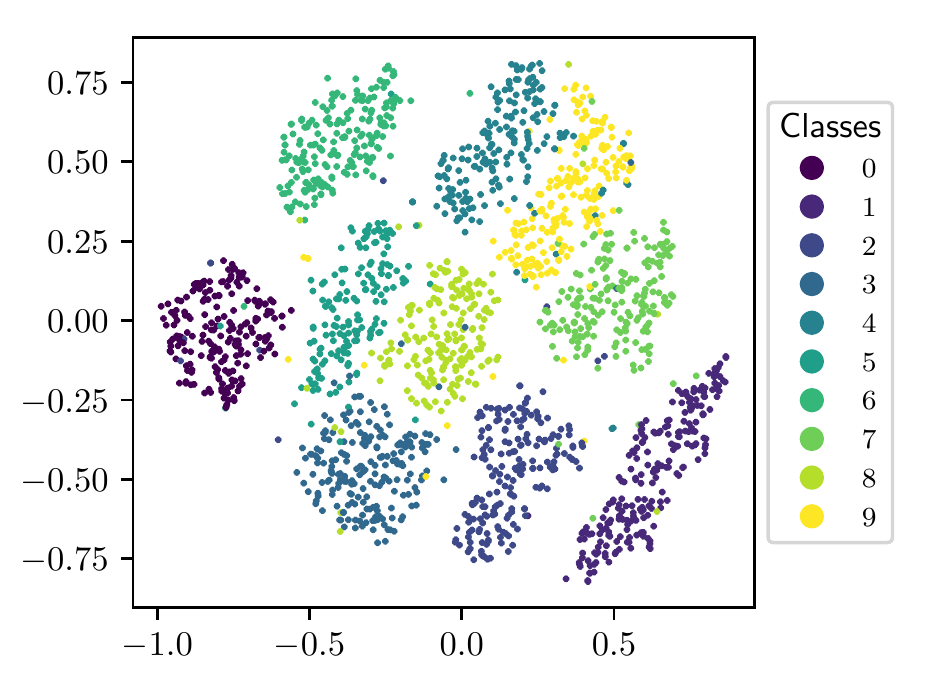}
         \caption{MNIST embedding with true labels}
         \label{app: fig: MNIST true labels}
     \end{subfigure}
     \hfill
     \begin{subfigure}[b]{0.49\textwidth}
         \centering
        \includegraphics[width=\textwidth]{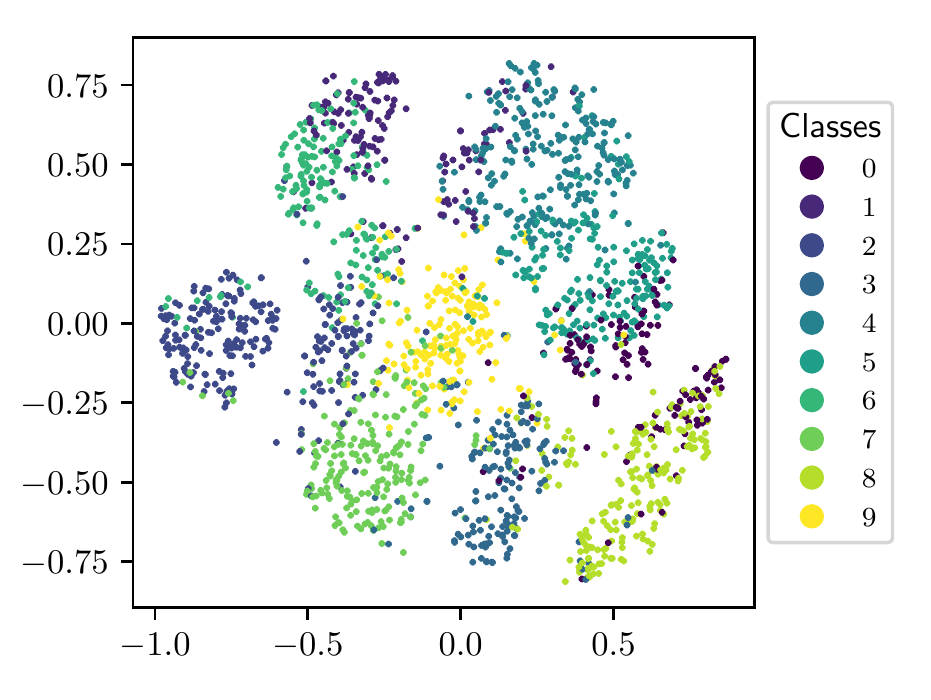}
         \caption{\ref{eqn_adis3} $k=10$, $|\mathcal{T}|=n(\ln n)^2$}
        \label{app: fig: MNIST AddS 4}
     \end{subfigure}
     
     \begin{subfigure}[b]{0.49\textwidth}
         \centering
        \includegraphics[width=\textwidth]{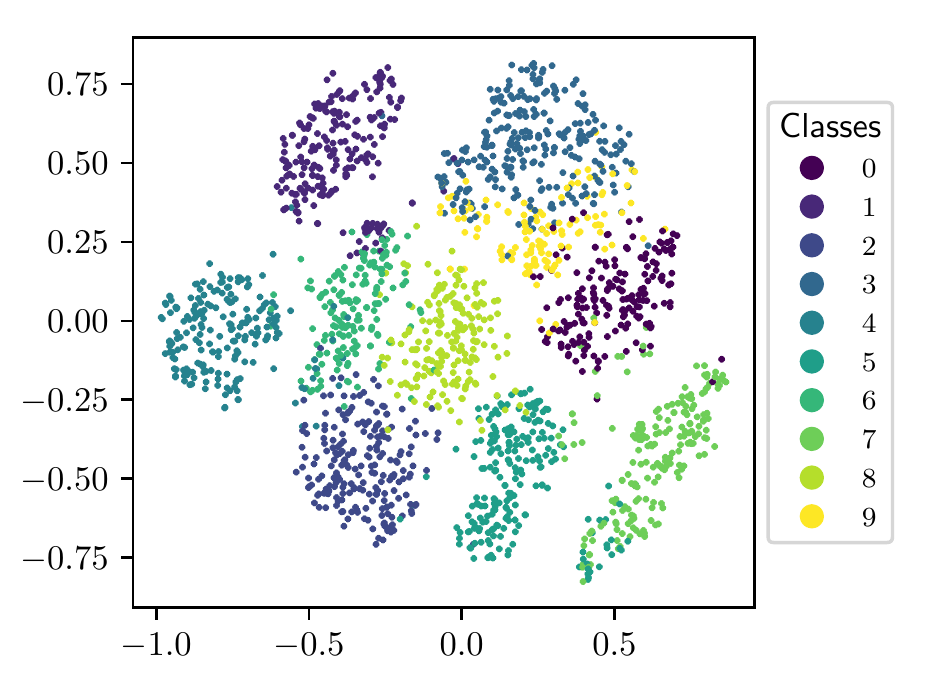}
         \caption{\ref{eqn_adis3} $k=10$, $|\mathcal{T}|=n(\ln n)^3$}
        \label{app: fig: MNIST AddS 3}
     \end{subfigure}
     \hfill
     \begin{subfigure}[b]{0.49\textwidth}
         \centering
        \includegraphics[width=\textwidth]{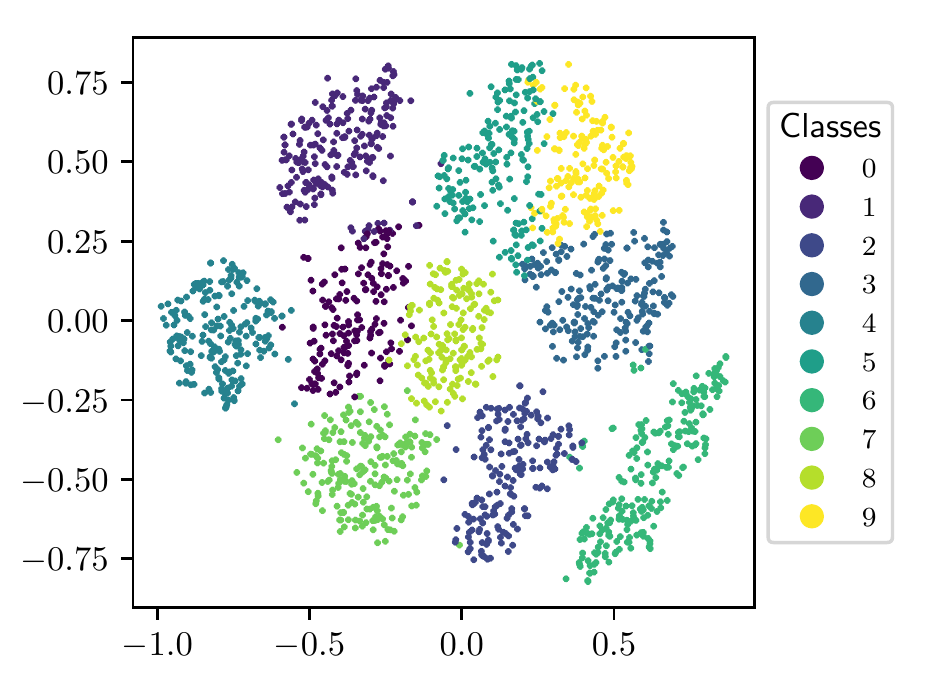}
         \caption{\ref{eqn_adis3} $k=10$, $|\mathcal{T}|=n(\ln n)^4$}
        \label{app: fig: MNIST AddS 2}
     \end{subfigure}
        \caption{t-SNE embedding of 2000 MNIST samples with \eqref{app: fig: MNIST true labels} true labeling and \eqref{app: fig: MNIST AddS 2}--\eqref{app: fig: MNIST AddS 4} clusters obtained by \ref{eqn_adis3} with known $k=10$ and varying number of observations. The classes are given up to permutations and may not reflect the majority label in each cluster.}
        \label{app: fig: scatter MNIST}
\end{figure*}

\begin{figure*}[ht]
     \centering
     \begin{subfigure}[b]{0.49\textwidth}
         \centering
        \includegraphics[width=\textwidth]{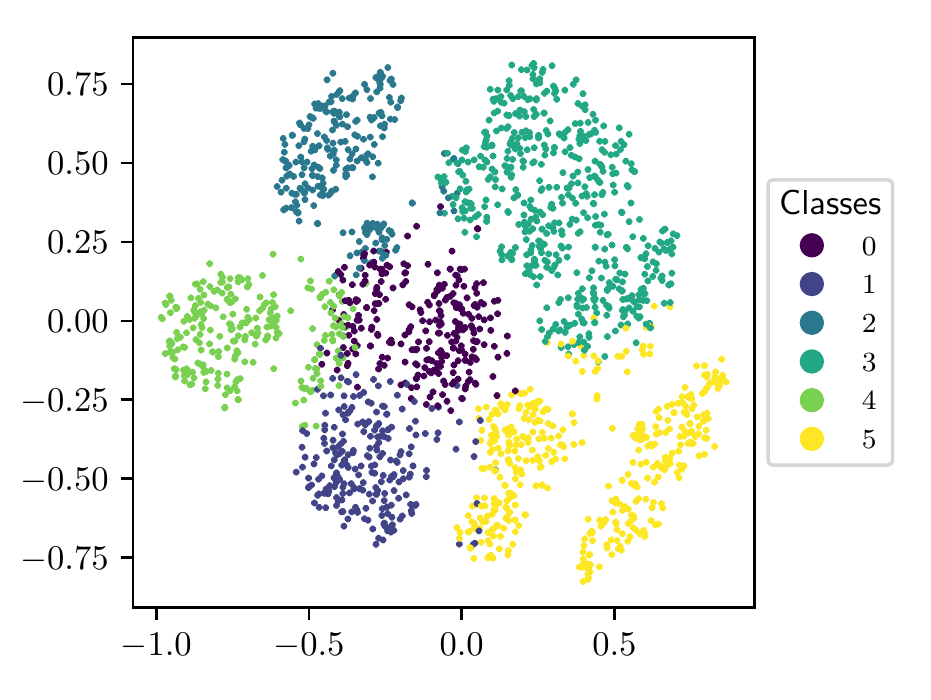}
         \caption{\ref{eqn_adis3} SPUR, $|\mathcal{T}|=n(\ln n)^3$}
         \label{app: fig: MNIST SPUR AddS 3}
     \end{subfigure}
     \hfill
     \begin{subfigure}[b]{0.49\textwidth}
         \centering
        \includegraphics[width=\textwidth]{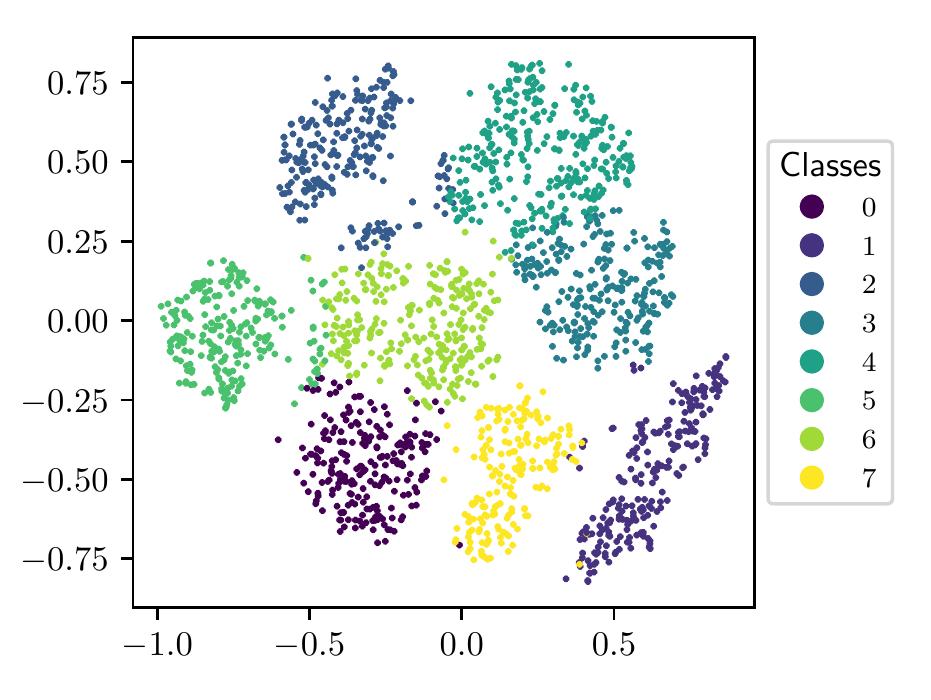}
         \caption{\ref{eqn_adis3} SPUR, $|\mathcal{T}|=n(\ln n)^4$}
        \label{app: fig: MNIST SPUR AddS 4}
     \end{subfigure}
     
        \caption{t-SNE embedding of 2000 MNIST samples with the clusters predicted by \ref{eqn_adis3} using SPUR and varying number of comparisons. The classes are given up to permutations and may not reflect the majority label in each cluster.}
        \label{app: fig: scatter MNIST SPUR}
\end{figure*}

\end{document}